\documentclass[12pt]{article}

\usepackage{amsmath, amssymb, amsthm}
\usepackage{geometry}
\usepackage{fullpage}
\usepackage{graphicx}
\usepackage{booktabs}
\usepackage{multirow}
\usepackage{enumitem}
\usepackage{algorithm}
\usepackage{algpseudocode}
\usepackage{natbib}
\usepackage{hyperref}
\usepackage{bbm}
\usepackage{xcolor}
\usepackage{microtype}
\usepackage{placeins}
\hypersetup{
    hypertexnames=false,
    colorlinks=true,
    citecolor=blue,
    linkcolor=red,
    urlcolor=blue
}

\newtheorem{assumption}{Assumption}[section]
\newtheorem{theorem}{Theorem}[section]
\newtheorem{lemma}{Lemma}[section]
\newtheorem{remark}{Remark}[section]

\newtheorem{proposition}{Proposition}[section]
\newtheorem{example}{Example}[section]

\usepackage{amsmath,amsfonts,bm}

\def\1{\bm{1}}

\DeclareMathAlphabet{\mathsfit}{\encodingdefault}{\sfdefault}{m}{sl}
\SetMathAlphabet{\mathsfit}{bold}{\encodingdefault}{\sfdefault}{bx}{n}

\def\gA{{\mathcal{A}}}
\def\gB{{\mathcal{B}}}
\def\gC{{\mathcal{C}}}
\def\gD{{\mathcal{D}}}

\def\gH{{\mathcal{H}}}
\def\gI{{\mathcal{I}}}
\def\gJ{{\mathcal{J}}}

\def\gL{{\mathcal{L}}}
\def\gM{{\mathcal{M}}}

\def\gP{{\mathcal{P}}}
\def\gQ{{\mathcal{Q}}}
\def\gR{{\mathcal{R}}}

\def\gX{{\mathcal{X}}}
\def\gY{{\mathcal{Y}}}
\def\gZ{{\mathcal{Z}}}

\def\sB{{\mathbb{B}}}

\def\sP{{\mathbb{P}}}

\def\sR{{\mathbb{R}}}

\newcommand{\E}{\mathop{\mathbb{E}}}

\newcommand{\R}{\mathbb{R}}

\DeclareMathOperator*{\argmin}{arg\,min}

\newcommand{\LRs}[1]{\left(#1\right)}
\newcommand{\LRm}[1]{\left[#1\right]}
\newcommand{\LRl}[1]{\left\{#1\right\}}

\def\one{{\mathbbm{1}}}

\usepackage{url}

\title{Minimum Volume Optimal Transport for Multivariate Conformal Prediction}
\author{Chuchen Zhang\textsuperscript{1}, Hajie Ren\textsuperscript{2}, Zhaojun Wang\textsuperscript{1}, Yajie Bao\textsuperscript{3} and Changliang Zou\textsuperscript{1}\\[0.5em]
\small \textsuperscript{1}School of Statistics and Data Science, Nankai University\\
\small \textsuperscript{2}School of Mathematical Sciences,  Shanghai Jiao Tong University\\
\small \textsuperscript{3}School of Management, Fudan University}
\date{\today}

\begin{document}

\maketitle

\begin{abstract}
Optimal transport (OT) enables multivariate conformal prediction (CP) to construct flexible prediction sets that reflect the underlying complex distributional geometry while retaining distribution-free coverage guarantees. Existing OT-based methods typically rely on a fixed Euclidean transport cost, which can be suboptimal in terms of the volume of prediction sets. We propose Minimum-Volume OTCP, a framework that learns a Mahalanobis ground cost to directly minimize prediction-region volume over a class of positive-definite metrics. Using a metric Brenier representation, we derive a volume-based objective that depends explicitly on the  Mahalanobis cost matrix and optimize it jointly with the transport map. We establish finite-sample marginal coverage of the output prediction sets and optimality of volume metrics. We further develop an individualized extension that learns covariate-dependent transport metrics, adapting the shape and orientation of prediction sets to heterogeneous conditional distributions. For this extension, we also show the asymptotic conditional coverage of our method. Simulations and real-data experiments demonstrate substantial reductions in prediction-set volume.
\end{abstract}

\section{Introduction}
\label{sec:introduction}
Conformal prediction (CP) constructs finite-sample, distribution-free prediction sets under exchangeability \citep{vovk2005algorithmic}. Suppose we have the labeled data $\{(X_i,Y_i)\}_{i=1}^n \subset \gX \times \gY$ and a new covariate $X_{n+1}\in \gX$, whose label $Y_{n+1} \in \gY$ is unobserved. The goal is to construct a prediction set $\gC(X_{n+1})$ based on the labeled data, satisfying the following coverage guarantee
\begin{align*}
    \sP\{Y_{n+1}\in \gC(X_{n+1})\}
    \ge
    1-\alpha,
\end{align*}
where $\alpha \in (0,1)$ is the nominal coverage level.

A central challenge in multivariate CP is to construct prediction sets that adapt to the geometry of the response distribution. Structured conformity scores offer simple and interpretable constructions, but their geometric flexibility is limited by the chosen score class \citep{johnstone2021conformal,messoudi2022ellipsoidal,braun2025multivariate}. Capturing more complex multivariate uncertainty therefore calls for flexible distributional representations \citep{feldman2023calibrated,Fang2025contra,dheur2025unified}. Optimal transport (OT) offers such a representation by connecting the response distribution to a simple reference distribution through transport maps. These maps provide a basis for defining multivariate ranks and quantiles \citep{chernozhukov2017monge,hallin2021distribution}, allowing prediction sets to be constructed according to the geometry of the response distribution.

Generally, conformal prediction for a multivariate response is often equipped with a multivariate score function $s(\cdot,\cdot)$. Considering the example $\gY\subseteq \sR^d$, given a pretrained predictor $\widehat\mu(\cdot):\gX \to \sR^d$, we use the multivariate residual score $s(x,y) = y - \widehat\mu(x)$. Let $\gP$ denote the distribution of the multivariate score $s(X_i,Y_i)$, and let $\gQ$ be a radial reference distribution, e.g., the uniform distribution on the unit ball of $\sR^d$. Existing OT-based conformal methods typically transport the score distribution $\gP$ to $\gQ$ under the squared Euclidean ground cost, yielding the transport map
\begin{equation}
    T_I
    \in
    \argmin_{T_\#\gP=\gQ}
    \int \|s-T(s)\|^2\,d\gP(s).
    \label{eq:intro_euclidean_transport}
\end{equation}
The OT conformal prediction set takes the form $\gC_{I}(X_{n+1}) = \{y\in \gY: \|T_{I}(s(X_{n+1},y))\| \le \hat{q}_{I}\}$, where $\hat{q}_I$ is the empirical quantile of $\{\|T_I(s(X_i,Y_i))\|\}_{i=1}^n$.
However, although OT can produce highly flexible prediction sets, the transport geometry induced by the squared Euclidean ground cost need not be well suited to achieving a small-volume prediction set.

To address this limitation, we introduce \emph{Minimum-Volume Optimal Transport Conformal Prediction} (MV-OTCP), which learns the OT ground metric to minimize prediction-set volume. 
For a positive-definite matrix $M$, we replace the squared Euclidean ground cost in \eqref{eq:intro_euclidean_transport} with $c_M(s,u)=(s-u)^\top M(s-u)$, obtaining a transport map $T_M$. Applying the same conformal calibration with this map yields the corresponding prediction set, denoted by $\gC_M(X_{n+1})$.
Our target metric minimizes the population volume objective:
\[
M^\star \in \argmin_{M} \E\left[\mathsf{Leb}\{\gC_M(X_{n+1})\}\right],
\]
where $\mathsf{Leb}$ denotes the Lebesgue measure.
Because $M$ enters the transport cost, learning $M$ changes the transport map itself and hence the geometry of the resulting prediction region.
Our contributions are threefold:
\begin{itemize}[leftmargin=2em,itemsep=2pt,topsep=2pt]

\item We introduce MV-OTCP for learning the ground metric of OT-based conformal prediction sets. To account for heterogeneous conditional distributions, we develop an individualized extension, IMV-OTCP, which learns a covariate-dependent metric so that prediction-set shape and orientation can adapt locally to the covariate. 

\item Using a metric Brenier representation, we derive a Jacobian-based expression for prediction-set volume, yielding a tractable objective for metric learning. We establish finite-sample marginal coverage, consistency of the learned metric, and asymptotic agreement between the learned and oracle prediction regions. Under additional high-level conditions, we also establish asymptotic conditional coverage for IMV-OTCP.

\item Experiments on synthetic and real data demonstrate improved prediction-set volume efficiency relative to existing conformal baselines while maintaining the target coverage.

\end{itemize}

\section{Related work}
\label{sec:related}
We focus here on the two directions most closely related to our work, and refer readers to \citet{vovk2005algorithmic,angelopoulos2024theoretical} for broader reviews of conformal prediction.

\textbf{Multivariate conformal prediction.}
A growing literature develops conformal prediction methods for multivariate responses using different scores and representation-based constructions \citep{johnstone2021conformal,messoudi2022ellipsoidal,izbicki2022cd,feldman2023calibrated,Fang2025contra,dheur2025unified,braun2025multivariate}.
Within the OTCP framework, SOTCP \citep{thurin2025optimal,klein2025multivariate} uses a transport map learned from an independent sample within split conformal calibration, whereas FOTCP \citep{ndiaye2025beyond} incorporates each candidate test score into the transport assignment to obtain a full-conformal construction. Conditional neural OT \citep{kondratyev2026neural} learns covariate-dependent vector-rank maps whose radial norms yield split-conformal pullback regions. Our work retains this OT-based ranking principle but treats the ground geometry as a learnable component rather than fixing the squared Euclidean cost.

\textbf{Minimum-volume conformal geometry.}
Conformal efficiency has been improved through score selection or aggregation and through direct adaptation of prediction-set shape \citep{yang2024selection,liang2024conformal,xu2026aggregating,bao2026shape}. \citet{kiyani2024length} study length-optimal prediction intervals in the univariate setting through a minimax formulation. Most directly related to our multivariate minimum-volume formulation, the MVCS method of \citet{braun2025minimum} explicitly minimizes volume within a structured norm-based parametric class. Its multi-norm extension provides additional flexibility by allowing different norm-based regions over a prespecified partition. However, the partition must be specified in advance, and the resulting construction remains restricted to norm-based geometries within each partition element. Our prediction sets are inverse images of a radial reference ball under a generally nonlinear optimal transport map, allowing asymmetric and nonconvex geometries without requiring a prespecified partition or restricting the regions to norm-based shapes. Our extension to more general downstream task losses is also related to work that directly optimizes decision losses of conformal prediction sets \citep{yeh2024end,bao2025optimal}.



\section{Minimum-volume OTCP}
\label{sec:tacp}


\subsection{Recap of OT-based conformal prediction}
\label{subsec:otcp_recap}

Let $\gD_{\rm cal}=\{(X_i,Y_i)\}_{i=1}^n$ be a calibration sample independent of the pretrained predictor, and let $U_1,\ldots,U_{n+1}$ be i.i.d. draws from reference distribution $\gQ$. For a trial response $y$ at $X_{n+1}$, write $S_i^y=s(X_i,Y_i)$ for $i\in[n]$ and $S_{n+1}^y=s(X_{n+1},y)$. At the empirical level, the optimal transport plan \eqref{eq:intro_euclidean_transport} between these two discrete measures is a classic assignment problem. FOTCP then solves
\begin{align*}
    \widehat\sigma^y
    \in
    \argmin_{\sigma\in\mathfrak S_{n+1}}
    \sum_{i=1}^{n+1}\|S_i^y-U_{\sigma(i)}\|^2,
\end{align*}
where $\mathfrak S_{n+1}$ is the set of all permutations of $[n+1]$.  Let $\widehat q_{1-\alpha}(\{a_i\}_{i=1}^n)$ be the $(1-\alpha)$-quantile of the empirical distribution of $\{a_i\}_{i=1}^n$. Then the FOTCP prediction set is 
\begin{align*}
    \widehat\gC^{\rm FOTCP}(X_{n+1})=\LRl{y\in\gY:\|U_{\widehat\sigma^y(n+1)}\|\le \widehat q_{1-\alpha}\LRs{\{\|U_i\|\}_{i=1}^{n+1}}}.
\end{align*}
Under exchangeability of $\{(X_{i},Y_{i})\}_{i=1}^{n+1}$, we have $\sP\{Y_{n+1}\in \widehat\gC^{\rm FOTCP}(X_{n+1})\} \ge 1-\alpha$.

\subsection{Population volume metric of OTCP}
\label{subsec:volume_objective}

The standard squared Euclidean transport cost treats all directions in the score space equally, which need not be aligned with the criterion used to evaluate the resulting conformal set.
We therefore introduce a volume-adaptive geometry into the transport problem. 
Specifically, we replace the Euclidean cost by the Mahalanobis-type quadratic cost
\begin{align}
\label{eq:metric_cost}
  c_M(s,u)=(s-u)^\top M(s-u),\qquad M=M^\top\succ0,
\end{align}
The matrix $M$ allows the transport geometry to reweight and rotate different score directions according to the downstream objective, while preserving the quadratic structure of the transport problem.
For each fixed $M$, let $T_M$ denote the population optimal transport map from the score distribution $\gP$ to the reference distribution $\gQ$ under the cost $c_M$ , namely,
\begin{align}
\label{eq:metric_monge_problem}
T_M\in\argmin_{T_\#\gP=\gQ}\int c_M\{s,T(s)\}\,d\gP(s),
\end{align}
Let $F_Q(t)=\gQ\{u:\|u\|\le t\}$, $q_\star=F_Q^{-1}(1-\alpha)$ and consider residual score $s(x,y) = y - \widehat\mu(x)$.
The {population oracle prediction set} can be written as $\gC_M(x)=\widehat\mu(x)+T_M^{-1}(\sB_{1-\alpha})$, where $\sB_{1-\alpha}=\{u:\|u\|\le q_\star\}$. Since translation does not change Lebesgue measure, the volume of $\gC_M(x)$ is determined entirely by the geometry induced by $M$. Thus we define the population volume target by
\begin{equation}
 M^\star\in\argmin_{M\in\gM_0}\gJ(M),\qquad
 \gJ(M)=\mathsf{Leb}\{\gC_M(x)\}=\mathsf{Leb}\{T_M^{-1}(\sB_{1-\alpha})\},\nonumber
\end{equation}
where $\gM_0=
    \{
        M=M^\top\succ0:
        \lambda_{\min}(M)\ge\underline\lambda,\ 
        \lambda_{\max}(M)\le\overline\lambda,\
        \det(M)=1
    \}$.
The constraint $\det(M)=1$ resolves the scale non-identifiability of $M$, while the eigenvalue bounds ensure that the resulting parameter space is compact.
The dependence of $\gJ(M)$ on $M$ is implicit through the transport map. The following metric Brenier representation makes the geometry explicit.

\begin{lemma}[Metric Brenier representation]
\label{lem:metric_brenier}
Suppose $\gP$ and $\gQ$ have finite second moments and $\gP$ is absolutely continuous. Then \eqref{eq:metric_monge_problem} has a unique solution up to $\gP$-null sets, with $M^{1/2}T_M(s)=\nabla\psi(M^{1/2}s)$ for a convex $\psi$. If $\gQ$ is also absolutely continuous, the reverse map $G_M$ exists, is the almost-everywhere inverse of $T_M$, and has the representation
$G_M(u)=M^{-1/2}\nabla g_M(M^{1/2}u)$
for a convex potential $g_M$.
\end{lemma}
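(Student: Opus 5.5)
The plan is to reduce the Mahalanobis problem \eqref{eq:metric_monge_problem} to a standard squared-Euclidean problem by the linear change of variables $\phi(z)=M^{1/2}z$, and then invoke Brenier's theorem. Set $\tilde\gP=\phi_\#\gP$ and $\tilde\gQ=\phi_\#\gQ$. Since $M^{1/2}$ is symmetric positive definite, $\phi$ is a linear bijection. Hence $\tilde\gP$ is absolutely continuous whenever $\gP$ is, with density $\det(M)^{-1/2}p(M^{-1/2}\cdot)$. Both push-forwards keep finite second moments, because $\|M^{1/2}z\|^2\le\lambda_{\max}(M)\|z\|^2$.

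The key identity is $c_M(s,u)=\|M^{1/2}s-M^{1/2}u\|^2$. Given any map $T$ with $T_\#\gP=\gQ$, define $\tilde T=\phi\circ T\circ\phi^{-1}$. Then $\tilde T_\#\tilde\gP=\tilde\gQ$ and
\begin{align*}
\int c_M\{s,T(s)\}\,d\gP(s)=\int\|\tilde s-\tilde T(\tilde s)\|^2\,d\tilde\gP(\tilde s).
\end{align*}
The correspondence $T\leftrightarrow\tilde T$ is a bijection between admissible maps of the two problems, and it preserves equality $\gP$-a.e., since $\phi$ maps $\gP$-null sets to $\tilde\gP$-null sets. So minimizers correspond to minimizers, and uniqueness up to null sets transfers in both directions.

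Brenier's theorem applied to $(\tilde\gP,\tilde\gQ)$ gives a unique minimizer $\tilde T=\nabla\psi$, $\tilde\gP$-a.e., with $\psi$ convex. Pulling back through $\phi$ yields $T_M=M^{-1/2}\nabla\psi(M^{1/2}\cdot)$, which is exactly $M^{1/2}T_M(s)=\nabla\psi(M^{1/2}s)$. One point needs care here. Brenier's theorem gives existence and uniqueness of the optimal plan, and that plan is induced by a map. To get uniqueness of the Monge map, I will argue that any optimal Monge map induces an optimal plan, because the Kantorovich and Monge values coincide when the source is absolutely continuous. The plan is unique, so the map is determined a.e.

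For the reverse map, assume $\gQ$ is absolutely continuous, so $\tilde\gQ$ is too. Brenier's theorem in the reverse direction gives $\tilde G=\nabla g_M$, $\tilde\gQ$-a.e., with $g_M$ convex. McCann's/Brenier's duality lets me take $g_M=\psi^*$, the Legendre conjugate. The standard fact $\nabla\psi^*\circ\nabla\psi=\mathrm{id}$ a.e. then gives $\tilde G\circ\tilde T=\mathrm{id}$ $\tilde\gP$-a.e. and $\tilde T\circ\tilde G=\mathrm{id}$ $\tilde\gQ$-a.e. This holds because $\nabla\psi(x)=y$ iff $x\in\partial\psi^*(y)$, and both potentials are differentiable almost everywhere with respect to the respective measures. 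Conjugating back gives $G_M=M^{-1/2}\nabla g_M(M^{1/2}\cdot)$ as the a.e. inverse of $T_M$.

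I expect the main obstacle to be this a.e.-inverse step. It needs differentiability of $\psi$ on a set of full $\tilde\gP$-measure and of $\psi^*$ on a set of full $\tilde\gQ$-measure, which follows from absolute continuity via Rademacher/Alexandrov. It also needs the push-forward relation $\nabla\psi_\#\tilde\gP=\tilde\gQ$, so that the exceptional null sets match up under composition. I would handle this by citing the standard result (Villani, Topics in OT, Thm 2.12) rather than reproving it.
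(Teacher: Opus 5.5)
Your proof is correct. The forward part is exactly the paper's argument: the change of variables $\tilde s=M^{1/2}s$, the identity $c_M(s,u)=\|M^{1/2}s-M^{1/2}u\|^2$, the bijection $T\leftrightarrow\tilde T$ between admissible maps, and Brenier's theorem. You are in fact more careful than the paper about why the Monge \emph{map}, and not just the plan, is unique.

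The two arguments differ in the inverse step. The paper never identifies the reverse potential. It uses symmetry of the quadratic cost to note that the reversed coupling $(\tilde T,\mathrm{Id})_\#\tilde\gP$ is optimal from $\tilde\gQ$ to $\tilde\gP$. Uniqueness of that optimal coupling, which holds because $\tilde\gQ$ is absolutely continuous, then forces $(\tilde T,\mathrm{Id})_\#\tilde\gP=(\mathrm{Id},\tilde G)_\#\tilde\gQ$. Reading off the graph supports of this one equality gives both $\tilde G\circ\tilde T=\mathrm{Id}$ $\tilde\gP$-a.e. and $\tilde T\circ\tilde G=\mathrm{Id}$ $\tilde\gQ$-a.e.

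You instead take $g_M=\psi^*$ and invoke the convex-analytic inversion $\nabla\psi^*\circ\nabla\psi=\mathrm{id}$ (Villani, Thm.~2.12(iv)). Here absolute continuity of $\tilde\gQ$ enters through differentiability of $\psi^*$ $\tilde\gQ$-a.e. rather than through uniqueness of the reverse coupling.

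Your route buys the extra structural fact that the reverse potential is the Legendre conjugate of the forward one. The cost is the subdifferential and null-set bookkeeping you flagged, which you delegate to the citation. The paper's route is purely measure-theoretic and needs only uniqueness of optimal couplings in both directions, so the a.e. inverse relations come for free.
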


Under the condition of Lemma~\ref{lem:metric_brenier}, we can rewrite $\gC_M(x)=\widehat\mu(x)+G_M(\sB_{1-\alpha})$. Assuming that $g_M$ is twice differentiable, Lemma~\ref{lem:metric_brenier} and a change-of-variables argument yield
\begin{align}
\label{eq:task_loss_hessian_form}
    \gJ(M)
    =
    \E_{U}\LRm{\{f_Q(U)\}^{-1}\one\{U\in\sB_{1-\alpha}\}\det\{\nabla^2g_M(M^{\frac{1}{2}}U)\}}.
\end{align}
Here, $f_Q$ denotes the density of the reference distribution. 
This representation is useful because it converts the volume loss into an optimization problem over a metric and a convex transport potential. The coverage level is handled by integrating over the fixed reference region $\sB_{1-\alpha}$, while the geometry of the prediction set enters through the Hessian determinant.
In computation, the outer expectation over $U$ can be approximated using independent reference samples, while the transport map can be learned using any suitable parametric model, including neural parameterizations of Brenier maps \citep{makkuva2020optimal,chaudhari2025gradnetot}.

\begin{example}[Explicit volume-optimal metric]
\label{ex:elliptical_volume}
Let $\gQ$ be a radial reference distribution on the unit ball, represented as $U=R\Theta$, where $R\sim\operatorname{Unif}(0,1)$ and $\Theta$ is uniformly distributed on the unit sphere. Let $\gP$ have density
$f_P(s)=|\Sigma|^{-1/2}g(\|\Sigma^{-1/2}(s-m)\|)$, where $g$ is nonincreasing. Writing $r(s)=\|\Sigma^{-1/2}(s-m)\|$ and $F_\rho$ for the distribution of $r(S)$, the map
$T_{M^\star}(s)=F_\rho\{r(s)\}\frac{\Sigma^{-1/2}(s-m)}{r(s)}$
induces the prediction set $T_{M^\star}^{-1}(\sB_{1-\alpha}) =\{s:r(s)\le F_\rho^{-1}(1-\alpha)\}$. This set is the central density-superlevel ellipsoid and is therefore the minimum-volume measurable set with probability $1-\alpha$. Moreover, by Lemma~\ref{lem:metric_brenier}, $T_{M^\star}$ is generated by the metric $M_{\rm vol}^\star\propto \Sigma^{-1/2}$. See Figure~\ref{fig:metric_ellipsoid_polyhedral} for an illustration and Appendix~\ref{app:elliptical_derivation} for derivation details.
\end{example}

\begin{figure}[ht]
    \centering
    \includegraphics[width=\linewidth]{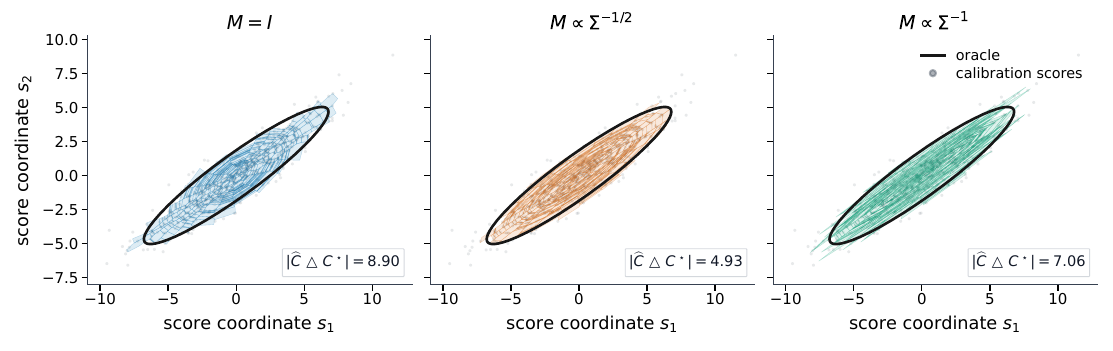}
    \caption{Metric choices on an elliptical score distribution. The accepted assignment regions are finite unions of polyhedral cells (Appendix~\ref{subsec:polyhedral_represent}) under $M=I$, $M\propto\Sigma^{-1/2}$, and $M\propto\Sigma^{-1}$. Among the three choices, $M\propto\Sigma^{-1/2}$ yields the smallest symmetric difference from the oracle region.}
    \label{fig:metric_ellipsoid_polyhedral}
\end{figure}

\paragraph{General task-aware OTCP.}
The same transport representation also permits a downstream task to determine which parts of the response space are costly to include. 
We consider an averaged robust optimization problem. For a decision $z\in\gZ$ and an integrable loss density $\ell(z,y)$, define the decision loss over a prediction set $\gC(x)$ as $\gL(z,\gC(x))=\int_{\gC(x)}\ell(z,y)\,dy$. This formulation encompasses the optimization problems considered in \citet{bertsimas2023robust,cortes-gomez2025utilitydirected}.
The corresponding task-aware target for metric learning is
\begin{equation}
 M_{\rm task}^\star\in\argmin_{M\in\gM_0}\gJ_{\rm task}(M),\qquad
 \gJ_{\rm task}(M)=\E_X\left[\gL\{z_M^\star(X),\gC_M(X)\}\right],\nonumber
\end{equation}
where $z_M^\star(x)\in\argmin_{z\in\gZ}\gL(z,\gC_M(x))$. In line \eqref{eq:task_loss_hessian_form}, the general task target can be rewritten as
\begin{equation}
\label{eq:general_task_hessian_form}
 \gJ_{\rm task}(M)=\E_{X,U}\left[
 \ell\{z_M^\star(X),\widehat\mu(X)+G_M(U)\}\cdot \{f_Q(U)\}^{-1}\one\{U\in\sB_{1-\alpha}\}\det\{\nabla G_M(U)\}\right].
\end{equation}
Volume is recovered as the special case $\ell\equiv1$, whereas a general task loss additionally accounts for where the accepted volume is placed in the response space; several examples are given in Appendix~\ref{sec:general_task_loss}.
The resulting metric is then used to construct the final conformal prediction set. Further implementation details are provided in Appendix~\ref{sec:general_task_loss}.

\subsection{Metric estimation and conformal calibration}
\label{subsec:empirical_construction}

Let $\gD_{\rm est}=\{(X_i,Y_i)\}_{i=1}^N$ be an independent estimation sample, and let $U_1,\ldots,U_N\sim\gQ$ be independent reference draws. 
For a fixed $M\in\gM_0$, let $\Psi_\theta$ be a parameterized approximation of the Brenier map $\nabla g_M(\cdot)$.
Then the corresponding empirical version of \eqref{eq:task_loss_hessian_form} is
\begin{align}
\label{eq:empirical_volume_objective}
    \widehat \gJ(M,\theta)
    =
    \frac{1}{N}\sum_{i=1}^N \{f_Q(U_i)\}^{-1}\one\{U_i\in\sB_{1-\alpha}\}
        \det\{\nabla\Psi_\theta(M^{1/2}U_i)+\delta_N I_{{d}}\},
\end{align}
where $\delta_N>0$ is a small numerical stabilizer. The empirical minimum-volume metric-learning problem is
\begin{align*}
    {\widehat M}\in\argmin_{M\in\gM_0}
    \widehat\gJ(M,\widehat\theta(M)),
\end{align*}
where $\widehat\theta(M)$ is the output of an OT-fitting subroutine at metric $M$. Since the fitted transport parameters $\widehat\theta(M)$ themselves depend on $M$, the gradient of the profiled objective must account for this dependence. Therefore we do so via implicit differentiation through the inner OT-fitting problem. Additional implementation details are provided in Appendix~\ref{subsec:tacp_implementation}.

To form the final conformal set, hold $\widehat M$ fixed and use a fresh calibration sample $\gD_{\rm cal}=\{(X_i,Y_i)\}_{i=1}^n$. For a trial response $y$ at a new covariate $x$, compute the assignment
\begin{align}
\label{eq:final_assignment}
    \widehat\sigma_{\widehat M}^{\,y}
    \in
    \argmin_{\sigma\in\mathfrak S_{n+1}}
    \sum_{i=1}^{n+1}
    (S_i^y-U_{\sigma(i)})^\top
    \widehat M
    (S_i^y-U_{\sigma(i)}).
\end{align}
The resulting Minimum Volume OT Conformal Prediction (MV-OTCP) set is
\begin{align*}
    {\widehat\gC}^{\rm MV}(x)
    =
    \left\{
        y\in\gY:
        \|U_{\widehat\sigma_{\widehat M}^{\,y}(n+1)}\|
        \le
        \widehat q_{1-\alpha}\LRs{\{\|U_i\|\}_{i=1}^{n+1}}
    \right\}.
\end{align*}

\begin{theorem}[Marginal coverage for MV-OTCP]
\label{thm:tacp_validity}
Assume that $\{(X_i,Y_i)\}_{i=1}^{n+1}$ are exchangeable, that $\widehat M$ is learned independently of the final calibration sample and test response, and that ties in \eqref{eq:final_assignment} are resolved permutation-equivariantly. Then $\sP\{Y_{n+1}\in {\widehat\gC}^{\rm MV}(X_{n+1})\}
    \ge
    1-\alpha .$
\end{theorem}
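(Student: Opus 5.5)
The plan is the standard full-conformal exchangeability argument, applied at the true test response $y=Y_{n+1}$. Throughout I condition on $\widehat M$. Because $\widehat M$ is learned from $\gD_{\rm est}$, which is independent of the calibration sample and the test point, conditioning on it leaves $\{(X_i,Y_i)\}_{i=1}^{n+1}$ exchangeable. The reference draws $U_1,\ldots,U_{n+1}$ are i.i.d.\ and independent of everything else. So it suffices to prove the coverage bound for a fixed positive-definite $M$ and then integrate over $\widehat M$.

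\textbf{Step 1: reduce to the rank of the test score.} Set $y=Y_{n+1}$ and $S_i=s(X_i,Y_i)$ for $i\in[n+1]$. Then $(S_1,\ldots,S_{n+1})$ is exchangeable. The event $Y_{n+1}\in\widehat\gC^{\rm MV}(X_{n+1})$ is exactly
\[
\|U_{\widehat\sigma(n+1)}\|\le \widehat q_{1-\alpha}\bigl(\{\|U_i\|\}_{i=1}^{n+1}\bigr),
\]
where $\widehat\sigma$ is the assignment \eqref{eq:final_assignment} computed from $(S_1,\ldots,S_{n+1})$ and $(U_1,\ldots,U_{n+1})$.

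\textbf{Step 2: permutation equivariance.} Let $\pi$ be any permutation of $[n+1]$ and relabel the scores as $S'_i=S_{\pi(i)}$. The cost $\sum_i (S_i-U_{\sigma(i)})^\top M(S_i-U_{\sigma(i)})$ is invariant under this relabeling combined with $\sigma\mapsto\sigma\circ\pi$. The hypothesis that ties are broken permutation-equivariantly then gives $\widehat\sigma'=\widehat\sigma\circ\pi$. Define $W_i=\|U_{\widehat\sigma(i)}\|$, the reference norm assigned to point $i$. It follows that the vector $(W_1,\ldots,W_{n+1})$ computed from $(S',U)$ equals $(W_{\pi(1)},\ldots,W_{\pi(n+1)})$.

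\textbf{Step 3: exchangeability of the assigned norms.} The map from $(S,U)$ to $W$ is permutation equivariant in $S$ with $U$ held fixed, and $S$ is exchangeable and independent of $U$. Hence $(W_1,\ldots,W_{n+1})$ is exchangeable conditional on $U$. Two facts close the argument.
\begin{itemize}
\item The multiset $\{W_i\}$ equals the multiset $\{\|U_i\|\}$ because $\widehat\sigma$ is a bijection, so the threshold $\widehat q_{1-\alpha}(\{\|U_i\|\}_{i=1}^{n+1})$ is a symmetric function of the $W_i$.
\item By exchangeability, $W_{n+1}$ is uniformly distributed over the $n+1$ positions given the multiset.
\end{itemize}
Therefore
\[
\sP\{W_{n+1}\le \widehat q\mid U\}=\frac{1}{n+1}\#\{i:W_i\le\widehat q\}\ge 1-\alpha,
\]
where the last inequality uses the definition of the empirical quantile, $\widehat q_{1-\alpha}=$ the $\lceil(1-\alpha)(n+1)\rceil$-th smallest value. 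Integrating over $U$ and then over $\widehat M$ yields the claim.

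\textbf{Main obstacle.} The delicate step is Step 2: making precise that tie-breaking in \eqref{eq:final_assignment} is permutation equivariant. This is needed so that $W$ is a genuinely equivariant function of the scores. Ties can also occur among the $\|U_i\|$ values, but they cause no difficulty because the counting inequality in Step 3 holds with ties. Finally, the independence of $\widehat M$ must be used carefully, so that conditioning on it does not destroy exchangeability.
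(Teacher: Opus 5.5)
Your proposal is correct and follows essentially the same route as the paper. Both condition on the pre-calibration fit and the reference cloud, use permutation-equivariant tie-breaking so the assignment relabels with the data, and conclude that the reference norm assigned to the test point is uniform over $\{\|U_i\|\}_{i=1}^{n+1}$; the paper phrases this via uniformity of the test index given the unordered augmented sample, while you use the equivalent exchangeability of $(W_1,\ldots,W_{n+1})$. One small point to state explicitly: you should also condition on the pretrained predictor $\widehat\mu$, as the paper does, so that the scores $S_i=s(X_i,Y_i)$ are exchangeable.
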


Similar to FOTCP \citep{ndiaye2025beyond}, for every fixed metric $M=M^\top\succ0$, the empirical set has a polyhedral representation. See Appendix~\ref{subsec:polyhedral_represent} for additional details. 

\subsection{Theoretical results of MV-OTCP}
For notational simplicity, define
\[
\zeta_M(u)
    =
    {
        {\{f_Q(u)\}^{-1}}\one\{u\in\sB_{1-\alpha}\}
        \det \{\nabla^2g_M(M^{\frac{1}{2}}u)\}
    }.
\]
Then $\gJ(M)=\E\{\zeta_M(U)\}$.
Write $\gJ_N(M)=N^{-1}\sum_{i=1}^N\zeta_M(U_i)$, and $\widehat\gJ(M)=\widehat\gJ\{M,\widehat\theta(M)\}$.  Thus $\gJ_N$ is the infeasible empirical average of the population integrand, while $\widehat\gJ$ is the actual plug-in criterion used for metric learning. For $M_c\in\gM_0$, let $\gM_0(M_c,\epsilon)=\{M\in\gM_0:\|M-M_c\|_F\le\epsilon\}$.

\begin{assumption}
\label{ass:tacp_metric_learning}
(i) There exist $B_\zeta,L_\zeta\in(0,\infty)$ such that $|\zeta_M(U)|\le B_\zeta$ a.s. for all $M\in\gM_0$. Moreover, $\sup_{M_c\in\gM_0}\E_U\left[\sup_{M\in\gM_0(M_c,\epsilon)}|\zeta_M(U)-\zeta_{M_c}(U)|\right]\le L_\zeta\epsilon$, for any $\epsilon>0$. (ii) $\mathcal E_N^{\rm obj}:=\sup_{M\in\gM_0}|\widehat\gJ(M)-\gJ_N(M)|$
satisfies $\mathcal E_N^{\rm obj}=o_{\sP}(1)$ as $N\to\infty$. (iii) For every $\epsilon>0$,
\(
    \inf_{\substack{M\in\gM_0:\ \|M-M^\star\|_F\ge\epsilon}}
    \{\gJ(M)-\gJ(M^\star)\}>0.
\)
\end{assumption}

Assumption~\ref{ass:tacp_metric_learning}(i) imposes boundedness and local Lipschitz regularity on the population objective integrand. Part~(ii) isolates the plug-in transport error induced by $\Psi_{\widehat\theta}$ entering metric learning. Part~(iii) is a standard separation condition ensuring that metrics bounded away from $M^\star$ have strictly larger population objective values.
Appendix~\ref{app:proof_tacp_uniform_deviation} gives primitive sufficient conditions for Assumption~\ref{ass:tacp_metric_learning}(i).

\begin{theorem}
\label{thm:tacp_uniform_deviation}
Suppose Assumption~\ref{ass:tacp_metric_learning}(i) holds. For fixed ${d}$, there exists a universal constant $C>0$ such that, for every $\delta\in(0,1)$, with probability at least $1-\delta$,
\begin{align*}
    \sup_{M\in\gM_0}
    |\widehat\gJ(M)-\gJ(M)|
    \le
    \mathcal E_N^{\rm obj}
    +
    C B_\zeta
    \sqrt{
        \frac{
            {d}^2\log\{e+2\sqrt{{d}}\,\overline\lambda L_\zeta N/B_\zeta\}
            +\log(2/\delta)
        }{N}
    }.
\end{align*}
In addition, if parts~(ii)--(iii) of Assumption~\ref{ass:tacp_metric_learning} hold, then $\E\|\widehat M-M^\star\|_F\to0$
\end{theorem}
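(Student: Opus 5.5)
The plan is to split the error by the triangle inequality,
\[
\sup_{M\in\gM_0}|\widehat\gJ(M)-\gJ(M)|\le \sup_{M\in\gM_0}|\widehat\gJ(M)-\gJ_N(M)|+\sup_{M\in\gM_0}|\gJ_N(M)-\gJ(M)|,
\]
where the first term is $\mathcal E_N^{\rm obj}$ by definition. The whole first claim then reduces to a uniform law of large numbers for the class $\{\zeta_M:M\in\gM_0\}$ over the i.i.d.\ draws $U_1,\ldots,U_N$. I will control this class by a bracketing or covering argument in Frobenius norm together with Hoeffding's inequality.

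First, cover $\gM_0$. Every $M\in\gM_0$ satisfies $\|M\|_F\le\sqrt d\,\overline\lambda$. Hence $\gM_0$ sits inside a Frobenius ball of radius $\sqrt d\,\overline\lambda$ in the space of symmetric matrices, of dimension $d(d+1)/2\le d^2$. It therefore admits an $\epsilon$-net $\{M_1,\ldots,M_K\}\subset\gM_0$ with $\log K\le d^2\log(1+2\sqrt d\,\overline\lambda/\epsilon)$, where the net points are chosen inside $\gM_0$ at the cost of a factor 2 in radius.

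At each center $M_k$, Hoeffding's inequality applies because $|\zeta_{M_k}|\le B_\zeta$. A union bound then gives, with probability at least $1-\delta/2$,
\[
\max_k|\gJ_N(M_k)-\gJ(M_k)|\le CB_\zeta\sqrt{\{\log K+\log(2/\delta)\}/N}.
\]

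For $M$ in the ball around $M_k$, write $D_k(U)=\sup_{M\in\gM_0(M_k,\epsilon)}|\zeta_M(U)-\zeta_{M_k}(U)|$. The population part $|\gJ(M)-\gJ(M_k)|$ is at most $\E D_k\le L_\zeta\epsilon$ by Assumption~\ref{ass:tacp_metric_learning}(i). The empirical part is at most $N^{-1}\sum_i D_k(U_i)$. Each $D_k$ is bounded by $2B_\zeta$ and has mean at most $L_\zeta\epsilon$, so a second Hoeffding bound with a union bound over $k$ (probability $\delta/2$) gives
\[
N^{-1}\sum_i D_k(U_i)\le L_\zeta\epsilon+CB_\zeta\sqrt{\{\log K+\log(2/\delta)\}/N}.
\]
Now choose $\epsilon=B_\zeta/(L_\zeta N)$. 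The terms $2L_\zeta\epsilon=2B_\zeta/N$ are then absorbed into the square-root term, and $\log K\le d^2\log(e+2\sqrt d\,\overline\lambda L_\zeta N/B_\zeta)$, which gives the stated bound. A measurability caveat remains for the suprema: I will assume it or restrict to a countable dense subset, using the local Lipschitz control.

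Second, the consistency claim follows the standard argmin-consistency argument. The bound just proved, combined with Assumption~\ref{ass:tacp_metric_learning}(ii), yields $\Delta_N:=\sup_M|\widehat\gJ(M)-\gJ(M)|=o_{\sP}(1)$. Since $\widehat M$ minimizes $\widehat\gJ$ and $M^\star\in\gM_0$,
\[
\gJ(\widehat M)-\gJ(M^\star)\le \{\gJ(\widehat M)-\widehat\gJ(\widehat M)\}+\{\widehat\gJ(M^\star)-\gJ(M^\star)\}\le2\Delta_N.
\]
Let $\eta_\epsilon>0$ be the separation gap from Assumption~\ref{ass:tacp_metric_learning}(iii). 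Then $\sP(\|\widehat M-M^\star\|_F\ge\epsilon)\le\sP(2\Delta_N\ge\eta_\epsilon)\to0$, so $\widehat M\to M^\star$ in probability. Because $\gM_0$ is bounded, $\|\widehat M-M^\star\|_F\le2\sqrt d\,\overline\lambda$ almost surely, so bounded convergence upgrades this to $\E\|\widehat M-M^\star\|_F\to0$. If the argmin is only approximate, the optimization error is added to $2\Delta_N$.

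The main obstacle is the chaining step. Assumption~\ref{ass:tacp_metric_learning}(i) gives only an expected local oscillation bound rather than a pointwise Lipschitz bound, so the oscillation term must be handled by the bracket-type concentration of $D_k$ above rather than by a deterministic Lipschitz argument. Matching the exact logarithm in the statement also requires care with the net radius.
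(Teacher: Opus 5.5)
Your proposal is correct and follows the paper's proof essentially step for step. Both use the same triangle-inequality reduction to $\sup_M|\gJ_N(M)-\gJ(M)|$, the same volumetric Frobenius $\epsilon$-net, and Hoeffding plus a union bound applied both to $\zeta_{M_j}$ and to the local oscillation envelopes (the paper's $H_j$, your $D_k$), with the choice $\epsilon=B_\zeta/(L_\zeta N)$ and the identical gap-plus-bounded-convergence argument for $\E\|\widehat M-M^\star\|_F\to0$. Your factor-2 loss for placing the net centers inside $\gM_0$ is not actually needed, since a maximal $\epsilon$-packing of $\gM_0$ is already an internal net obeying the same $(1+2R_\gM/\epsilon)^m$ bound, but it only changes the universal constant $C$.
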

Theorem~\ref{thm:tacp_uniform_deviation} decomposes the error of the learned volume objective into the transport approximation error $\mathcal E_N^{\rm obj}$ and a uniform empirical fluctuation term. The latter vanishes at the usual $N^{-1/2}$ rate up to logarithmic factors. Hence, if the fitted transport is uniformly consistent and the population objective is well separated around $M^\star$, the learned metric $\widehat M$ converges to the population-optimal metric.

The final conformal geometry depends on both $\widehat M$ and the empirical assignment map. Define the oracle set $\gC_\star^{\rm MV}(x)=\gC_{M^\star}(x)$. We now investigate the asymptotic agreement between the oracle set $\gC_\star^{\rm MV}(x)$ and empirical set ${\widehat\gC}^{\rm MV}(x)$.

\begin{assumption}
\label{ass:tacp_assignment_stability}
Let $S\sim\gP$ be independent of the calibration and reference data. (i) $\gQ$ is absolutely continuous with compact convex support, and $F_Q$ is continuously differentiable near $q_\star$ with $F_Q'(q_\star)>0$. (ii) $\gP$ is absolutely continuous with compact support, and there exists $L_{\rm OT}<\infty$ such that $\|T_M(s)-T_M(s')\|\le L_{\rm OT}\|s-s'\|$ for all $M\in\gM_0$ and $s,s'\in\operatorname{supp}(\gP)$. (iii) There exists $L_T<\infty$ such that $\E_S\|T_M(S)-T_{M^\star}(S)\|\le L_T\|M-M^\star\|_F$ for all $M\in\gM_0$.




\end{assumption}
Assumption~\ref{ass:tacp_assignment_stability}~(i)--(ii) impose standard smoothness and compactness conditions on the reference and score distributions, together with a uniform Lipschitz condition on the population transport map; conditions of this type are used in \citet{deb2021rates}. Assumption~\ref{ass:tacp_assignment_stability}~(iii) further requires the transport map to vary smoothly with the ground metric, ensuring that perturbations of $M$ induce only proportional changes in the transported scores.

\begin{theorem}
\label{thm:tacp_assignment_set_error}
Suppose Assumptions~\ref{ass:tacp_metric_learning} and~\ref{ass:tacp_assignment_stability} hold. Assume that the observations in $\gD_{\rm est}$, $\gD_{\rm cal}$, and $(X_{n+1},Y_{n+1})$ are i.i.d. from the data distribution, and assignment ties are resolved permutation-equivariantly. Then
\begin{align}
    {
    \sP\{Y_{n+1}\in\gC_\star^{\rm MV}(X_{n+1})\triangle
    {\widehat\gC}^{\rm MV}(X_{n+1})\}
    \longrightarrow0
    \qquad\text{as }N,n\to\infty.}\nonumber
\end{align}
\end{theorem}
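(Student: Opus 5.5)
We reduce the symmetric-difference event to a rank-comparison event for the test score. Write $S=S_{n+1}=Y_{n+1}-\widehat\mu(X_{n+1})$. The residual score gives $Y_{n+1}\in\gC_\star^{\rm MV}(X_{n+1})$ iff $\|T_{M^\star}(S)\|\le q_\star$. By construction, $Y_{n+1}\in\widehat\gC^{\rm MV}(X_{n+1})$ iff $\|U_{\widehat\sigma_{\widehat M}^{Y_{n+1}}(n+1)}\|\le \widehat q_n$, where $\widehat q_n=\widehat q_{1-\alpha}(\{\|U_i\|\}_{i=1}^{n+1})$. Write $\widehat T(S):=U_{\widehat\sigma_{\widehat M}^{Y_{n+1}}(n+1)}$ for the empirical assignment image of the test score. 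For any $\eta>0$, the symmetric difference is contained in
\[
\{|\|T_{M^\star}(S)\|-q_\star|\le 2\eta\}\cup\{|\widehat q_n-q_\star|>\eta\}\cup\{\|\widehat T(S)-T_{M^\star}(S)\|>\eta\}.
\]
We bound the three pieces in turn.

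The first two pieces are routine.
\begin{itemize}
\item Since $T_{M^\star\#}\gP=\gQ$, the first probability equals $F_Q(q_\star+2\eta)-F_Q(q_\star-2\eta)$. By Assumption~\ref{ass:tacp_assignment_stability}(i), this is $O(\eta)$.
\item The second piece vanishes as $n\to\infty$ by the Glivenko--Cantelli theorem for $\|U_i\|$ together with $F_Q'(q_\star)>0$, which gives $\widehat q_n\to q_\star$ in probability.
\end{itemize}
It then suffices to show $\|\widehat T(S)-T_{M^\star}(S)\|\to0$ in probability and afterwards let $\eta\downarrow0$.

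For the third piece we split
\[
\|\widehat T(S)-T_{M^\star}(S)\|\le \|\widehat T(S)-T_{\widehat M}(S)\|+\|T_{\widehat M}(S)-T_{M^\star}(S)\|.
\]
The second term is handled by conditioning on $\gD_{\rm est}$, which fixes $\widehat M$ and is independent of $S$. Assumption~\ref{ass:tacp_assignment_stability}(iii) then gives $\E_S\|T_{\widehat M}(S)-T_{M^\star}(S)\|\le L_T\|\widehat M-M^\star\|_F$. Theorem~\ref{thm:tacp_uniform_deviation} gives $\E\|\widehat M-M^\star\|_F\to0$, so this term tends to zero in $L^1$ and hence in probability by Markov's inequality.

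The first term is the main obstacle. We must show that, uniformly over $M\in\gM_0$ (or at least at the random but independent $M=\widehat M$), the discrete $c_M$-optimal assignment on the $n+1$ points $\{S_i\}$ and $\{U_i\}$ sends a typical point $S_i$ close to $T_M(S_i)$.

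We would use exchangeability first. Conditionally on $\widehat M$, the calibration and test scores are i.i.d. and ties are resolved permutation-equivariantly. Hence $\E\|\widehat T(S_{n+1})-T_{\widehat M}(S_{n+1})\|$ equals the averaged quantity $\E\,(n+1)^{-1}\sum_{i}\|U_{\widehat\sigma(i)}-T_{\widehat M}(S_i)\|$. This reduces the problem to the $L^1$ consistency of empirical OT assignments.

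To prove that consistency, we change variables $s\mapsto M^{1/2}s$ and $u\mapsto M^{1/2}u$. This turns $c_M$ into the squared Euclidean cost and, by Lemma~\ref{lem:metric_brenier}, turns $T_M$ into the Brenier map $\nabla\psi$. The eigenvalue bounds in $\gM_0$ make this change of variables uniformly bi-Lipschitz.

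Next we apply the standard stability argument for Brenier maps with an $L_{\rm OT}$-Lipschitz map, as in \citet{deb2021rates}. Cyclical monotonicity of the empirical assignment, combined with the Lipschitz bound in Assumption~\ref{ass:tacp_assignment_stability}(ii), gives
\[
\frac1{n+1}\sum_i\|U_{\widehat\sigma(i)}-T_M(S_i)\|^2\lesssim L_{\rm OT}\,\{W_1(\widehat\gP_{n+1},\gP)+W_1(\widehat\gQ_{n+1},\gQ)\},
\]
with a constant depending only on $\underline\lambda,\overline\lambda$ and the diameters of the compact supports. The right-hand side tends to zero almost surely by compactness, and it does not depend on $M$, so the bound holds uniformly over $\gM_0$. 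Cauchy--Schwarz then converts the squared average into an $L^1$ bound.

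Combining the three pieces, the limsup of the probability is $O(\eta)$ for every $\eta>0$, which proves the claim.
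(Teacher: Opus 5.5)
Your proposal is correct and follows essentially the same route as the paper. Both arguments split the event into three pieces: a boundary band around $q_\star$, the cutoff deviation $|\widehat q_n-q_\star|$, and the map error. Both then pass the map error through $T_{\widehat M}$ by the triangle inequality, handling the metric part with Assumption~\ref{ass:tacp_assignment_stability}(iii) and Theorem~\ref{thm:tacp_uniform_deviation}. Finally, both use exchangeability and a linear change of variables to reduce the assignment error to Euclidean empirical OT stability, in the spirit of \citet{deb2021rates}.

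The differences are minor. You decompose the event at the test point, whereas the paper averages the disagreement indicators over the augmented sample. You also derive a pathwise $W_1$-type stability bound that holds uniformly over $\gM_0$, whereas the paper cites Corollary~2.3 of \citet{deb2021rates} and simply asserts that its constant is uniform in $M$. In both versions, the strong-convexity step tacitly needs $\nabla\psi$ to be Lipschitz somewhat beyond $\operatorname{supp}(\gP)$, which is slightly more than Assumption~\ref{ass:tacp_assignment_stability}(ii) literally states.
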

Theorem~\ref{thm:tacp_assignment_set_error} shows that, as both estimation and calibration samples grow, the probability of the disagreement between the MV-OTCP region and its population oracle vanishes. Thus the learned construction asymptotically recovers the oracle acceptance region in probability.

\section{Individualized minimum-volume OTCP}
\label{sec:itacp}
MV-OTCP uses one ground metric for the marginal score law. When the conditional score distribution varies with the covariate, a single geometry can be inefficient even if it remains marginally valid. We therefore first localize the transport map and then learn a local minimum-volume metric. 
Let $\gP_x$ denote the conditional law of $S=s(X,Y)$ given $X=x$. For a fixed localization point $x$ and a candidate metric $M\in{\gM_0}$, define the local population Monge map
\begin{align*}
    T_{M,x}
    \in
    \argmin_{T_\#\gP_x=\gQ}
    \int
    c_M\{s,T(s)\}
    \,d\gP_x(s).
\end{align*}
By Lemma~\ref{lem:metric_brenier}, for a convex potential $g_{M,x}$, ${G_{M,x}(u):=T_{M,x}^{-1}(u)} = M^{-1/2}\nabla g_{M,x}(M^{1/2}u)$.
The pointwise task-aware metric is defined through the local volume objective
\begin{align}
\label{eq:localized_metric_target}
    \gJ_x(M)
    =
    \mathsf{Leb}\{T_{M,x}^{-1}(\sB_{1-\alpha})\}
    =
    \E_U\left[
            \{f_Q(U)\}^{-1}\one\{U\in\sB_{1-\alpha}\}
            \det\{\nabla^2g_{M,x}(M^{1/2}U)\}
    \right].
\end{align}
Thus, for each fixed localization point $x$, the oracle metric is $ M_x^\star\in\argmin_{M\in\gM_0}\gJ_x(M).$
Therefore, define the population oracle set as $\gC_\star^{\rm IMV}(x)=\left\{y\in\gY:\|T_{M_x^\star,x}\{s(x,y)\}\|\le q_\star\right\}.$
Since $(T_{M_x^\star,x})_\#\gP_x=\gQ$, it satisfies $\sP\{Y\in\gC_\star^{\rm IMV}(x)\mid X=x\}=\gQ\{u:\|u\|\le q_\star\} = 1-\alpha$.

At the empirical level, for each fixed $M$ and covariate value $x$, we let $\Psi_{\theta,x}$ denote a parameterized approximation to the Brenier map $\nabla g_{M,x}(\cdot)$ using any suitable parametric optimal transport model, in line with the MV-OTCP construction.
Given estimation sample $\gD_{\rm est}$, the empirical counterpart of \eqref{eq:localized_metric_target} is
\[
    \widehat\gJ_x(M,\theta)
    =
    \frac{1}{N}\sum_{j=1}^N \{f_Q(U_j)\}^{-1}
        \one\{U_j\in\sB_{1-\alpha}\}
        \det\left\{
            \nabla \Psi_{\theta,x}(M^{1/2}U_j)
            +\delta_N I_{{d}}
        \right\}.
\]
The pointwise metric estimate is
\[
    \widehat M_x
    \in
    \argmin_{M\in\gM_0}
    \widehat\gJ_x
    \{M,\widehat\theta_x(M)\}.
\]
After estimating $\widehat M_x$, we use the same estimation sample to fit a separate score-to-reference map $\widehat T_{\widehat M_x,x}$ for split calibration.
Implementation details of learning $\widehat M_x$ and $\widehat T_{\widehat M_x,x}$ are given in Appendix~\ref{sec:itacp_implementation}.
Let $\gD_{\rm cal}=\{(X_i,Y_i)\}_{i=1}^n$ be independent of the estimation sample. {For any $(x,y)$, define $\widehat R_N(x,y)=\|\widehat T_{\widehat M_x,x}\{s(x,y)\}\|$.}
The resulting Individualized Minimum-Volume OT Conformal Prediction (IMV-OTCP) set is
\begin{align}
\label{eq:itacp_set}
    {\widehat\gC}^{\rm IMV}(x)
    =
    \left\{
        y\in\gY:
        {\widehat R_N(x,y)}
        \le
         \widehat q_{1-\alpha}(\{{\widehat R_N(X_i,Y_i)}\}_{i=1}^n\cup\{\infty\})
    \right\}.
\end{align}

\begin{theorem}[Marginal coverage for IMV-OTCP]
\label{thm:itacp_validity}
Assume that $(X_i,Y_i)_{i=1}^{n+1}$ are exchangeable. Suppose all quantities defining the score rule ${(x,y)\mapsto\widehat R_N(x,y)}$ are fitted using data independent of the calibration sample and test point, and that this fixed rule is applied to all calibration and test observations. Then $\sP\{
        Y_{n+1}\in
        {\widehat\gC}^{\rm IMV}(X_{n+1})\}
    \ge
    1-\alpha.$
\end{theorem}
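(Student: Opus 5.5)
The plan is the standard split-conformal exchangeability argument, with the key point being that the score rule is fixed once we condition on the estimation data. Let $\mathcal F$ denote the $\sigma$-field generated by everything used to build $(x,y)\mapsto\widehat R_N(x,y)$: the estimation sample $\gD_{\rm est}$, the reference draws, the learned metrics $\widehat M_x$, and the fitted maps $\widehat T_{\widehat M_x,x}$. By hypothesis $\mathcal F$ is independent of $\{(X_i,Y_i)\}_{i=1}^{n+1}$. Conditionally on $\mathcal F$, the map $\widehat R_N$ is therefore a deterministic measurable function $R:\gX\times\gY\to[0,\infty)$, applied identically to every point. Note that $\widehat M_x$ depends on $x$ only through the fixed rule, so the fact that the metric is individualized does not break anything: the composite score $(x,y)\mapsto\|\widehat T_{\widehat M_x,x}\{s(x,y)\}\|$ is still one fixed function.

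\emph{Step 1.} Define $R_i=R(X_i,Y_i)$ for $i\in[n+1]$. Because $\{(X_i,Y_i)\}_{i=1}^{n+1}$ is exchangeable and independent of $\mathcal F$, it remains exchangeable conditionally on $\mathcal F$. Applying the same fixed function $R$ coordinatewise preserves exchangeability, so $(R_1,\ldots,R_{n+1})$ is exchangeable given $\mathcal F$.

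\emph{Step 2.} Rewrite the coverage event. We have $Y_{n+1}\in{\widehat\gC}^{\rm IMV}(X_{n+1})$ iff $R_{n+1}\le \widehat q_{1-\alpha}(\{R_i\}_{i=1}^n\cup\{\infty\})$. Here $\widehat q_{1-\alpha}$ of the augmented multiset is its $\lceil(1-\alpha)(n+1)\rceil$-th smallest element, with the value $+\infty$ if that index exceeds $n$. The usual replacement lemma then applies: because $R_{n+1}\le\infty$, the event above is equivalent to
\[
R_{n+1}\le \widehat q_{1-\alpha}(\{R_i\}_{i=1}^{n+1}),
\]
that is, to $R_{n+1}$ being among the $\lceil(1-\alpha)(n+1)\rceil$ smallest of $R_1,\ldots,R_{n+1}$.

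\emph{Step 3.} Conclude by rank symmetry. By exchangeability, the probability that $R_{n+1}$ has rank at most $k=\lceil(1-\alpha)(n+1)\rceil$ is at least $k/(n+1)\ge 1-\alpha$. Ties only help, since ranks are counted with $\le$. This gives $\sP\{Y_{n+1}\in{\widehat\gC}^{\rm IMV}(X_{n+1})\mid\mathcal F\}\ge1-\alpha$, and taking expectations over $\mathcal F$ yields the claim.

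The only delicate step is Step 2's quantile convention with the $\infty$ atom. I would state the convention explicitly and verify the equivalence by checking separately the cases $k\le n$ and $k=n+1$. The conditioning in Step 1 is routine given the independence assumption, and no transport-specific property (Lemma~\ref{lem:metric_brenier} and the like) is needed.
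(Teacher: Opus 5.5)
Your proposal is correct and follows essentially the same route as the paper. Both arguments condition on $\gH_N$ to freeze the score rule, use exchangeability of $V_i=\widehat R_N(X_i,Y_i)$, compare the cutoff from $\{V_1,\ldots,V_n,\infty\}$ with the $k$th order statistic of the augmented scores, and then apply the rank-counting bound $k/(n+1)\ge 1-\alpha$. The only difference is that you prove the two quantile events are equivalent, whereas the paper uses just the one-sided inequality $q_{\rm aug}\le q_{\rm cal}$, which already suffices for the lower bound.
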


\begin{remark}[Anchor-point implementation]
\label{rem:itacp_anchor_implementation}
IMV-OTCP fit local transport objects at every covariate value. To reduce computation, we select a finite set of anchor covariates $\{a_1,\ldots,a_L\}$ using only the training and estimation samples, and fit ${(\widehat M_{a_j},\widehat T_{\widehat M_{a_j},a_j})}$ once for each anchor. For a generic covariate value $x$, we use the fit associated with its nearest anchor,
${j(x)=\argmin_{j\in[L]}\|x-a_j\|}$, so that
$(\widehat M_x,\widehat T_{\widehat M_x,x})={(\widehat M_{a_{j(x)}},\widehat T_{\widehat M_{a_{j(x)}},a_{j(x)}})}$. This amortizes the expensive pointwise fitting step while preserving the split-conformal validity argument.
\end{remark}


We next state the consistency results under high-level approximation assumptions on the reverse-map objective and the final forward-map.
 Define
\begin{align*}
    \zeta_{M,x}(u)
    :=
    {\{f_Q(u)\}^{-1}}{
        \one\{u\in\sB_{1-\alpha}\}
        |\det\{\nabla_uG_{M,x}(u)\}|
    }.
\end{align*}
Then $\gJ_x(M)=\E_U\{\zeta_{M,x}(U)\}$. Let
$\gJ_{N,x}(M):=N^{-1}\sum_{j=1}^N\zeta_{M,x}(U_j)$
denote the infeasible empirical objective based on the same reference draws as $\widehat\gJ_x$. Let $\gH_N$ be the $\sigma$-algebra generated by $\gD_{\rm est}$, the reference draws, and the pretrained predictor $\widehat\mu$, and define $R^\star(x,y)=\|T_{M_x^\star,x}\{s(x,y)\}\|$.

\begin{assumption}
\label{ass:imv_metric_learning}
Fix an interior covariate value $x$.
(i) There exist constants $0<B_x<\infty$ and $0\le L_x<\infty$ such that $|\zeta_{M,x}(U)|\le B_x$ almost surely for all $M\in\gM_0$. Moreover, for every $\epsilon>0$,
$\sup_{M_c\in\gM_0}\E_U\left[\sup_{M\in\gM_0(M_c,\epsilon)}|\zeta_{M,x}(U)-\zeta_{M_c,x}(U)|\right]\le L_x\epsilon$. 
(ii) The fitted reverse-map objective satisfies $\mathcal E_N^{\rm obj}(x)=o_{\sP}(1)$ as $N\to\infty$, where $\mathcal E_N^{\rm obj}(x):=\sup_{M\in\gM_0}|\widehat\gJ_x\{M,\widehat\theta_x(M)\}-\gJ_{N,x}(M)|$.
(iii) For every $\epsilon>0$,
\(
    \inf_{\substack{M\in\gM_0:\ \|M-M_x^\star\|_F\ge\epsilon}}
    \{\gJ_x(M)-\gJ_x(M_x^\star)\}
    >0.
\)
\end{assumption}

Assumption~\ref{ass:imv_metric_learning}(i) imposes regularity on the local objective. Part~(ii) is the localized counterpart of Assumption~\ref{ass:tacp_metric_learning}(ii), controlling the reverse-map approximation error that enters metric learning. Part~(iii) parallels Assumption~\ref{ass:tacp_metric_learning}(iii) and provides the standard separation condition needed for local metric consistency.

\begin{theorem}
\label{thm:localized_metric_consistency}
Suppose Assumption~\ref{ass:imv_metric_learning}(i) holds. For fixed ${d}$, there is a universal constant $C>0$ such that, for every $\delta\in(0,1)$, with probability at least $1-\delta$,
\begin{align*}
    &\sup_{M\in\gM_0}
    |\widehat\gJ_x\{M,\widehat\theta_x(M)\}-\gJ_x(M)|
    \le \mathcal E_N^{\rm obj}(x)+C B_x
    \sqrt{
        \frac{
            {d}^2\log\left\{e+2\sqrt{{d}}\,\overline\lambda L_xN/B_x\right\}
            +\log(2/\delta)
        }{N}
    }.
\end{align*}
If, in addition, parts~(ii)--(iii) of Assumption~\ref{ass:imv_metric_learning} hold, then
\(
    \|\widehat M_x-M_x^\star\|_F
    \overset{\sP}{\longrightarrow}0.
\)
\end{theorem}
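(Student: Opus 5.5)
The argument has two parts: a uniform deviation bound obtained by a triangle inequality and a covering argument, then metric consistency by the usual argmin argument. For the first part, decompose for every $M\in\gM_0$
\[
|\widehat\gJ_x\{M,\widehat\theta_x(M)\}-\gJ_x(M)|\le |\widehat\gJ_x\{M,\widehat\theta_x(M)\}-\gJ_{N,x}(M)|+|\gJ_{N,x}(M)-\gJ_x(M)|.
\]
The supremum of the first term is $\mathcal E_N^{\rm obj}(x)$ by definition. The second term is an empirical process indexed by $M\in\gM_0$ over the i.i.d. reference draws $U_1,\ldots,U_N$. For fixed $M$ the summands $\zeta_{M,x}(U_j)$ do not depend on the estimation data, and they satisfy $|\zeta_{M,x}(U_j)|\le B_x$ by Assumption~\ref{ass:imv_metric_learning}(i). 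This is the same structure as in Theorem~\ref{thm:tacp_uniform_deviation}, so I will repeat that argument with $(B_\zeta,L_\zeta)$ replaced by $(B_x,L_x)$.

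Concretely, $\gM_0$ sits in the symmetric matrices, a space of dimension $d(d+1)/2\le d^2$. It lies in a Frobenius ball of radius $\sqrt d\,\overline\lambda$, since $\|M\|_F\le\sqrt d\,\lambda_{\max}(M)$. Hence for any $\epsilon>0$ there is an $\epsilon$-net $\{M_1,\ldots,M_K\}\subset\gM_0$ with $\log K\le d^2\log(1+2\sqrt d\,\overline\lambda/\epsilon)$. On the net, Hoeffding's inequality and a union bound give, with probability at least $1-\delta/2$,
\[
\max_k|\gJ_{N,x}(M_k)-\gJ_x(M_k)|\le B_x\sqrt{2\{\log K+\log(4/\delta)\}/N}.
\]
Off the net, let $M_k$ be the nearest net point to $M$. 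Write $\Delta_k(U)=\sup_{M\in\gM_0(M_k,\epsilon)}|\zeta_{M,x}(U)-\zeta_{M_k,x}(U)|$. Then the discretization error is at most $N^{-1}\sum_j\Delta_k(U_j)+\E\Delta_k(U)$. By the local Lipschitz condition, $\E\Delta_k\le L_x\epsilon$. Each $\Delta_k$ lies in $[0,2B_x]$, so a second Hoeffding bound plus a union bound over $k$ controls the empirical average by $L_x\epsilon+2B_x\sqrt{\{\log K+\log(4/\delta)\}/(2N)}$.

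To balance the terms I will take $\epsilon=B_x/(L_xN)$; if $L_x=0$ any small $\epsilon$ works. Then $L_x\epsilon\le B_x/N$, which is absorbed into the square-root term. The log-cardinality becomes $d^2\log(1+2\sqrt d\,\overline\lambda L_xN/B_x)\le d^2\log(e+\cdots)$, and the displayed bound follows with a universal $C$. The main care is needed here: the sup inside the expectation must be measurable, and the net points must lie inside $\gM_0$ so that $\gM_0(M_k,\epsilon)$ is the neighbourhood appearing in the assumption. I will handle this by building an internal net, which costs at most a factor $2$ in $\epsilon$, and by assuming measurability of the supremum, as is implicit in Assumption~\ref{ass:imv_metric_learning}(i).

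For consistency, let $D_N=\sup_M|\widehat\gJ_x\{M,\widehat\theta_x(M)\}-\gJ_x(M)|$. By the bound above together with Assumption~\ref{ass:imv_metric_learning}(ii), $D_N=o_{\sP}(1)$, because the square-root term is $O(\sqrt{\log N/N})$ for fixed $\delta$ and $\delta$ is arbitrary. Since $\widehat M_x$ minimizes the plug-in criterion, $\gJ_x(\widehat M_x)-\gJ_x(M_x^\star)\le 2D_N$; an approximate minimizer with $o_{\sP}(1)$ slack works the same way. Fix $\epsilon>0$ and let $\eta_\epsilon>0$ be the separation gap from Assumption~\ref{ass:imv_metric_learning}(iii). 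Then $\{\|\widehat M_x-M_x^\star\|_F\ge\epsilon\}\subset\{2D_N\ge\eta_\epsilon\}$, and the latter event has probability tending to $0$. This yields $\|\widehat M_x-M_x^\star\|_F\to0$ in probability.
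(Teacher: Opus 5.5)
Your proposal is correct and follows essentially the same route as the paper's proof. The steps match: the triangle inequality isolating $\mathcal E_N^{\rm obj}(x)$, an $\epsilon$-net of $\gM_0$ with local oscillation envelopes controlled by two Hoeffding-plus-union bounds, the choice $\epsilon=B_x/(L_xN)$, and the standard separation-gap argument for $\widehat M_x$.

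Your use of an internal net, so that the centers lie in $\gM_0$ as the modulus condition in Assumption~\ref{ass:imv_metric_learning}(i) requires, makes explicit something the paper leaves implicit. There is one small slip, in the case $L_x=0$. There you should take $\epsilon$ of the order of the diameter $\sqrt{d}\,\overline\lambda$, not a small $\epsilon$. Otherwise $\log K$ is not $O(d^2)$ and does not match the factor $\log e=1$ in the stated bound.
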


\begin{assumption}
\label{ass:imv_radius_nuisance}
    At every fixed interior $x\in\gX$, $\mathcal E_N^{\rm rad}(x)=o_{\sP}(1)$, where $\mathcal E_N^{\rm rad}(x):=\E[|\widehat R_N(x,Y)-R^\star(x,Y)|\mid X=x,\gH_N]$.
    Moreover, there exists a nonnegative $P_X$-integrable function $B_{\rm rad}$ such that $\mathcal E_N^{\rm rad}(x)\le B_{\rm rad}(x)$ almost surely for all $N$.
\end{assumption}
\begin{theorem}
\label{thm:itacp_conditional_consistency}
Suppose Assumption~\ref{ass:imv_radius_nuisance} holds, $F_Q$ is continuous and strictly increasing at $q_\star$, and the calibration samples and test observation are i.i.d. and independent of $\gH_N$. For any fixed interior $x$,
\begin{align*}
    \lim_{(N,n)\to \infty}\sP\left\{
        Y_{n+1}\in
        \gC_\star^{\rm IMV}(x)
        \mathbin\triangle
        {\widehat\gC}^{\rm IMV}(x)
        \,\middle|\,
        X_{n+1}=x
    \right\}
    = 0,
\end{align*}
and therefore $\lim_{(N,n)\to \infty}\sP\{Y_{n+1}\in{\widehat\gC}^{\rm IMV}(x)\,|\,X_{n+1}=x\}= 1-\alpha$.
\end{theorem}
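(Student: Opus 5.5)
The plan is to condition on $\gH_N$ throughout. Once we condition, the score rule $\widehat R_N$ is a fixed function. The calibration scores $\widehat R_i:=\widehat R_N(X_i,Y_i)$ are i.i.d. and independent of $(X_{n+1},Y_{n+1})$. Write $\widehat q_n$ for the conformal threshold in \eqref{eq:itacp_set}. Then
\[
\widehat\gC^{\rm IMV}(x)=\{y:\widehat R_N(x,y)\le\widehat q_n\}, \qquad \gC_\star^{\rm IMV}(x)=\{y:R^\star(x,y)\le q_\star\}.
\]
For any $\eta>0$, the symmetric difference event $\{Y\in\gC_\star\triangle\widehat\gC\}$ is contained in the union of three events:
\begin{align*}
&\{|R^\star(x,Y)-q_\star|\le 2\eta\},\\
&\{|\widehat R_N(x,Y)-R^\star(x,Y)|>\eta\},\\
&\{|\widehat q_n-q_\star|>\eta\}.
\end{align*}
To see this, suppose neither of the last two events occurs. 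If $R^\star$ and $\widehat R_N$ fall on opposite sides of their respective thresholds, then $R^\star$ must lie within $2\eta$ of $q_\star$. I will bound the conditional probability of each event given $X_{n+1}=x$, let $N,n\to\infty$, and then let $\eta\downarrow0$.

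\textbf{First term.} Given $X=x$, we have $(T_{M_x^\star,x})_\#\gP_x=\gQ$. Hence $R^\star(x,Y)\mid X=x$ has distribution function $F_Q$, and the probability equals $F_Q(q_\star+2\eta)-F_Q(q_\star-2\eta)$. This tends to $0$ as $\eta\to0$ by continuity of $F_Q$ at $q_\star$.

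\textbf{Second term.} By conditional Markov, it is at most $\mathcal E_N^{\rm rad}(x)/\eta$. We have $\mathcal E_N^{\rm rad}(x)=o_{\sP}(1)$, and the probability is bounded by $1$. So the unconditional (over $\gH_N$) contribution, $\E[\min\{1,\mathcal E_N^{\rm rad}(x)/\eta\}]$, tends to $0$ by bounded convergence.

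\textbf{Third term, the main obstacle: consistency of the calibrated quantile.} The difficulty is that the calibration scores come from the marginal law of $\widehat R_N(X,Y)$, not the conditional law at $x$. First, I will use the radial structure: for $P_X$-a.e. $x'$, $R^\star(X,Y)\mid X=x'\sim F_Q$. So $R^\star(X,Y)$ has marginal law $F_Q$, independent of $X$. Next, the marginal score error is
\[
\E[|\widehat R_N(X,Y)-R^\star(X,Y)|\mid\gH_N]=\int\mathcal E_N^{\rm rad}(x')\,dP_X(x').
\]
This tends to $0$ in probability by dominated convergence, using the integrable envelope $B_{\rm rad}$. It is here that I will assume the pointwise statement of Assumption~\ref{ass:imv_radius_nuisance} holds at $P_X$-a.e. $x'$; the anchor construction would require the same. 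Let $\widehat F_N$ be the conditional distribution function of $\widehat R_N(X,Y)$ given $\gH_N$. A Markov/Lévy-type argument gives
\[
F_Q(t-\eta')-o_{\sP}(1)\le\widehat F_N(t)\le F_Q(t+\eta')+o_{\sP}(1)
\]
for each $\eta'>0$. Combined with strict increase of $F_Q$ at $q_\star$, this shows the population $(1-\alpha)$-quantile of $\widehat F_N$ converges in probability to $q_\star$. Finally, conditional on $\gH_N$, the DKW inequality makes the empirical quantile $\widehat q_n$ uniformly close to the population quantile of $\widehat F_N$ as $n\to\infty$. Adding $\infty$ shifts the quantile level only by $O(1/n)$. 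Strict increase of $F_Q$ at $q_\star$ absorbs this shift. Hence $\sP(|\widehat q_n-q_\star|>\eta)\to0$. This probability is unaffected by conditioning on $X_{n+1}=x$, since the calibration data are independent of the test point.

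\textbf{Conclusion.} Combining the three bounds, the limsup of the conditional probability of the symmetric difference is at most $F_Q(q_\star+2\eta)-F_Q(q_\star-2\eta)$ for every $\eta>0$, hence equals $0$. The coverage statement follows from
\[
\bigl|\sP\{Y\in\widehat\gC\mid x\}-\sP\{Y\in\gC_\star\mid x\}\bigr|\le\sP\{Y\in\gC_\star\triangle\widehat\gC\mid x\},
\]
together with $\sP\{Y\in\gC_\star^{\rm IMV}(x)\mid X=x\}=1-\alpha$.
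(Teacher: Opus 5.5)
Your proof is correct and follows the paper's argument. It uses the same three-event decomposition, conditional Markov plus bounded convergence for the test-point radius error, dominated convergence of $\int\mathcal E_N^{\rm rad}\,dP_X$ (read at $P_X$-a.e.\ point, as the paper also does), and DKW with strict increase of $F_Q$ at $q_\star$ for the cutoff. The only difference is the order in the cutoff step: the paper applies DKW to the oracle radii $R_i^\star$ (exactly i.i.d.\ $F_Q$) and then sandwiches the fitted empirical CDF, whereas you sandwich the population CDF $\widehat F_N$ and apply DKW to the fitted radii given $\gH_N$. Because $\widehat F_N$ need not be strictly increasing, finish by sandwiching the empirical CDF of the calibration radii directly between $F_Q(\cdot\pm\eta')\pm o_{\sP}(1)$, rather than via a quantile-to-quantile comparison.
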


\section{Experiments}
\label{sec:experiments}
\label{sec:simulation}

We conduct experiments on synthetic data and five real datasets to compare with {SCP} \citep{vovk2005algorithmic}, {MVCS} \citep{braun2025minimum}, {FOTCP} \citep{ndiaye2025beyond}, and {RLCP} \citep{hore2024conformal}.
The nominal coverage is $1-\alpha=0.9$, using $100$ replications with common samples across methods. 
Predictors and metric parameters are fitted before calibration, and test data are reserved for evaluation. All methods use the residual score $s(x,y)=y-\widehat\mu(x)$ with a ridge-regression predictor fitted on the training data. Additional experiments and experimental details appear in Appendices~\ref{app:additional_exp} and~\ref{app:simulation_protocols}.


\textbf{Evaluation metrics.}
{For a prediction set $\widehat C(X)$ and test observations indexed by $i=1,\ldots,n_{\rm test}$, let $C_i=\one\{Y_i\in\widehat C(X_i)\}$ and $V_i=\mathsf{Leb}\{\widehat C(X_i)\}$.} The following criteria are used for comparison: (1) \textit{Marginal coverage}: ${n_{\rm test}^{-1}}\sum_i C_i$; (2) \textit{Set volume}: $\operatorname{Median}_{i} V_i$ (root volume $\operatorname{Median}_i V_i^{1/{d}}$ for real data); (3) \textit{Symmetric difference}: $\operatorname{Median}_i\mathsf{Leb}\{\widehat C(X_i)\triangle C^\star(X_i)\}$, where $C^\star(x)$ denotes the oracle $(1-\alpha)$ highest-density region.; (4) \textit{Worst-case coverage}: $\min_{B}\sum_i\mathbbm{1}(X_i\in B)C_i/\sum_i\mathbbm{1}(X_i\in B)$, with 100 random balls $B$ on $\gX$; (5) $\sqrt{\mathrm{MSCE}}$: $\{\E[\{\sP\bigl(Y\in\widehat\gC(X)\mid X\bigr)-(1-\alpha)\}^2]\}^{1/2}$, the empirical estimation of this quantity differs between the simulation and real-data experiments; detailed definitions are provided in Appendix~\ref{app:eval_metrics}.
Error bars show mean $\pm1.96$ standard errors.

\subsection{Homogeneous simulations}

We generate $Y=m(X)+\varepsilon$ with $X\sim\operatorname{Unif}[-2,2]$ and three bivariate residual laws. Let $e(\theta)=(\cos\theta,\sin\theta)^\top$ and let $R_0=\{-2\log V\}^{1/2}$ for $V\sim\operatorname{Unif}(0,1)$.
We draw $\Theta$ with density proportional to $r_j(\theta)^2$, where $j=1,2,3$ indexes the setting. For the first two settings, we then generate
$\varepsilon=A_j\{R_0r_j(\Theta)e(\Theta)\}+\xi$,
where $r_j(\theta)=c_j[1+d_j\cos\{k_j(\theta-\phi_j)\}]$, $A_j$ is a rotated anisotropic linear transformation, and $\xi\sim N(0,0.015^2I_2)$. Implementation details are deferred in Appendix~\ref{app:homogeneous_implementation}.  We consider the following three settings: 

\begin{itemize}[leftmargin=2.3em]

\item[(i)] \textit{Five-star.} Set $k_1=5$ and use the corresponding parameters $(c_1,d_1,\phi_1,A_1)$. 

\item[(ii)] \textit{Clover.} Set $k_2=4$ with a different choice of $(c_2,d_2,\phi_2,A_2)$. 

\item[(iii)] \textit{Exp.} Let $R_3\sim\operatorname{Gamma}(2,1)$, $\mu_3=\mathbb{E}\{R_3r_3(\Theta)e(\Theta)\}$ and  $\varepsilon=A_3\{R_3r_3(\Theta)e(\Theta)-\mu_3\}$.

\end{itemize}

\begin{figure}[h]
    \centering
    \includegraphics[width=\linewidth]{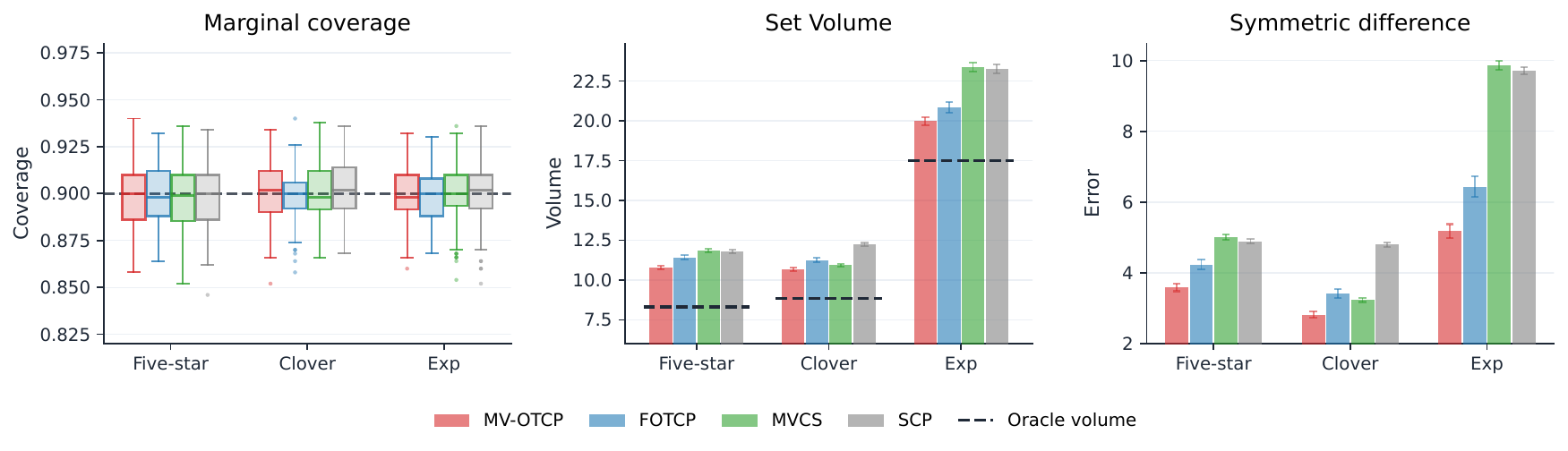}
    \caption{Homogeneous settings. Dashed lines mark nominal coverage and setting-specific oracle minimum volumes.}

    \label{fig:volume_target}
\end{figure}

Figure~\ref{fig:volume_target} shows that all methods achieve coverage close to the nominal level. Across all three settings, MV-OTCP consistently attains the smallest mean volume and the smallest symmetric difference from the oracle set. The improvement demonstrates the benefit of learning a volume-aware transport geometry.





To isolate the effect of the learned metric $M$, we focus on setting~(i)-(ii) and vary the anisotropy through {the family $A_j(\rho)$}, with larger $\rho$ producing more elongated residual distributions while preserving the overall area scale.
More precisely, let $R(\gamma)$ denote rotation through angle $\gamma$ and write each base transform as $A_j^{(0)}=R(\gamma_j)\operatorname{diag}(a_j,b_j)$, where $j\in\{1,2\}$ indexes setting (i) and (ii), respectively, and define the aspect-ratio family
\begin{align*}
    A_j(\rho)
    =
    R(\gamma_j)
    \operatorname{diag}\left(\sqrt{a_j b_j\rho},\sqrt{a_j b_j/\rho}\right),
    \qquad \rho>0 .
\end{align*}
In this way, increasing $\rho$ makes the residual distribution more elongated without mechanically changing the overall area scale. We display $\rho\in\{2,4,6,8,10\}$. {The main simulations use $A_j=A_j(6)$ for $j=1,2$, so the $\rho=6$ records are exactly those used for Five-star and Clover in Figure~\ref{fig:volume_target}.}
The bars give the percentage improvement of {MV-OTCP} over {FOTCP}, so positive values favor {MV-OTCP}.

\begin{figure}[h]
    \centering
    \includegraphics[width=\linewidth]{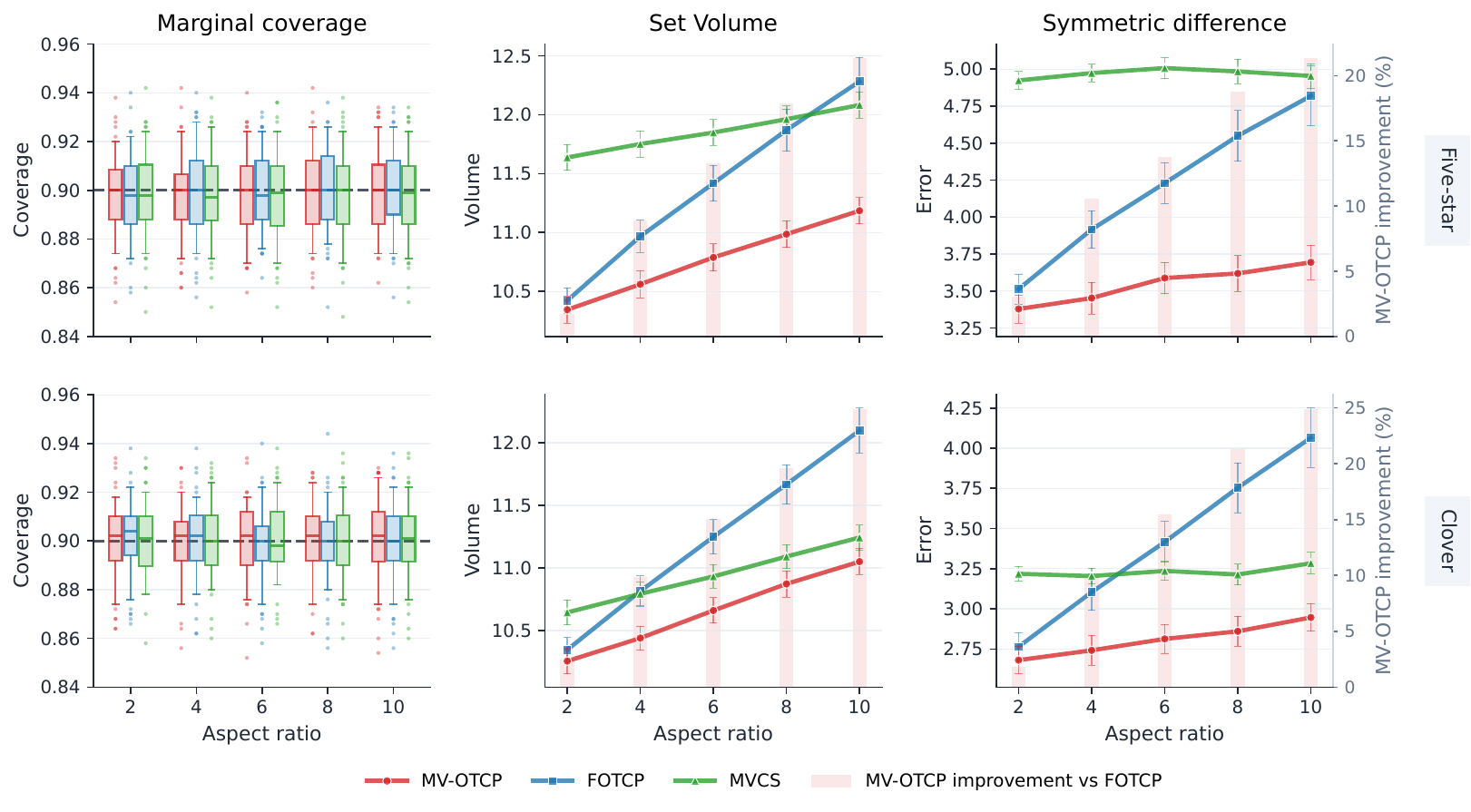}
    \caption{Aspect-ratio ablation for Five-star and Clover settings. Translucent bars report the mean paired improvement of {MV-OTCP} over {FOTCP} on the right axis. The dashed line in the marginal-coverage panels marks the nominal target.}
    \label{fig:aspect_ratio_ablation}
\end{figure}

Figure~\ref{fig:aspect_ratio_ablation} shows that marginal coverage remains near nominal coverage throughout. {MV-OTCP} uniformly improves on {MVCS} in both efficiency metrics and is close to {FOTCP} at the smallest displayed aspect ratio. Its advantage over {FOTCP} increases with $\rho$, especially for symmetric difference, showing that the learned cost captures global anisotropy without sacrificing the nonconvex geometry supplied by the OT assignment.

\subsection{Heterogeneous simulations}
\label{subsec:heterogeneous_simulation}
We generate $Y=m(X)+\varepsilon_X$ with $X\sim\operatorname{Unif}[-2,2]$. Let $Z\sim N(0,I_2)$, $U\sim\operatorname{Unif}(0,1)$, and $\Theta\sim\operatorname{Unif}(0,2\pi)$ be mutually independent. Implementation details are deferred in Appendix~\ref{app:heterogeneous_implementation}. Consider the following three settings:

\begin{itemize}[leftmargin=2.3em]
\item[(i)] \textit{$X$-mixture.} With
$C\mid X=x\sim\operatorname{Categorical}\{p(x)\}$ and $\bar v(x)=\sum_{k=1}^4p_k(x)v_k$, define
$\varepsilon_x=R\{\theta_{\rm m}(x)\}[{a_{\rm m}(x)}\{v_C-\bar v(x)\}+\tau(x)Z]$.

\item[(ii)] \textit{$X$-clover.} Let $\varepsilon_x=A_{\rm c}(x)[\sqrt U\,q_x(\Theta)e(\Theta)]+0.020Z$.
The radial boundary $q_x$ varies the lobe strength and phase, while $A_{\rm c}(x)$ changes orientation, scale, and anisotropy.


\item[(iii)] \textit{$X$-Exp.} For independent $E_1,E_2\sim\operatorname{Exp}(1)$, define $\varepsilon_X=r(X)R\{\theta_{\rm e}(X)\}E$ with $E=(E_1,E_2)^\top$.
The positive scale $r(x)$ and orientation $\theta_{\rm e}(x)$ both vary with $x$.
\end{itemize}

\begin{figure}[ht]
    \centering
    \includegraphics[width=\linewidth]{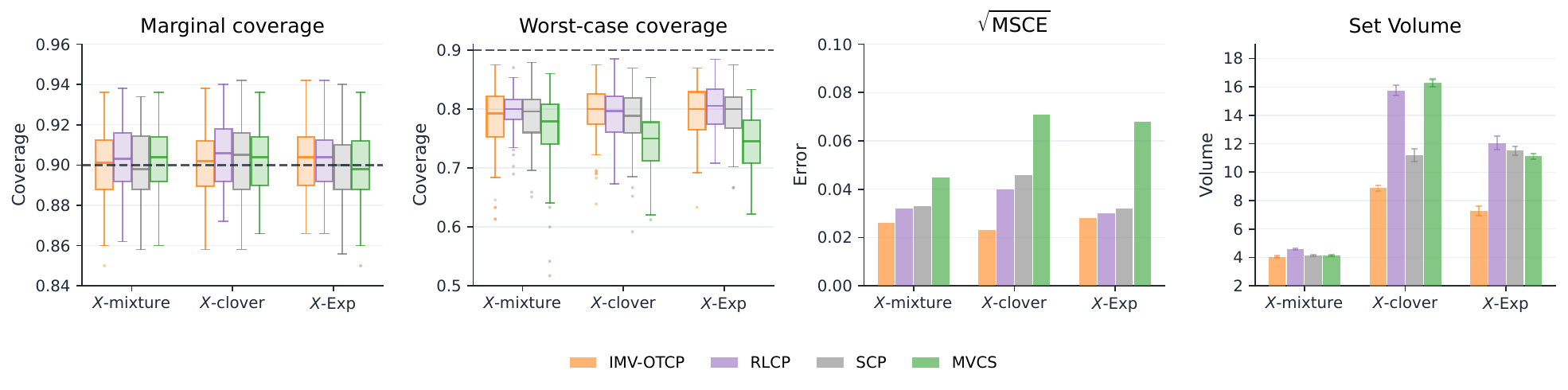}
    \caption{Heterogeneous settings. Boxplots show marginal and worst-case coverage; bars show root MSCE and set volume. Dashed lines mark nominal coverage. }
    \label{tab:itacp_heterogeneous}
\end{figure}

In Figure~\ref{tab:itacp_heterogeneous}, all methods maintain near-nominal marginal coverage. {IMV-OTCP} attains the smallest volume and lowest root MSCE across all settings, with particularly substantial improvements in the \textit{$X$-Mixture} and \textit{$X$-Exp} settings, while also achieving competitive worst-case coverage.

We next conduct an ablation study that jointly examines the effect of the learned metric and the number of anchors by comparing \texttt{IMV-OTCP} with its $M=I$ counterpart on the \textit{$X$-Mixture} and \textit{$X$-Exp} settings. We compare the reported method with {IMV-OTCP($M=I$)} over $100$ paired replications and $L\in\{3,5,10,15,20\}$ empirical-quantile anchors. The two variants use the same localized fitting and calibration pipeline, differing only in the ground-cost metric: {IMV-OTCP} learns $M$ at each anchor, whereas the baseline fixes $M=I$. This comparison isolates the contribution of metric learning in the individualized setting.

\begin{figure}[h]
    \centering
    \includegraphics[width=\linewidth]{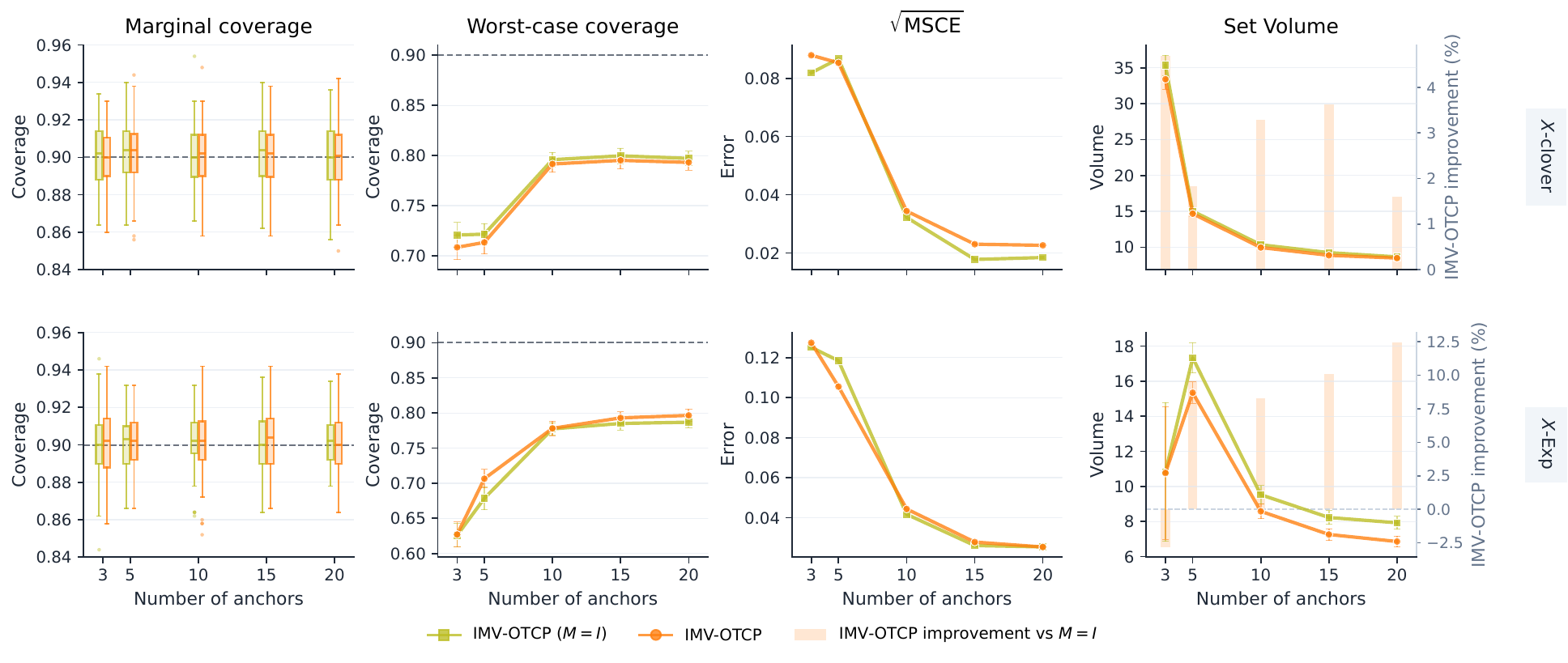}
    \caption{{Localized ground-cost and anchor ablation. Translucent bars report the mean paired volume improvement of learned-metric {IMV-OTCP} over {IMV-OTCP($M=I$)}.}}
    \label{fig:imv_anchor_ablation}
\end{figure}

In Figure~\ref{fig:imv_anchor_ablation}, both branches retain marginal coverage near $0.9$. Increasing the anchor count substantially improves the two local-coverage diagnostics, indicating that very coarse anchor grids are not adequate approximations of the varying conditional geometry. With too few anchors, the procedure may fail to capture the conditional structure adequately and can produce less stable prediction sets.
Across anchor counts, {IMV-OTCP} consistently reduces volume relative to the $M=I$ baseline, while maintaining comparable marginal and local-coverage diagnostics. This provides evidence that learning the ground metric contributes additional efficiency beyond the localized construction itself.

\subsection{Real-data regression benchmarks}
\label{sec:real_data}
\label{subsec:real_volume}
We use the \texttt{Energy}, \texttt{Bias}, \texttt{Taxi}, \texttt{rf1}, and \texttt{scm1d} datasets, with response dimensions $2,2,2,8,$ and $16$, respectively, {and follow the preprocessing protocol of \citet{dheur2025unified,braun2025minimum}.} We further project the transformed covariates onto a whitened PCA representation with at most eight components, which reduces the cost of localized computation and yields a better-conditioned representation for neighborhood distances. All transformations are fitted only on the training split and then held fixed. Additional implementation details can be found in Appendix~\ref{app:real_volume}.

Table~\ref{tab:real_volume_results} shows that all methods remain close to the nominal marginal coverage level. {IMV-OTCP} achieves the smallest volume on all datasets except \texttt{rf1}, while also attaining competitive $\sqrt{\mathrm{MSCE}}$ relative to the localized methods. These results indicate that {IMV-OTCP} improves prediction-set efficiency without sacrificing marginal coverage.
Its performance on the conditional diagnostic further suggests that the individualized construction captures meaningful local variation.

\begin{table}[h]
\centering
\footnotesize
\setlength{\tabcolsep}{1.4pt}
\caption{Regression benchmarks over $100$ splits. Entries are means (standard errors). Root volume is the median of $V_i^{1/{d}}$. Bold marks the lowest root volume in each dataset.}
\label{tab:real_volume_results}
\begin{tabular}{llcccc}
\toprule
Metric & Dataset & SCP & MVCS & RLCP & IMV-OTCP \\
\midrule
\multirow{5}{*}{\shortstack{Marginal \\coverage}}
 & \texttt{Energy} & $0.909(0.004)$ & $0.910(0.004)$ & $0.912(0.003)$ & $0.917(0.004)$ \\
 & \texttt{Bias}   & $0.901(0.002)$ & $0.901(0.002)$ & $0.903(0.002)$ & $0.901(0.002)$ \\
 & \texttt{Taxi}   & $0.901(0.001)$ & $0.902(0.001)$ & $0.903(0.001)$ & $0.900(0.001)$ \\
 & \texttt{rf1}    & $0.900(0.001)$ & $0.901(0.001)$ & $0.901(0.001)$ & $0.900(0.002)$ \\
 & \texttt{scm1d}  & $0.898(0.001)$ & $0.899(0.001)$ & $0.901(0.001)$ & $0.897(0.002)$ \\
\addlinespace
\multirow{5}{*}{$\sqrt{\mathrm{MSCE}}$}
 & \texttt{Energy} & $0.115(0.0039)$ & $0.101(0.0028)$ & $0.104(0.0032)$ & $0.091(0.0022)$ \\
 & \texttt{Bias}   & $0.044(0.0008)$ & $0.042(0.0008)$ & $0.042(0.0008)$ & $0.042(0.0008)$ \\
 & \texttt{Taxi}   & $0.033(0.0004)$ & $0.029(0.0010)$ & $0.025(0.0004)$ & $0.029(0.0006)$ \\
 & \texttt{rf1}    & $0.065(0.0007)$ & $0.061(0.0008)$ & $0.051(0.0006)$ & $0.060(0.0007)$ \\
 & \texttt{scm1d}  & $0.070(0.0005)$ & $0.073(0.0007)$ & $0.054(0.0007)$ & $0.065(0.0006)$ \\
\addlinespace
\multirow{5}{*}{\shortstack{Root\\volume}}
 & \texttt{Energy} & $1.474(0.021)$ & $1.337(0.030)$ & $1.867(0.025)$ & $\mathbf{1.107(0.019)}$ \\
 & \texttt{Bias}   & $2.159(0.007)$ & $2.177(0.008)$ & $2.149(0.007)$ & $\mathbf{1.779(0.007)}$ \\
 & \texttt{Taxi}   & $3.462(0.010)$ & $3.492(0.090)$ & $3.563(0.009)$ & $\mathbf{2.703(0.008)}$ \\
 & \texttt{rf1}    & $1.720(0.006)$ & $\mathbf{1.286(0.062)}$ & $2.088(0.006)$ & $1.309(0.005)$ \\
 & \texttt{scm1d}  & $2.025(0.007)$ & $4.266(0.616)$ & $2.878(0.008)$ & $\mathbf{1.759(0.004)}$ \\
\bottomrule
\end{tabular}
\end{table}

\section{Discussion}
\label{sec:discussion}
Learning the OT ground cost provides a unified framework for shaping multivariate conformal regions toward minimum volume or downstream task objectives. MV-OTCP uses a global metric in a full-conformal assignment, while IMV-OTCP fits local transport radii and uses split calibration; both separate conformal calibration from geometric optimization, yielding finite-sample marginal validity and, under consistency conditions, oracle-region recovery and asymptotic conditional coverage.
Experiments demonstrate the benefits of global metric learning, local adaptation under heterogeneous uncertainty, and task-aware optimization.


\bibliographystyle{plainnat}
\bibliography{references}

\clearpage
\appendix

\section{Implementation details}

\subsection{Elliptical volume-optimal geometry}
\label{app:elliptical_derivation}
Suppose that the score density is elliptically symmetric, with
\[
    f_P(s)=|\Sigma|^{-1/2}{g(\|\Sigma^{-1/2}(s-m)\|)},
\]
where $\Sigma\succ0$, $m\in\R^{{d}}$, and $g$ is nonincreasing, and that the reference distribution $\gQ$ is radial, with $U=R\Theta$, $R\sim\operatorname{Unif}(0,1)$, and $\Theta$ uniformly distributed on $\{\theta\in\R^{{d}}:\|\theta\|=1\}$ independently of $R$. Let $r(s)=\|\Sigma^{-1/2}(s-m)\|$, $\rho=r(S)$, and let $F_\rho$ be the distribution function of $\rho$. For a radial reference law with uniform radial component, define
\begin{align*}
    T_{M^\star}(s)
    =
    F_\rho(r(s))
    \frac{\Sigma^{-1/2}(s-m)}{r(s)}
\end{align*}
for $r(s)>0$. Then $(T_{M^\star})_\#\gP=\gQ$, and
\begin{align*}
	    T_{M^\star}^{-1}(\sB_{1-\alpha})
    =
    \{s:r(s)\le F_\rho^{-1}(1-\alpha)\}.
\end{align*}
This is the central density superlevel ellipsoid and is therefore the minimum-volume measurable set with probability $1-\alpha$. Moreover, by Lemma~\ref{lem:metric_brenier}, $T_{M^\star}$ is generated by the metric $M_{\rm vol}^\star\propto \Sigma^{-1/2}$. Indeed, set $A=\Sigma^{-1/4}$, $\tilde m=Am$, and $h(r)=\int_0^r F_\rho(t)\,dt$. For $\tilde s\in\R^{{d}}$, define $\tilde r(\tilde s)=\|A(\tilde s-\tilde m)\|$ and $\psi(\tilde s)=h\{\tilde r(\tilde s)\}$. Since $F_\rho$ is nondecreasing, $h$ is convex and nondecreasing; hence $\psi$ is convex. Differentiating with respect to $\tilde s$ and evaluating at $\tilde s=As$ gives
\begin{align*}
    \nabla\psi(As)
    =
    F_\rho(r(s))
    \frac{A^3(s-m)}{r(s)}
    =
    A T_{M^\star}(s).
\end{align*}
Since $A=M^{1/2}$ when $M=\Sigma^{-1/2}$, Lemma~\ref{lem:metric_brenier} implies that $T_{M^\star}$ is optimal for $c_M$ with $M=\Sigma^{-1/2}$.

\subsection{Implementation details of MV-OTCP}
\label{subsec:tacp_implementation}
A direct empirical approach would be to replace the population objective in \eqref{eq:task_loss_hessian_form} by its sample analogue and optimize it over $M$. This is generally infeasible because, apart from the metric $M$, the Brenier representation only ensures the existence of a convex potential $g_M$, which is unknown and depends on $M$ implicitly. Thus the objective cannot be evaluated directly unless the associated transport map is estimated. We represent its gradient map directly with GradNet \citep{chaudhari2025gradnetot} and differentiate the empirical task loss through the fitted transport problem.

Let $\gD_{\rm est}=\{(X_i,Y_i)\}_{i=1}^N$ be an independent estimation sample with scores $S_i=Y_i-\widehat\mu(X_i)$, and let $U_1,\ldots,U_N\sim\gQ$ be independent reference draws. For a current metric $M$, transform the samples by
\[
    \bar S_i=M^{1/2}S_i,\qquad \bar U_i=M^{1/2}U_i .
\]
Because the empirical score law is discrete, define the KDE of $(M^{1/2})_\#\gP$
\[
    \widehat f_{P,M}(z)
    =
    \frac{1}{N}\sum_{i=1}^N
    \kappa_b(z-\bar S_i),
\]
where $\kappa_b$ is a score-space kernel.
We use an isotropic Gaussian kernel.  At each metric iterate, its bandwidth is
chosen by the weighted Scott rule
\[
    b
    =
    \max\left\{
        10^{-3},
        \left(\frac{1}{{d}}\sum_{i=1}^N
        \widetilde w_i\|\bar S_i-\bar S_w\|^2\right)^{1/2}
        n_{\rm eff}^{-1/({d}+4)}
    \right\},
    \qquad
    n_{\rm eff}=\left(\sum_{i=1}^N\widetilde w_i^2\right)^{-1},
\]
where $\widetilde w_i$ are normalized localization weights and
$\bar S_w=\sum_i\widetilde w_i\bar S_i$; global fits use
$\widetilde w_i=1/N$.  The same rule is used for the source and target KDEs
when fitting the score-to-reference map.
Let $\Psi_\theta$ be a GradNet mapping the transformed reference distribution to the transformed score distribution. Following \citet{chaudhari2025gradnetot}, we fit $\Psi_\theta$ through the Monge--Amp\`ere density identity:
\begin{align}
\label{eq:global_reverse_gradnet_loss}
    \widehat\gL^{\leftarrow}_{\rm GN}(M,\theta)
    =
    \frac{1}{2N}\sum_{j=1}^N
    \Big[
        \log\det\{
            \nabla \Psi_\theta(M^{1/2}U_j)
        \}
        -
        \log f_Q(U_j)
        +
        \log\widehat f_{P,M}
        \{\Psi_\theta(M^{1/2}U_j)\}
    \Big]^2 .
\end{align}
Our parameterization adds a positive linear skip connection and a bias to a
positive combination of $J$ GradNet modules.  Each module has one hidden
projection of width $H$, a weighted-softmax activation, and the tied transpose
projection required by the gradient-map construction.  The experiment-specific
values of $(J,H)$ are reported below.
For a fixed $M$, let $\widehat\theta(M)$ minimize \eqref{eq:global_reverse_gradnet_loss}. We rewrite the empirical volume objective \eqref{eq:empirical_volume_objective} here for convenience:
\begin{align}
    \widehat \gJ(M,\theta)
    =
    \frac{1}{N}\sum_{i=1}^N
    \frac{
        \one\{U_i\in\sB_{1-\alpha}\}
        \det\{\nabla \Psi_\theta(M^{1/2}U_i)+{\delta_N} I_{{d}}\}
    }{f_Q(U_i)},\nonumber
\end{align}
where ${\delta_N}>0$ is a small numerical stabilizer. The empirical metric-learning problem is
\[
    \widehat M
    \in
    \argmin_{M\in\gM_0}
    \widehat\gJ\{M,\widehat\theta(M)\}.
\]
Importantly, this is a profiled objective because the fitted transport map depends on the metric. Its total derivative is
\begin{align*}
    \nabla_M
    \widehat\gJ\{M,\widehat\theta(M)\}
    ={}&
    \underbrace{
        \left.
        \partial_M\widehat\gJ(M,\theta)
        \right|_{\theta=\widehat\theta(M)}
    }_{\text{direct metric derivative}}
    +
        \underbrace{
        \{\nabla_M\widehat\theta(M)\}^{\top}
        \left.
        \nabla_\theta\widehat\gJ(M,\theta)
        \right|_{\theta=\widehat\theta(M)}
    }_{\text{transport-response derivative}}.
\end{align*}
The second term is generally nonzero because the GradNet is fitted by the OT loss rather than the volume loss. If $\widehat\theta(M)$ is a locally stationary minimizer of \eqref{eq:global_reverse_gradnet_loss} and the parameter Hessian is nonsingular, implicit differentiation gives
\[
    \nabla_M\widehat\theta(M)
    =
    -\left[
        \nabla_{\theta\theta}^2
        \widehat\gL^{\leftarrow}_{\rm GN}
        \{M,\widehat\theta(M)\}
    \right]^{-1}
    \nabla_{\theta M}^2
    \widehat\gL^{\leftarrow}_{\rm GN}
    \{M,\widehat\theta(M)\}.
\]
In computation, the linear system can be damped and solved by conjugate gradients; differentiating through a finite number of inner updates is an alternative.

To enforce the metric constraints, define
\[
    \gB_0
    =
    \left\{
        B=B^\top:\operatorname{tr}(B)=0,\ 
        \log(\underline\lambda)I_{{d}}
        \preceq B\preceq
        \log(\overline\lambda)I_{{d}}
    \right\},
    \qquad
    M(B)=\exp(B).
\]
Then $M(B)\in\gM_0$. The spectral projection may be replaced in practice by a condition-number penalty, provided that the returned metric satisfies the localization condition used in the theory.
The resulting bilevel metric-learning procedure is summarized in Algorithm~\ref{alg:metric_learning}.
\begin{algorithm}[H]
\caption{Volume-based metric learning for MV-OTCP}
\label{alg:metric_learning}
\begin{algorithmic}[1]
\Require Estimation scores $S_1,\ldots,S_N$, reference draws
$U_1,\ldots,U_N$, kernel bandwidth $b$, initial parameter $B_0\in\gB_0$,
outer iterations $T$, inner steps $K$
\For{$t=0,\ldots,T-1$}
    \State Set $M_t=\exp(B_t)$.
    \State Transform samples: $\bar S_i=M_t^{1/2}S_i$ and $\bar U_i=M_t^{1/2}U_i$.
    \State Form the transformed-score KDE $\widehat f_{P,M_t}$.
    \State Starting from warm-started $\theta$, take $K$ steps minimizing
    $\widehat\gL^{\leftarrow}_{\rm GN}(M_t,\theta)$ in
    \eqref{eq:global_reverse_gradnet_loss}; denote the result by $\theta_t$.
    \State Evaluate $\widehat\gJ(M_t,\theta_t)$ using
    \eqref{eq:empirical_volume_objective}.
    \State Compute the transport-response derivative by implicit
    differentiation through the fitted GradNet.
    \State Update $B_{t+1}=\Pi_{\gB_0}\big[B_t-\gamma_t\nabla_B
    \widehat\gJ\{M(B),\widehat\theta(M(B))\}|_{B=B_t}\big]$.
\EndFor
\Ensure Learned determinant-one metric $\widehat M=M(B_T)$
\end{algorithmic}
\end{algorithm}




\subsection{General task-aware OTCP}
\label{sec:general_task_loss}

We rewrite objective function here for convinence.
For a decision $z\in\gZ$ and an integrable loss density $\ell(z,y)$, define
\begin{equation}
\label{eq:integral_task_loss}
 \gL(z,\gC(x))=\int_{\gC(x)}\ell(z,y)\,dy,\qquad
 z_M^\star(x)\in\argmin_{z\in\gZ}\gL(z,\gC_M(x)).
\end{equation}
The corresponding task-aware target for metric learning is
\begin{equation}
 M_{\rm task}^\star\in\argmin_{M\in\gM_0}\gJ_{\rm task}(M),\qquad
 \gJ_{\rm task}(M)=\E_X\gL\{z_M^\star(X),\gC_M(X)\}.\nonumber
\end{equation}
Using the same change-of-variables argument as in Section~\ref{sec:tacp}, the task loss can be rewritten as
\begin{equation}
 \gJ_{\rm task}(M)=\E_{X,U}\left[
 \ell\{z_M^\star(X),\widehat\mu(X)+G_M(U)\}\frac{\one\{U\in\sB_{1-\alpha}\}\det\{\nabla G_M(U)\}}{f_Q(U)}\right].\nonumber
\end{equation}
Volume is recovered as the special case $\ell\equiv1$, whereas a general task loss additionally accounts for where the accepted volume is placed in the response space.

\begin{example}[Integrated decision loss]
\label{example:average_RO}
The criterion \eqref{eq:integral_task_loss} aggregates loss over the uncertainty set, as in averaged robust optimization \citep{bertsimas2023robust,cortes-gomez2025utilitydirected}. It is an unnormalized integral, so both region size and location affect metric selection. 
\end{example}

\begin{example}[Smoothed contextual robust optimization]
\label{example:smooth_CRO}
The contextual robust optimization \citep{chenreddy2022data} objective 
$\inf_{z\in\gZ}\sup_{y\in \gC(x)}\ell(z,y)$
can be smoothed by the log-exponential transform
\begin{align}
    \gL_\tau(z,\gC(x))
    =
    \tau\log
    \int_{y\in \gC(x)}
    \exp\{\ell(z,y)/\tau\}\,dy ,
    \qquad \tau>0 .\nonumber
\end{align}
For fixed $\tau$, minimizing above equation over $z$ is equivalent to minimizing an integral of the form \eqref{eq:integral_task_loss} with transformed loss density $\ell_\tau^{\exp}(z,y)=\exp\{\ell(z,y)/\tau\}$. As $\tau\downarrow0$, under mild regularity conditions—e.g., $\gC(x)$ is compact and $\ell(z,\cdot)$ is continuous, the log-exponential objective approaches the worst-case loss.
\end{example}

\subsubsection{Empirical task-aware criterion}
\label{subsec:empirical_general_task}

Let $\widehat G_M$ be a reference-to-score transport-map estimate obtained from the estimation sample, and define
\[
    \widehat w_M(u)
    =
    \frac{\one\{u\in\sB_{1-\alpha}\}
    |\det\nabla_u\widehat G_M(u)|}{f_Q(u)}.
\]
For estimation covariates $X_1,\ldots,X_N$ and a common independent reference sample $U_1,\ldots,U_K\sim\gQ$, define
\begin{align}
    \widehat\gL_i(z;M)
    :=\frac{1}{K}\sum_{j=1}^K
    \widehat w_M(U_j)
    \ell\left\{
        z,\widehat\mu(X_i)+\widehat G_M(U_j)
    \right\}.\nonumber
\end{align}
All $K$ transported points are integrated jointly for each decision; optimizing a separate decision at every quadrature point would define a different, generally degenerate objective. Let $\widehat z_i^\star(M)\in\argmin_{z\in\gZ}\widehat\gL_i(z;M)$, then
\begin{align}
    \widehat M_{\rm task}
    \in\argmin_{M\in\gM_0}
    \widehat\gJ_{\rm task}(M),\quad
    \widehat\gJ_{\rm task}(M)
    :=\frac{1}{N}\sum_{i=1}^N
    \widehat\gL_i\{\widehat z_i^\star(M);M\}.\nonumber
\end{align}
The estimated metric $\widehat M_{\rm task}$ can be used in the full-conformal assignment \eqref{eq:final_assignment}, with the coverage guarantee of Theorem~\ref{thm:tacp_validity}. For computational efficiency in the task experiments, we instead fit a score-to-reference map $\widehat T_{\widehat M_{\rm task}}$ before calibration and calibrate its radius by the ordinary split-conformal rule, as in IMV-OTCP. This gives Task-Aware OT Conformal Prediction (TA-OTCP):
\begin{equation}
 {\widehat\gC}^{\rm TA}(x)=\LRl{y:\|\widehat T_{\widehat M_{\rm task}}\{s(x,y)\}\|
 \le\widehat q_{1-\alpha}\LRs{\LRl{\|\widehat T_{\widehat M_{\rm task}}\{s(X_i,Y_i)\}\|}_{i=1}^n\cup\{\infty\}}}.\nonumber
\end{equation}
Both the metric and forward map are fixed before calibration, so exchangeability of the calibration and test scalar scores gives the usual split-conformal marginal guarantee.

Now consider the computation details. For an arbitrary pair $(M,\theta)$, define the transported point in the original score coordinates and its stabilized Jacobian weight by
\begin{align}
\label{eq:general_task_transport_point}
    \widehat G_{M,\theta}(u)
    &:={}
    M^{-1/2}\Psi_\theta(M^{1/2}u),\\
\label{eq:jacobian_surrogate}
    w_{M,\theta}(u)
    &:={}
    \frac{\one\{u\in\sB_{1-\alpha}\}}{f_Q(u)}
    \det\{\nabla\Psi_\theta(M^{1/2}u)+\delta_N I_{{d}}\}.
\end{align}
With $X_1,\ldots,X_N$ and $U_1,\ldots,U_K$, the parameterized version is
\[
    \widehat\gL_i(z;M,\theta)
    =
    \frac1K\sum_{j=1}^K
    w_{M,\theta}(U_j)
    \ell\{z,\widehat\mu(X_i)+\widehat G_{M,\theta}(U_j)\}.
\]
Let $\widehat z_i^\star(M,\theta)$ minimize this criterion over $z\in\gZ$, and set
\[
    \widehat\gJ_{\rm task}(M,\theta)
    =
    \frac1N\sum_{i=1}^N
    \widehat\gL_i\{\widehat z_i^\star(M,\theta);M,\theta\}.
\]
The implemented outer problem is
\[
    \widehat M_{\rm task}
    \in
    \argmin_{M\in\gM_0}
    \widehat\gJ_{\rm task}\{M,\widehat\theta(M)\},
\]
where $\widehat\theta(M)$ is obtained from \eqref{eq:global_reverse_gradnet_loss}. This is a bilevel problem because both the transported points and their Jacobian weights depend on the fitted map.

The metric derivative must retain this dependence. In particular,
\begin{align}
\label{eq:total_transport_derivative}
    {\nabla_M \widehat G_{M,\widehat\theta(M)}(u)}
    =
    \left.
    \left\{
        \partial_M \widehat G_{M,\theta}(u)
        +\partial_\theta \widehat G_{M,\theta}(u)
        \nabla_M\widehat\theta(M)
    \right\}
    \right|_{\theta=\widehat\theta(M)},
\end{align}
and the same decomposition applies to $w_{M,\widehat\theta(M)}(u)$. The direct term includes the derivatives of $M^{-1/2}$ and of the input $M^{1/2}u$; the response term accounts for the change in the fitted map. It can be evaluated by the implicit system in Appendix~\ref{subsec:tacp_implementation} or by differentiating through the inner optimization.

Let $\widehat y_{ij}=\widehat\mu(X_i)+\widehat G_{M,\widehat\theta(M)}(U_j)$, $\widehat w_j=w_{M,\widehat\theta(M)}(U_j)$, and $\widehat z_i^\star=\widehat z_i^\star\{M,\widehat\theta(M)\}$. {If the required derivatives exist, the chain rule gives}
\begin{align}
\label{eq:general_task_full_gradient}
    \nabla_M\widehat\gJ_{\rm task}
    \{M,\widehat\theta(M)\}
    ={}&\frac{1}{NK}\sum_{i=1}^N\sum_{j=1}^K
    \Bigl[
        \ell(\widehat z_i^\star,\widehat y_{ij})
        \nabla_M\widehat w_j+\widehat w_j
        \nabla_y\ell(\widehat z_i^\star,\widehat y_{ij})^\top
        \nabla_M\widehat y_{ij}+\widehat w_j
        \nabla_z\ell(\widehat z_i^\star,\widehat y_{ij})^\top
        \nabla_M\widehat z_i^\star
    \Bigr].
\end{align}
If $\widehat z_i^\star$ is an exact interior minimizer of $\widehat\gL_i\{z;M,\widehat\theta(M)\}$, its first-order optimality condition is
\begin{align*}
    \nabla_z\widehat\gL_i
    \{\widehat z_i^\star;M,\widehat\theta(M)\}
    =\frac1K\sum_{j=1}^K
        \widehat w_j
        \nabla_z\ell(\widehat z_i^\star,\widehat y_{ij})
    =0.
\end{align*}
Since all $K$ quadrature points at $X_i$ share the same decision $\widehat z_i^\star$, the decision-response terms vanish after summing over $j$:
\begin{align*}
    &\frac1K\sum_{j=1}^K
        \widehat w_j
        \nabla_z\ell(\widehat z_i^\star,\widehat y_{ij})^\top
        \nabla_M\widehat z_i^\star=
    \left[
        \frac1K\sum_{j=1}^K
        \widehat w_j
        \nabla_z\ell(\widehat z_i^\star,\widehat y_{ij})
    \right]^\top
    \nabla_M\widehat z_i^\star
    =0.
\end{align*}
Thus no derivative of the optimal decision needs to be evaluated when differentiating the profiled task objective. The envelope theorem yields

\begin{align}
\label{eq:general_task_total_gradient}
    \nabla_M\widehat\gJ_{\rm task}
    \{M,\widehat\theta(M)\}
    =\frac{1}{NK}\sum_{i=1}^N\sum_{j=1}^K
    \left[
        \ell(\widehat z_i^\star,\widehat y_{ij})
        \nabla_M\widehat w_j
        +\widehat w_j
        \nabla_y\ell(\widehat z_i^\star,\widehat y_{ij})^\top
        \nabla_M\widehat y_{ij}
    \right].
\end{align}

We now use this gradient to update the metric. As in MV-OTCP, parameterize $M(B)=\exp(B)$ with $B\in\gB_0$, and apply the chain rule through the matrix exponential. At outer iteration $t$, set
\begin{align}
\label{eq:task_metric_update}
    M_t=\exp(B_t),\quad
    g_t=\left.\nabla_B\widehat\gJ_{\rm task}
    \{M(B),\widehat\theta(M(B))\}\right|_{B=B_t},\quad
    B_{t+1}=\Pi_{\gB_0}\{B_t-\gamma_t g_t\}.
\end{align}
Each iteration fits the reverse transport at $M_t$, solves one decision problem for each estimation covariate using all $K$ quadrature points, and computes the total metric derivative in \eqref{eq:general_task_total_gradient}. The decisions are held fixed when evaluating this derivative, while the fitted transport response is retained. For OT fitting, the reference average in \eqref{eq:global_reverse_gradnet_loss} uses the same $K$ reference draws in place of $N$, with normalization $1/(2K)$.

After $T$ outer updates, set $\widehat M_{\rm task}=\exp(B_T)$ and refit the reverse map at this fixed metric. The learned metric can be inserted into the full-conformal assignment \eqref{eq:final_assignment}. For the split TA-OTCP construction used in the task experiments, fit $\widehat T_{\widehat M_{\rm task}}$ on the estimation scores by the score-to-reference procedure in Appendix~\ref{sec:itacp_implementation}, using uniform source weights $1/N$ and the global metric $\widehat M_{\rm task}$. Freeze the predictor, metric, and forward map before using the independent calibration sample. With $R_i=\|\widehat T_{\widehat M_{\rm task}}\{s(X_i,Y_i)\}\|$ for calibration observations, 
\begin{align}
\label{eq:task_split_calibration}
    \widehat\gC^{\rm TA}(x)
    =\{y\in\gY:\|\widehat T_{\widehat M_{\rm task}}\{s(x,y)\}\|
    \le\widehat q_{1-\alpha}(\{R_i\}_{i=1}^n\cup\{\infty\})\}.
\end{align}
Algorithm~\ref{alg:task_metric_learning} summarizes the metric updates.

\begin{algorithm}[H]
\caption{Task-aware metric learning for TA-OTCP}
\label{alg:task_metric_learning}
\begin{algorithmic}[1]
\Require Pretrained predictor $\widehat\mu$; independent estimation and calibration samples; reference draws $U_1,\ldots,U_K\sim\gQ$; loss $\ell$ and decision set $\gZ$; KDE bandwidth $b$; $B_0\in\gB_0$; outer iterations $T$, inner steps $K_{\rm in}$, step sizes $\gamma_t$; level $\alpha$
\State Compute estimation scores $S_i=s(X_i,Y_i)$, $i\in[N]$.
\For{$t=0,\ldots,T-1$}
    \State Set $M_t=\exp(B_t)$ and form the transformed-score KDE $\widehat f_{P,M_t}$.
    \State Take $K_{\rm in}$ warm-started steps for the reverse OT loss in \eqref{eq:global_reverse_gradnet_loss}, averaged over $U_1,\ldots,U_K$; denote the fitted parameters by $\theta_t$.
    \State Evaluate $\widehat G_{M_t,\theta_t}(U_j)$ and $w_{M_t,\theta_t}(U_j)$ for $j\in[K]$.
    \For{$i=1,\ldots,N$}
        \State Solve $\widehat z_{i,t}^\star\in\argmin_{z\in\gZ}\widehat\gL_i(z;M_t,\theta_t)$.
    \EndFor
    \State Compute the total metric derivative using \eqref{eq:general_task_total_gradient}, retaining the fitted transport response and holding the solved decisions fixed.
    \State Apply the chain rule through $M(B)=\exp(B)$ to obtain $g_t$.
    \State Update $B_{t+1}=\Pi_{\gB_0}\{B_t-\gamma_t g_t\}$.
\EndFor
\State Set $\widehat M_{\rm task}=\exp(B_T)$.
\Ensure Learned metric $\widehat M_{\rm task}$.
\end{algorithmic}
\end{algorithm}

\begin{example}[Quadratic task with a closed-form decision]
\label{ex:quadratic_general_task}
Let $H=H^\top\succ0$, $\gZ=\R^{{d}}$, and
\[
    \ell_H(z,y)=\frac12(z-y)^\top H(z-y).
\]
Writing $\widehat y_{ij}(M)=\widehat\mu(X_i)+\widehat G_M(U_j)$ and $\widehat w_j(M)=\widehat w_M(U_j)$, the empirical decision is the weighted centroid
\begin{align*}
    \widehat z_i^\star(M)
    =
    \frac{\sum_{j=1}^K\widehat w_j(M)
    \widehat y_{ij}(M)}
    {\sum_{j=1}^K\widehat w_j(M)}.
\end{align*}
The decision is well defined whenever the denominator is positive. Thus the downstream optimization is available in closed form, although the task criterion still depends on the metric through the fitted transport map and its Jacobian.
\end{example}


\subsection{Implementation details of IMV-OTCP}
\label{sec:itacp_implementation}
IMV-OTCP differs at the map-fitting stage: for each fixed $x$, it first parameterizes a determinant-one metric by an unconstrained matrix $A_x$,
\[
\begin{aligned}
    {B_{x}}
    &=
    \frac{A_x+A_x^\top}{2}
    -
    \frac{\operatorname{tr}(A_x)}{{d}}I_{{d}},
    &
    M_x
    &=
    \exp({B_{x}}).
\end{aligned}
\]
Then ${B_{x}}={B_{x}}^\top$, $\operatorname{tr}({B_{x}})=0$, $M_x\succ0$, and $\det(M_x)=1$. {For a nonnegative covariate-localization kernel $K_{\tilde h}$ with bandwidth $\tilde h$, set $w_{i,\tilde h}(x)=K_{\tilde h}(X_i,x)/\sum_{k=1}^N K_{\tilde h}(X_k,x)$.} The local empirical law is discrete, whereas GradNetOT is formulated for continuous source and target densities. We therefore use the weighted KDE of $(M^{1/2})_\#\gP_x$:
\begin{align}
    \widehat f_{P,x,M}(z)
    =
    \sum_{i=1}^N
    w_{i,\tilde h}(x)
    \kappa_b(z-M^{1/2}S_i),
    \qquad
    \kappa_b(v)=b^{-{d}}\kappa(v/b).\nonumber
\end{align}
Its score-space bandwidth $b$ is distinct from the covariate-localization bandwidth $\tilde h$. Let $\Psi_{\theta,x}$ be a GradNet that maps the transformed reference distribution to this transformed local score distribution. Following GradNetOT \citep{chaudhari2025gradnetot}, we fit it through the Monge--Amp\`ere density identity:
\begin{align}
\label{eq:pointwise_reverse_gradnet_loss}
    \widehat\gL_x^{\leftarrow}(M,{\theta})
    =
    \frac{1}{2N}\sum_{j=1}^N
    \Big[
        \log\det\{
            \nabla \Psi_{\theta,x}(M^{1/2}U_j)
        \}
        -
        \log f_Q(U_j)
        +
        \log\widehat f_{P,x,M}
        \{\Psi_{\theta,x}(M^{1/2}U_j)\}
    \Big]^2 .
\end{align}
This loss directly parameterizes the monotone gradient map and matches its log-Jacobian determinant to the log density ratio. For a fixed $M$, let $\widehat\theta_x(M)$ minimize \eqref{eq:pointwise_reverse_gradnet_loss}. The fitted map in the original coordinates is
\begin{align}
    \label{eq:fitted_ref_source_map}
    \widehat G_{M,x}(u)
    =
    M^{-1/2}
    \Psi_{\widehat\theta_x(M),x}(M^{1/2}u).
\end{align}

The empirical counterpart of \eqref{eq:localized_metric_target} is
\[
    \widehat\gJ_x(M,{\theta})
    =
    \frac{1}{N}\sum_{j=1}^N
    \frac{
        \one\{U_j\in\sB_{1-\alpha}\}
        \det\left\{
            \nabla \Psi_{\theta,x}(M^{1/2}U_j)
            +\delta_N I_{{d}}
        \right\}
    }{f_Q(U_j)}.
\]
The pointwise metric estimate is
\[
    \widehat M_x
    \in
    \argmin_{M\in\gM_0}
    \widehat\gJ_x
    \{M,\widehat\theta_x(M)\}.
\]
Because the fitted reference-to-score map depends on $M_x$ through the profiled GradNetOT objective, we compute the total derivative of the empirical volume by implicitly differentiating through the first-order optimality conditions defining $\widehat\theta_x(M_x)$, following the implementation used for MV-OTCP in Appendix~\ref{subsec:tacp_implementation}. Thus, the metric update accounts for the response of the fitted transport map to a change in $M_x$.

After selecting $\widehat M_x$, fit a second GradNet $F_{{\psi}}$ directly from the transformed local score law to the transformed reference law by minimizing
\begin{align}
\widehat\gL_{x,{\rm MA}}^{\rightarrow}({\psi})
=
\frac{1}{2}
\sum_{i=1}^N w_{i,\tilde h}(x)
\Biggl[
    &\log\det\left\{
        \nabla F_{{\psi}}(\widehat M_x^{1/2}S_i)
    \right\}
    -\log\widehat f_{P,x,\widehat M_x}
    (\widehat M_x^{1/2}S_i)
    \notag\\
&+\log f_Q\left\{
        \widehat M_x^{-1/2}
        F_{{\psi}}(\widehat M_x^{1/2}S_i)
    \right\}
\Biggr]^2 .
\end{align}
The original Monge--Amp\`ere objective does not explicitly enforce the learned reference-to-score and score-to-reference maps to be mutual inverses. We therefore introduce inverse- and cycle-consistency losses to encourage this property. In transformed coordinates, write {$\bar G_x=\Psi_{\widehat\theta_x(\widehat M_x),x}$} for the frozen reference-to-score GradNet obtained during metric learning, {$\bar T_{\psi,x}=F_\psi$} for the score-to-reference GradNet being fitted, $\bar U_j=\widehat M_x^{1/2}U_j$, and $\bar S_i=\widehat M_x^{1/2}S_i$. We use
\begin{align*}
\widehat\gL_{x,{\rm inv}}(\psi)
&=\frac1N\sum_{j=1}^N
\|\bar T_{\psi,x}\{\bar G_x(\bar U_j)\}-\bar U_j\|_2^2,\\
\widehat\gL_{x,{\rm cyc}}(\psi)
&=\sum_{i=1}^N w_{i,\tilde h}(x)
\|\bar G_x\{\bar T_{\psi,x}(\bar S_i)\}-\bar S_i\|_2^2,
\end{align*}
and fit the final map with
\begin{align}
\label{eq:pointwise_forward_gradnet_loss}
\widehat\gL_{x,T}(\psi)
=
\widehat\gL_{x,{\rm inv}}(\psi)
+\widehat\gL_{x,{\rm cyc}}(\psi)
+\widehat\gL_{x,{\rm MA}}^{\rightarrow}(\psi),
\end{align}
Let $\widehat\psi_x$ minimize \eqref{eq:pointwise_forward_gradnet_loss}. The fitted score-to-reference map is
\[
    \widehat T_{\widehat M_x,x}(s)
    =
    \widehat M_x^{-1/2}
    F_{\widehat\psi_x}(\widehat M_x^{1/2}s).
\]
The empirical procedure is summarized in Algorithm~\ref{alg:itacp}.
\begin{algorithm}[H]
\caption{Individualized volume-based metric learning for IMV-OTCP}
\label{alg:itacp}
\begin{algorithmic}[1]
\Require Independent training, estimation, and calibration samples; common
reference draws; kernel $K_{\tilde h}$; level $\alpha$
\State Fit the score function $s$ on the training sample and compute the
estimation scores.
\For{each calibration covariate $X_i$}
    \State Form the weighted local estimation law and KDE at $X_i$.
    \For{each outer metric-learning iteration}
        \State Fit the reference-to-score GradNet by minimizing
        \eqref{eq:pointwise_reverse_gradnet_loss}.
        \State Update $M$ through the profiled local task objective.
    \EndFor
    \State Fix $\widehat M_{X_i}$ and fit the score-to-reference GradNet by
    minimizing \eqref{eq:pointwise_forward_gradnet_loss}.
    \State {Compute $\widehat R_N(X_i,Y_i)$ using the fitted map.}
\EndFor
\State Compute the global split-conformal cutoff in \eqref{eq:itacp_set}.
\State At a test covariate $x$, learn $\widehat M_x$ and construct
$\widehat T_{\widehat M_x,x}$ by the same pointwise rule.
\Ensure The set \eqref{eq:itacp_set}; reuse $(\widehat M_x,\widehat T_{\widehat M_x,x})$
across all trial responses at that $x$.
\end{algorithmic}
\end{algorithm}

\begin{remark}
A full-conformal extension of the MV-OTCP construction to localized transport is provided in Appendix~\ref{app:weighted_full_conformal}. We use the split-conformal version in \eqref{eq:itacp_set} for computational efficiency, since the full construction requires repeated sample-dependent Kantorovich solves across calibration points, test points, and trial responses. The split construction fixes the localized score rule before calibration,  enabling anchor-based amortization of the pointwise transport fits.
\end{remark}

\section{Polyhedral representation of MV-OTCP}
\label{subsec:polyhedral_represent}
Fix the calibration scores and regard the trial score as a query $s\in\R^{{d}}$. If the query is assigned to $U_j$, the remaining calibration scores are optimally matched to the other reference points with value
\begin{align*}
    \kappa_{j,M}
    =
    \min_{\substack{\sigma:[n]\to[n+1]\setminus\{j\}\\\sigma\ \mathrm{bijective}}}
    \sum_{i=1}^n
    (S_i-U_{\sigma(i)})^\top M(S_i-U_{\sigma(i)}).
\end{align*}
Let $f_{j,M}(s)=(s-U_j)^\top M(s-U_j)+\kappa_{j,M}$. The index $j$ is an optimal destination for the query if and only if $s$ belongs to the candidate cell
\[
    \overline\gR_{j,M}
    =
    \{s:f_{j,M}(s)\le f_{k,M}(s),\ k\ne j\}.
\]
When several indices are optimal, the fixed tie rule partitions the overlap into disjoint cells $\gR_{j,M}^{\rm tie}\subseteq\overline\gR_{j,M}$.

\begin{proposition}[Polyhedral representation of MV-OTCP]
\label{prop:metric_polyhedra}
For any fixed $M=M^\top\succ0$, the candidate cell $\overline\gR_{j,M}$ is the convex polyhedron
\begin{align}
    \overline\gR_{j,M}
    =
    \left\{
        s\in\R^{{d}}:
        \langle s,M(U_k-U_j)\rangle
        \le
        \beta_{j,k}(M),
        \quad k\ne j
    \right\},\nonumber
\end{align}
where $\beta_{j,k}(M)=\frac{1}{2} \left\{ U_k^\top M U_k - U_j^\top M U_j + \kappa_{k,M} - \kappa_{j,M} \right\}$. Let $\gI_\alpha$ denote the indices of the accepted reference points,
\(
    \gI_\alpha
    =
    \left\{j:\|U_j\|\le
    \widehat q_{1-\alpha}\LRs{\{\|U_i\|\}_{i=1}^{n+1}}\right\}.
\)
Under any deterministic priority rule for ties, $\gR_{j,M}^{\rm tie}$ is a possibly half-open polyhedral cell obtained by assigning the shared boundaries of the candidate cells. Consequently, the exact accepted score region is the finite union of polyhedral cells
\[
    {\widehat\gA}_{M}
    =
    \bigcup_{j\in\gI_\alpha}\gR_{j,M}^{\rm tie},
\]
For $M=\widehat M$, the resulting MV-OTCP set is ${\widehat\gC}^{\rm MV}(x)=\widehat\mu(x)+{\widehat\gA}_{\widehat M}$.
\end{proposition}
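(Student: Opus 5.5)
The plan is to reduce the joint assignment problem \eqref{eq:final_assignment} to a query-by-query comparison of the functions $f_{j,M}$, and then to expand the quadratic cost.

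\textbf{Step 1: reduction to the cells $\overline\gR_{j,M}$.} Fix the calibration scores $S_1,\ldots,S_n$ and the reference points $U_1,\ldots,U_{n+1}$, and treat the trial score $s=S_{n+1}^y$ as a query. Any permutation $\sigma\in\mathfrak S_{n+1}$ with $\sigma(n+1)=j$ has total cost
\[
(s-U_j)^\top M(s-U_j)+\sum_{i=1}^n (S_i-U_{\sigma(i)})^\top M(S_i-U_{\sigma(i)}),
\]
where $\sigma$ restricted to $[n]$ is a bijection onto $[n+1]\setminus\{j\}$. Minimizing first over such restrictions gives the value $f_{j,M}(s)$, by the definition of $\kappa_{j,M}$. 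Minimizing over $j$ then shows that the overall optimum equals $\min_j f_{j,M}(s)$. Hence $j$ is an optimal destination for the query exactly when $f_{j,M}(s)\le f_{k,M}(s)$ for all $k\ne j$. This confirms the characterization of $\overline\gR_{j,M}$ stated just before the proposition.

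\textbf{Step 2: the inequalities are linear.} Expand
\[
f_{j,M}(s)=s^\top Ms-2s^\top MU_j+U_j^\top MU_j+\kappa_{j,M}.
\]
The term $s^\top Ms$ is the same for every $j$, so it cancels in $f_{j,M}(s)\le f_{k,M}(s)$. What remains, using the symmetry of $M$, is
\[
2\langle s,M(U_k-U_j)\rangle\le U_k^\top MU_k-U_j^\top MU_j+\kappa_{k,M}-\kappa_{j,M}.
\]
Dividing by $2$ gives exactly the constraint with right-hand side $\beta_{j,k}(M)$. The quantities $\kappa_{j,M}$ do not depend on $s$. So $\overline\gR_{j,M}$ is a finite intersection of closed half-spaces, hence a convex polyhedron. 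If $U_k=U_j$, the constraint degenerates to $0\le\beta_{j,k}(M)$; this happens with probability zero, and even then it is either vacuous or empty, so the conclusion is unaffected.

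\textbf{Step 3: tie-breaking and the accepted region.} The cells $\overline\gR_{j,M}$ cover $\R^d$, and they overlap only on sets where two of the affine functions $f_{j,M}-f_{k,M}$ are equal. Under a deterministic priority order, index $j$ is chosen exactly when
\[
f_{j,M}(s)<f_{k,M}(s)\ \text{for higher-priority } k,\qquad f_{j,M}(s)\le f_{k,M}(s)\ \text{otherwise}.
\]
So $\gR_{j,M}^{\rm tie}$ is an intersection of open and closed half-spaces, i.e. a possibly half-open polyhedral cell contained in $\overline\gR_{j,M}$. These cells partition $\R^d$.

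By definition of the MV-OTCP set, $y$ is accepted if and only if $\widehat\sigma^{\,y}_{M}(n+1)\in\gI_\alpha$, where $\gI_\alpha$ is built from the $U_i$ alone and so does not depend on $y$. Acceptance is therefore equivalent to $s(x,y)\in\bigcup_{j\in\gI_\alpha}\gR_{j,M}^{\rm tie}={\widehat\gA}_M$. Finally, setting $M=\widehat M$ and using $s(x,y)=y-\widehat\mu(x)$ gives ${\widehat\gC}^{\rm MV}(x)=\widehat\mu(x)+{\widehat\gA}_{\widehat M}$.

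\textbf{Expected obstacle.} The main point of care is Step 3. The cell description must be consistent with the tie rule actually used in \eqref{eq:final_assignment}. The priority rule has to act on the query's destination $j$ given optimal completions, rather than on whole permutations. Any deterministic rule that depends only on which $j$ attain the minimum satisfies this, which is the assumption stated in the proposition.
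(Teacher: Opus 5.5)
Your proposal is correct and follows the same route as the paper's proof. You characterize $\overline\gR_{j,M}$ by $f_{j,M}(s)\le f_{k,M}(s)$ for $k\ne j$, expand the quadratics so that $s^\top Ms$ cancels and each comparison becomes a half-space with right-hand side $\beta_{j,k}(M)$, then let the deterministic tie rule make selected facets strict before taking the union over $\gI_\alpha$ and translating by $\widehat\mu(x)$. Your Step~1 derivation of $\min_{\sigma}=\min_j f_{j,M}(s)$ and the explicit strict/weak description of the priority cells add detail the paper leaves implicit. One small correction: in the degenerate case $U_k=U_j$ one has $\kappa_{k,M}=\kappa_{j,M}$, so $\beta_{j,k}(M)=0$ and that constraint is always vacuous, never empty.
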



\begin{proof}
By definition, $\overline\gR_{j,M}$ is the set of query scores for which assigning the query to $U_j$ has no larger augmented OT objective than assigning it to any other reference point. Thus
\begin{align*}
    \overline\gR_{j,M}
    =
    \{s:f_{j,M}(s)\le f_{k,M}(s),\ k\ne j\}.
\end{align*}
For each $k\ne j$, the inequality $f_{j,M}(s)\le f_{k,M}(s)$ is
\begin{align*}
    (s-U_j)^\top M(s-U_j)+\kappa_{j,M}
    \le
    (s-U_k)^\top M(s-U_k)+\kappa_{k,M}.
\end{align*}
Expanding both quadratic terms gives
\begin{align*}
    s^\top Ms
    -2s^\top M U_j
    +U_j^\top M U_j
    +\kappa_{j,M}
    \le
    s^\top Ms
    -2s^\top M U_k
    +U_k^\top M U_k
    +\kappa_{k,M}.
\end{align*}
The common term $s^\top Ms$ cancels. Rearranging the remaining terms gives
\begin{align*}
    2s^\top M(U_k-U_j)
    \le
    U_k^\top M U_k
    -
    U_j^\top M U_j
    +
    \kappa_{k,M}
    -
    \kappa_{j,M},
\end{align*}
which is exactly the halfspace inequality in Proposition~\ref{prop:metric_polyhedra}. Intersecting these finitely many halfspaces over $k\ne j$ proves that $\overline\gR_{j,M}$ is a convex polyhedron.

The deterministic tie rule assigns every common boundary to exactly one minimizing index. It therefore replaces weak by strict inequalities on selected facets, producing disjoint, possibly half-open polyhedral cells $\gR_{j,M}^{\rm tie}$. {For the accepted index set $\gI_\alpha$, the exact accepted score region is $\bigcup_{j\in\gI_\alpha}\gR_{j,M}^{\rm tie}$.} Finally, the affine map $s\mapsto\widehat\mu(x)+s$ preserves polyhedral cells and finite unions.
\end{proof}

\section{Weighted full-conformal localized OTCP}
\label{app:weighted_full_conformal}

This appendix extends the assignment construction of Section~\ref{sec:tacp} to localized transport. Localization weights enter the source marginal of an augmented Kantorovich problem, and a learned local metric yields a full-conformal alternative to the split IMV-OTCP method in Section~\ref{sec:itacp}. We first describe the Euclidean-cost construction, then insert the learned metric.

\subsection{Euclidean-cost weighted construction}
\label{subsec:weighted_localized_otcp}

Directly centering a local procedure at the distinguished test covariate $X_{n+1}$ destroys the symmetry of the augmented sample. Inspired by \cite{hore2024conformal}, we instead let $H_h(\tilde x\mid x)$ be a localization density and draw a randomized center $ \widetilde X_{n+1}\mid X_{n+1}\sim H_h(\cdot\mid X_{n+1})$.
For a realized center $\tilde x$, define
\begin{align}
\label{eq:lotcp_weights}
    p_i(\tilde x)
    =
    \frac{H_h(\tilde x\mid X_i)}
    {\sum_{k=1}^{n+1}H_h(\tilde x\mid X_k)},
    \qquad i\in[n+1].
\end{align}
Conditional on the unordered augmented sample and $\widetilde X_{n+1}=\tilde x$, the posterior probability that observation $i$ occupies the test position is $p_i(\tilde x)$. The same weights therefore serve as the source masses in the localized transport problem.

For each trial response $y$, let $\widehat\pi_{\tilde x}^{\,y} =(\widehat\pi_{\tilde x,ij}^{\,y})_{i,j\in[n+1]}$ solve
\begin{align}
\label{eq:lotcp_transport}
\begin{aligned}
    \widehat\pi_{\tilde x}^{\,y}
    &\in
    \argmin_{\pi\ge0}
    \sum_{i=1}^{n+1}\sum_{j=1}^{n+1}
    \pi_{ij}\|S_i^y-U_j\|^2,\\
    \text{subject to}\qquad
    \sum_{j=1}^{n+1}\pi_{ij}
    &=p_i(\tilde x),\quad i\in[n+1],
    \qquad
    \sum_{i=1}^{n+1}\pi_{ij}=\frac{1}{n+1},\quad j\in[n+1].
\end{aligned}
\end{align}
When $p_i(\tilde x)=1/(n+1)$ for every $i$, an optimal plan can be chosen as the scaled FOTCP assignment. For nonuniform localization weights, \eqref{eq:lotcp_transport} is generally a transportation linear program and a source row may split its mass across several reference points. This leads to two natural conformal constructions.

\paragraph{Randomized conditional transport.} For a positive test-row mass, normalize the last row of the plan,
$a_{\tilde x,j}^y=\frac{\widehat\pi_{\tilde x,n+1,j}^{\,y}}{p_{n+1}(\tilde x)},\, j\in[n+1]$,
and draw $J_{\tilde x}^y\mid\widehat\pi_{\tilde x}^{\,y}\sim\operatorname{Categorical}\{a_{\tilde x,1}^y,\ldots,a_{\tilde x,n+1}^y\}$.
The randomized LOTCP set is
\begin{align*}
    {\widehat\gC}^{\rm LOTCP-rand}
    (X_{n+1},\widetilde X_{n+1})
    =
    \left\{
        y\in\gY:
        \|U_{J_{\widetilde X_{n+1}}^y}\|
        \le
        \widehat q_{1-\alpha}\LRs{\{\|U_i\|\}_{i=1}^{n+1}}
    \right\}.
\end{align*}
A single $V\sim\operatorname{Unif}(0,1)$ is used for inverse-cdf sampling across all trial responses, so the construction defines one coherent random set rather than unrelated pointwise decisions.

\paragraph{Barycentric transport.} Alternatively, for every positive-mass row define $\overline U_{\tilde x,i}^y=\frac{1}{p_i(\tilde x)}\sum_{j=1}^{n+1}\widehat\pi_{\tilde x,ij}^{\,y}U_j$, and $R_{\tilde x,i}^y=\|\overline U_{\tilde x,i}^y\|$.
Let
\begin{align*}
    \widehat q_{1-\alpha}^{\,y}(\tilde x)
    =
    \inf\left\{t:
        \sum_{i=1}^{n+1}p_i(\tilde x)
        \one\{R_{\tilde x,i}^y\le t\}
        \ge1-\alpha
    \right\}.
\end{align*}
The deterministic LOTCP set is
\begin{align*}
    {\widehat\gC}^{\rm LOTCP-bar}
    (X_{n+1},\widetilde X_{n+1})
    =
    \left\{
        y\in\gY:
        R_{\widetilde X_{n+1},n+1}^y
        \le
        \widehat q_{1-\alpha}^{\,y}(\widetilde X_{n+1})
    \right\}.
\end{align*}
The weighted quantile must be formed from the barycentric radii themselves. In general, the barycenters do not follow $\gQ$, so the empirical reference quantile used by the randomized construction is not valid here.

The quadratic cost also gives the localized construction a finite polyhedral representation. Fix $\tilde x$ and write $p_i=p_i(\tilde x)$. For the following result, let $M=M^\top\succ0$ be fixed and replace the Euclidean cost in \eqref{eq:lotcp_transport} by $c_M(s,u)=(s-u)^\top M(s-u)$. The ordinary LOTCP construction corresponds to $M=I$, while the same result applies to IMV-OTCP after its local metric has been fixed. Let $\Pi_{\tilde x}$ denote the transportation polytope defined by the marginal constraints in \eqref{eq:lotcp_transport}, and let $\mathfrak B_{\tilde x}$ be a finite index set for its basic feasible plans, written $\{\pi^b:b\in\mathfrak B_{\tilde x}\}$. 

For every $b\in\mathfrak B_{\tilde x}$, define the unnormalized first moment of its test row and its query-free objective term by
\begin{align*}
    m_b
    =\sum_{j=1}^{n+1}\pi^b_{n+1,j}U_j,\quad
    \kappa_{b,M}
    =\sum_{i=1}^{n}\sum_{j=1}^{n+1}
        \pi^b_{ij}c_M(S_i,U_j)
      +\sum_{j=1}^{n+1}\pi^b_{n+1,j}U_j^\top M U_j.
\end{align*}
The optimality cell associated with $b$ is
\begin{align*}
    \gR_{b,M}(\tilde x)
    =
    \bigcap_{b'\in\mathfrak B_{\tilde x}}
    \left\{
        s\in\R^{{d}}:
        2s^\top M(m_{b'}-m_b)
        \le \kappa_{b',M}-\kappa_{b,M}
    \right\}.
\end{align*}
For the randomized construction and the common draw $V\sim\operatorname{Unif}(0,1)$, set
\begin{align*}
    J_b(V)
    =
    \min\left\{j:
        \frac{1}{p_{n+1}}
        \sum_{\ell=1}^j\pi^b_{n+1,\ell}\ge V
    \right\}.
\end{align*}
For the barycentric construction, set $\overline U_i^b=\frac{1}{p_i}\sum_{j=1}^{n+1}\pi^b_{ij}U_j$, $R_i^b=\|\overline U_i^b\|$ and
\begin{align*}
    q_b
    &=\inf\left\{t:
        \sum_{i=1}^{n+1}p_i\one\{R_i^b\le t\}\ge1-\alpha
    \right\}.
\end{align*}

\begin{proposition}[Polyhedral representation of localized OTCP]
\label{prop:lotcp_polyhedra}
Suppose the optimizer of the localized transportation problem is selected from its basic feasible plans. Conditional on $\tilde x$, the reference cloud, and $V$ for the randomized construction, each $\gR_{b,M}(\tilde x)$ is a convex polyhedron. Apart from optimal-basis tie boundaries, the two accepted score regions are respectively
\begin{align*}
    {\widehat\gA}_{M}^{\rm LOTCP-rand}(\tilde x,V)
    &=
    \bigcup_{\substack{b\in\mathfrak B_{\tilde x}:\\
        \|U_{J_b(V)}\|\le
        \widehat q_{1-\alpha}\LRs{\{\|U_i\|\}_{i=1}^{n+1}}}}
        \gR_{b,M}(\tilde x),\\
    {\widehat\gA}_{M}^{\rm LOTCP-bar}(\tilde x)
    &=
    \bigcup_{\substack{b\in\mathfrak B_{\tilde x}:\\
        R_{n+1}^b\le q_b}}
        \gR_{b,M}(\tilde x).
\end{align*}
Under a deterministic tie-breaking rule, assigning each tie boundary to its selected adjacent cell gives the corresponding exact, possibly half-open, finite polyhedral union. The prediction sets are obtained by translating these score regions by $\widehat\mu(X_{n+1})$.
\end{proposition}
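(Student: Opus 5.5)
The plan is to mirror the proof of Proposition~\ref{prop:metric_polyhedra}, with the permutation set replaced by the vertex set of the transportation polytope $\Pi_{\tilde x}$.

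\emph{Reduction to basic feasible plans.} Fix $\tilde x$, the calibration scores, the reference cloud and, for the randomized construction, $V$. The feasible set $\Pi_{\tilde x}$ depends only on the marginals $p_i(\tilde x)$ and $1/(n+1)$. It therefore does not involve the query score $s=S_{n+1}^y$. For every $s$ the objective in \eqref{eq:lotcp_transport} with cost $c_M$ is linear in $\pi$. By the fundamental theorem of linear programming, the minimum is attained at one of the finitely many basic feasible plans $\pi^b$, $b\in\mathfrak B_{\tilde x}$. Since the optimizer is selected from this set by hypothesis, the whole construction at query $s$ is determined by which $b$ is optimal at $s$.

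\emph{Affine dependence on $s$.} Expand the objective at $\pi^b$:
\[
\sum_{i\le n}\sum_j\pi^b_{ij}c_M(S_i,U_j)+\sum_j\pi^b_{n+1,j}\{s^\top Ms-2s^\top MU_j+U_j^\top MU_j\}
= p_{n+1}s^\top Ms-2s^\top Mm_b+\kappa_{b,M}.
\]
Here I use $\sum_j\pi^b_{n+1,j}=p_{n+1}$, which holds for every feasible plan. The quadratic term $p_{n+1}s^\top Ms$ is therefore common to all $b$ and cancels in any comparison. Thus $b$ is optimal at $s$ if and only if
\[
2s^\top M(m_{b'}-m_b)\le\kappa_{b',M}-\kappa_{b,M}\quad\text{for all } b',
\]
which is exactly the condition $s\in\gR_{b,M}(\tilde x)$. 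This set is a finite intersection of halfspaces, hence a convex polyhedron; some of these sets may be empty or equal to $\R^{d}$. The cells cover $\R^d$, because every $s$ has at least one optimal vertex.

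\emph{Acceptance depends only on the cell.} Next I check that, once $b$ is fixed, the acceptance decision does not depend on $s$.
\begin{itemize}[leftmargin=2em,itemsep=2pt,topsep=2pt]
\item \textbf{Randomized construction.} The normalized test row is $\pi^b_{n+1,\cdot}/p_{n+1}$. With the common draw $V$, the inverse-cdf index is $J_b(V)$. The threshold $\widehat q_{1-\alpha}(\{\|U_i\|\})$ depends only on the reference cloud.
\item \textbf{Barycentric construction.} The barycenters $\overline U^b_i$, the radii $R^b_i$ and the weighted quantile $q_b$ are functions of $\pi^b$, $p$ and the $U_j$ only.
\end{itemize}
In both cases the query is accepted on the interior of $\gR_{b,M}(\tilde x)$ exactly when the stated index condition on $b$ holds. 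The accepted score regions are therefore the displayed unions, up to the boundaries where several bases tie.

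\emph{Tie boundaries.} I expect this to be the main obstacle, but only as a matter of bookkeeping. A deterministic rule chooses one optimal basis at each $s$, so on each shared facet it replaces a weak inequality by a strict one for all but the selected cell. This yields disjoint, possibly half-open polyhedral cells whose union over accepted $b$ is exact. The argument is the same as in Proposition~\ref{prop:metric_polyhedra}. One subtlety has to be handled: two different bases can share the same $(m_b,\kappa_{b,M})$ and hence the same cell. On such a cell the tie rule fixes a single $b$, so the decision is still well defined.

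\emph{Translation and the case $p_{n+1}=0$.} Translating by $\widehat\mu(X_{n+1})$ preserves polyhedra and finite unions, which gives the prediction sets. If $p_{n+1}=0$, then $m_b=0$ and all cells are trivial, so the statement holds vacuously. I will note this degenerate case in the proof.
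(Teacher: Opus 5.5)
Your proposal is correct and follows the paper's proof essentially step for step. You restrict to the finitely many basic feasible plans of $\Pi_{\tilde x}$, expand the objective as $p_{n+1}s^\top Ms-2s^\top Mm_b+\kappa_{b,M}$ so that the common quadratic term cancels and each $\gR_{b,M}(\tilde x)$ is an intersection of halfspaces, note that $J_b(V)$, the barycentric radii and $q_b$ depend only on $\pi^b$, and finish with the tie rule and translation by $\widehat\mu(X_{n+1})$. Your side remarks on distinct bases sharing the same $(m_b,\kappa_{b,M})$ and on the degenerate case $p_{n+1}=0$ go slightly beyond what the paper writes, but they do not change the argument.
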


Problem \eqref{eq:lotcp_transport} is an unregularized discrete Kantorovich linear program and can be solved exactly by specialized transportation or minimum-cost-flow algorithms; in the equal-mass assignment special case, auction or Hungarian algorithms apply \citep{PeyreCuturi2019}. This does not, however, remove the dependence on the trial response. Changing $y$ changes the last row of the cost matrix, and may change the optimal transport basis. Thus a direct pointwise implementation must reoptimize the transport problem for each trial $y$. A parametric or warm-started transportation solver can reuse the current basis until a cell boundary is crossed, but enumerating all cells in $\mathfrak B_{\tilde x}$ in advance can be combinatorial and need not be cheaper than repeated optimization.

\begin{theorem}[Finite-sample validity of weighted full-conformal LOTCP]
\label{thm:weighted_lotcp_validity}
Suppose the $\{X_i,Y_i\}_{i=1}^{n+1}$ are exchangeable, the score construction is permutation-equivariant at the true response, the reference cloud is independent, and the solution of \eqref{eq:lotcp_transport} is selected permutation-equivariantly. Then
\begin{align*}
    \sP\{Y_{n+1}\in{\widehat\gC}^{\rm LOTCP-rand}
    (X_{n+1},\widetilde X_{n+1})\}
    &\ge1-\alpha,\\
    \sP\{Y_{n+1}\in{\widehat\gC}^{\rm LOTCP-bar}
    (X_{n+1},\widetilde X_{n+1})\}
    &\ge1-\alpha.
\end{align*}
The first probability also averages over the common auxiliary draw $V$.
\end{theorem}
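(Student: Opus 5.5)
The plan is to follow the randomized-localization argument of \citet{hore2024conformal}. Conditionally on $\widetilde X_{n+1}$, the test index has a posterior law given by the weights $p_i(\tilde x)$ in \eqref{eq:lotcp_weights}. The same weights enter \eqref{eq:lotcp_transport} as source masses, so the coverage event for the test point can be written as a $p$-weighted average of events over all augmented indices.

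\textbf{Step 1: the conditioning argument.} Let $Z_i=(X_i,Y_i)$. Condition on the multiset $E=\{Z_1,\ldots,Z_{n+1}\}$, on $\widetilde X_{n+1}=\tilde x$, and on the independent reference cloud and $V$. By exchangeability, and because $\widetilde X_{n+1}\mid X_{n+1}\sim H_h(\cdot\mid X_{n+1})$, Bayes' rule gives
\[
\sP\{Z_{n+1}=z_i\mid E,\tilde x\}=p_i(\tilde x).
\]
Here I would assume the $z_i$ are distinct, or otherwise handle ties by working with labeled points.

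\textbf{Step 2: reduction to a weighted average over rows.} At $y=Y_{n+1}$ the score vector is a permutation of the scores of $E$. The weights $p_i$ are permutation-equivariant functions of $E$ and $\tilde x$. The solution of \eqref{eq:lotcp_transport} is selected permutation-equivariantly. Hence, on the event that the test point is $z_i$, the test row of $\widehat\pi$ equals the row associated with $z_i$ in a single plan $\pi^E$ that depends only on $(E,\tilde x)$. The conditional miscoverage probability therefore equals
\[
\sum_i p_i\,\one\{\text{row } i \text{ fails}\}
\]
for the barycentric construction. For the randomized construction it equals
\[
\sum_i p_i\,\sP_V\{\text{row } i \text{ fails}\}.
\]

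\textbf{Step 3a: barycentric case.} Row $i$ fails iff $R_i>\widehat q^{\,y}$, and $\widehat q$ is computed from the same plan. The weighted quantile is defined by $\sum_i p_i\one\{R_i\le \widehat q\}\ge1-\alpha$, so the weighted failure mass is at most $\alpha$.

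\textbf{Step 3b: randomized case.} Integrating over $V$, the test row is assigned $J=j$ with probability $\pi^E_{ij}/p_i$. The total weighted failure is then
\[
\sum_i\sum_j \pi^E_{ij}\,\one\{\|U_j\|>\widehat q\}=\frac{1}{n+1}\,\#\{j:\|U_j\|>\widehat q\}\le\alpha.
\]
This uses the column constraint $\sum_i\pi_{ij}=1/(n+1)$ and the definition of the empirical quantile of $\{\|U_i\|\}$.

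\textbf{Step 4: conclusion.} Taking expectations over $E$, $\tilde x$, the reference cloud and $V$ yields both inequalities.

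The main obstacle is Step 2. I must verify that the plan selected on the actual data, viewed from the test position, coincides with row $i$ of a plan determined by $E$ alone. This is where the permutation-equivariant selection of the LP solution is needed. In the randomized case the common $V$ creates an additional subtlety: the coupling across rows is irrelevant, because only marginal expectations over $V$ enter the bound.
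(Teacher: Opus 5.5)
Your proposal is correct and follows essentially the same route as the paper's proof. Bayes' rule gives the posterior $p_i(\tilde x)$ for the test index given the unordered augmented sample, the center, and the reference cloud, and permutation-equivariant selection fixes the plan in that ordering; the barycentric case then follows from the weighted-quantile definition, and the randomized case from the column marginals $\sum_i\widehat\pi_{ij}=1/(n+1)$, which make $J$ uniform over the reference points. The only cosmetic difference is that you condition on $V$ and then average it out, whereas the paper leaves $V$ unconditioned and writes $\sP(J=j\mid I=i,\mathcal G)=\widehat\pi_{ij}/p_i$ directly.
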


These are finite-sample marginal guarantees, where the probability averages over both the data and the randomized center. 

\subsection{Task-aware metric extension}
\label{subsec:weighted_itacp}

We next record the task-aware extension of the weighted construction. Let $\gD_{\rm cal}=\{(X_i,Y_i)\}_{i=1}^n$ be independent of the data used to learn the metric, and let $U_{1:n+1}$ be an independent final reference cloud. Draw $\widetilde X_{n+1}$ and form the weights $p_i(\tilde x)$ as in \eqref{eq:lotcp_weights}. For a realized center $\tilde x$, use the nearest anchor metric
\[
    {\widehat M(\tilde x)=\widehat M_{a_{j(\tilde x)}}.}
\]
For each trial response $y$, the IMV-OTCP plan $\widehat\pi_{\tilde x}^{\,y,\widehat M}$ solves
\begin{align}
\label{eq:itacp_weighted_metric_ot}
\begin{aligned}
    \widehat\pi_{\tilde x}^{\,y,\widehat M}
    &\in
    \argmin_{\pi\ge0}
    \sum_{i=1}^{n+1}\sum_{j=1}^{n+1}
    \pi_{ij}(S_i^y-U_j)^\top
    \widehat M(\tilde x)(S_i^y-U_j),\\
    \text{subject to}\qquad
    \sum_{j=1}^{n+1}\pi_{ij}
    &=p_i(\tilde x),\quad i\in[n+1],
    \qquad
    \sum_{i=1}^{n+1}\pi_{ij}=\frac{1}{n+1},\quad j\in[n+1].
\end{aligned}
\end{align}
Thus IMV-OTCP differs from LOTCP only through the ground cost: the Euclidean cost in \eqref{eq:lotcp_transport} is replaced by $c_{\widehat M(\tilde x)}$, while the localization weights and target masses remain unchanged.

Accordingly, the two LOTCP acceptance rules carry over without introducing a second fitted transport map. For randomized conditional transport, draw $J_{\tilde x}^{y,\widehat M}$ from the normalized last row of $\widehat\pi_{\tilde x}^{\,y,\widehat M}$, with probabilities $\widehat\pi_{\tilde x,n+1,j}^{\,y,\widehat M}/p_{n+1}(\tilde x)$ for $j\in[n+1]$. The resulting set is
\begin{align*}
    {\widehat\gC}^{\rm IMV-rand}
    (X_{n+1},\widetilde X_{n+1})
    =
    \left\{
        y\in\gY:
        \|U_{J_{\widetilde X_{n+1}}^{y,\widehat M}}\|
        \le
        \widehat q_{1-\alpha}\LRs{\{\|U_i\|\}_{i=1}^{n+1}}
    \right\}.
\end{align*}
For barycentric transport, define $\overline U_{\tilde x,i}^{y,\widehat M}=\frac{1}{p_i(\tilde x)}\sum_{j=1}^{n+1}\widehat\pi_{\tilde x,ij}^{\,y,\widehat M}U_j$ and $R_{\tilde x,i}^{y,\widehat M}=\|\overline U_{\tilde x,i}^{y,\widehat M}\|$.
Let $\widehat q_{1-\alpha}^{\,y,\widehat M}(\tilde x)$ be the weighted $(1-\alpha)$-quantile of these radii under the masses $p_i(\tilde x)$. The barycentric IMV-OTCP set is
\begin{align*}
    {\widehat\gC}^{\rm IMV-bar}
    (X_{n+1},\widetilde X_{n+1})
    =
    \left\{
        y\in\gY:
        R_{\widetilde X_{n+1},n+1}^{y,\widehat M}
        \le
        \widehat q_{1-\alpha}^{\,y,\widehat M}
        (\widetilde X_{n+1})
    \right\}.
\end{align*}

\begin{theorem}[Finite-sample validity of weighted full-conformal IMV-OTCP]
\label{thm:weighted_itacp_validity}
Suppose the assumptions of Theorem~\ref{thm:weighted_lotcp_validity} hold. Assume in addition that the estimation sample is independent of the augmented calibration-test sample, and that $\widehat M(\tilde x)$ is an estimation-sample-measurable function of $\tilde x$ applied identically to all source rows. Then
\begin{align*}
    \sP\{Y_{n+1}\in{\widehat\gC}^{\rm IMV-rand}
    (X_{n+1},\widetilde X_{n+1})\}
    &\ge1-\alpha,\\
    \sP\{Y_{n+1}\in{\widehat\gC}^{\rm IMV-bar}
    (X_{n+1},\widetilde X_{n+1})\}
    &\ge1-\alpha.
\end{align*}
\end{theorem}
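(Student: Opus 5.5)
The plan is to condition on the estimation sample, which turns $\widehat M(\cdot)$ into a fixed deterministic function of the center $\tilde x$. Once that is done, the IMV-OTCP construction in \eqref{eq:itacp_weighted_metric_ot} is exactly the LOTCP construction of Theorem~\ref{thm:weighted_lotcp_validity}, with the Euclidean cost replaced by the cost $c_{\widehat M(\tilde x)}$. The proof then reduces to checking that the argument behind Theorem~\ref{thm:weighted_lotcp_validity} never uses the specific form of the ground cost. It only needs permutation equivariance of the cost in the source index and of the solution selection.

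Concretely, let $\gH$ be the $\sigma$-algebra generated by the estimation sample and the pretrained predictor. By the independence assumption, conditional on $\gH$ the augmented calibration–test sample $\{(X_i,Y_i)\}_{i=1}^{n+1}$ is still exchangeable. The final reference cloud $U_{1:n+1}$ and the common auxiliary draw $V$ remain independent of it.

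I would then redo the LOTCP argument at the true response $y=Y_{n+1}$.

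\emph{Step 1.} The randomized center satisfies $\widetilde X_{n+1}\mid X_{n+1}\sim H_h(\cdot\mid X_{n+1})$. Conditional on the unordered augmented sample and $\widetilde X_{n+1}=\tilde x$, the test index is distributed as $\operatorname{Categorical}\{p_1(\tilde x),\ldots,p_{n+1}(\tilde x)\}$. This is the Bayes computation stated after \eqref{eq:lotcp_weights}, and it involves neither the metric nor the transport.

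\emph{Step 2.} Fix $\tilde x$ and, given $\gH$, the matrix $\widehat M(\tilde x)$. The cost matrix $\{(S_i-U_j)^\top\widehat M(\tilde x)(S_i-U_j)\}_{i,j}$ and the source masses $p_i(\tilde x)$ depend on each source row only through its own data point. So permuting the sample permutes the rows of both. Since the metric is applied identically to all rows and the plan is selected permutation-equivariantly, the plan $\widehat\pi^{\,Y_{n+1},\widehat M}_{\tilde x}$ has rows that are functions of the unordered sample indexed by data point. Hence the set of row-level quantities, namely the row distribution $a_i$ or the barycentric radius $R_i$, is a function of the unordered sample and $\tilde x$, and the test row is one of them with probability $p_i$.

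\emph{Step 3, randomized rule.} Averaging over the test index with weights $p_i$ and over the row-conditional draw $J$, the probability that the test row lands on $U_j$ is $\sum_i \widehat\pi_{ij}=1/(n+1)$ by the target marginal. So the test row maps to a uniformly chosen reference point. The event $\|U_J\|\le\widehat q_{1-\alpha}(\{\|U_i\|\})$ then has probability at least $1-\alpha$, by the definition of the empirical quantile over $n+1$ points.

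\emph{Step 3, barycentric rule.} The conditional probability of coverage equals $\sum_i p_i\one\{R_i\le \widehat q^{\,y,\widehat M}_{1-\alpha}(\tilde x)\}$, and this is at least $1-\alpha$ by the definition of the weighted quantile. One point needs care here: at the true response, the weighted quantile is the same for all choices of test position, because it is computed from the unordered radii and weights.

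Integrating over $\tilde x$, $V$, the reference cloud, and finally $\gH$ gives both inequalities.

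The main obstacle is Step 2 in the barycentric and randomized cases with nonuniform weights. We must make sure that a permutation-equivariant selection rule exists and is compatible with the metric. Ties in the transportation linear program, that is, multiple optimal basic plans, must be broken by a rule depending only on the unordered multiset of (row data, weight) pairs. Because $\widehat M(\tilde x)$ is $\gH$-measurable and shared across rows, any equivariant rule valid for the Euclidean cost remains equivariant here; for example, one can take a lexicographic order on the data values. I would simply invoke the assumed permutation-equivariant selection and note that $\widehat M$ enters symmetrically, so no new argument beyond Theorem~\ref{thm:weighted_lotcp_validity} is required.
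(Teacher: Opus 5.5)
Your proposal is correct and follows essentially the same route as the paper. The paper conditions jointly on $\mathcal G=\sigma(\gD_{\rm aug},\widetilde X_{n+1},U_{1:n+1})$ and $\gD_{\rm est}$, uses independence to keep $\sP(I=i\mid\mathcal G,\gD_{\rm est})=p_i(\tilde x)$, treats $\widehat M(\tilde x)$ and the plan as fixed, and then runs your two computations: $\sum_i p_i\,\widehat\pi_{ij}/p_i=1/(n+1)$ for the randomized rule and $\sum_i p_i\one\{R_i\le\widehat q\}\ge1-\alpha$ for the barycentric rule, with your remark that the weighted cutoff depends only on the unordered radii and weights being a point the paper uses implicitly.
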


\section{Additional experimental results}
\label{app:additional_exp}

\subsection{General task objectives}
\label{subsec:general_task_simulation}
We compare {TA-OTCP} with {SCP} and two decision-focused uncertainty quantification methods of \citet{cortes-gomez2025utilitydirected}, denoted {DFUQ-p} and {DFUQ-g}. {DFUQ-p} incorporates the downstream set loss as a penalty in the conformal score and tunes the penalty weight on a separate estimation sample, whereas {DFUQ-g} constructs a loss-aware ordering through a greedy optimization procedure and subsequently conformalizes the resulting score.
We consider the following two downstream objectives: a quadratic task loss for average RO in Example \ref{example:average_RO} and a smoothed contextual robust-optimization objective in Example \ref{example:smooth_CRO}. All methods use the same residual representation within a setting.

We consider the following two settings. {For a prediction set $\gC$, let $z_\gC(x)$ minimize the corresponding set-induced task criterion over $z$.}
\begin{itemize}[]
\item[(i)] \textit{Integrated quadratic task loss.} Let $X\sim\operatorname{Unif}[-1,1]$, $Y=\mu(X)+\varepsilon$, and $\mu(x)=(x,-x)^\top$. Draw $C\in\{1,2\}$ with probabilities $(2/3,1/3)$ and $D=\sqrt U(\cos\Theta,\sin\Theta)^\top$, where $U\sim\operatorname{Unif}[0,1]$ and $\Theta\sim\operatorname{Unif}[0,2\pi]$ are independent, and set $\varepsilon=m_C+A_CD$, where
\[
m_1=(-0.22,0)^\top,\quad m_2=(0.44,0)^\top,\quad A_1=\begin{pmatrix}0.85&0.20\\0.20&0.18\end{pmatrix},\quad A_2=\begin{pmatrix}0.70&0.16\\0.16&0.16\end{pmatrix}.
\]
For $Q=\operatorname{diag}(1,100)$, the scaled task loss is $\ell_{\rm I}(z,y)=10(z-y)^\top Q(z-y)$. The reported task objective is $\gJ_{\rm I}(\gC)=\E_X[{\int_{\gC(X)}}\ell_{\rm I}\{z_\gC(X),y\}\,dy]$.

\item[(ii)] \textit{Smoothed quadratic CRO.} Let $X\sim N\{(1,1)^\top,2.25I_2\}$, $Y=\mu(X)+\varepsilon$, $\mu(x)=(-x_1-x_2^2,-x_1^2-x_2)^\top$, and $\varepsilon\sim N(0,I_2)$. We use ${\ell_{\rm II}(z,y)}=\frac12(z-y)^\top H(z-y)$, where $z\in\R^2$ and $H=R_{45^\circ}\operatorname{diag}(20\cdot50/51,20/51)R_{45^\circ}^\top$ with $R_{45^\circ}$ the two-dimensional rotation matrix through $45^\circ$. The smoothed contextual robust objective is $\gJ_\tau(\gC)=\E_X[\tau\log{\int_{y\in\gC(X)}\exp\{\ell_{\rm II}(z_\gC(X),y)/\tau\}}\,dy]$ with $\tau=0.25$.

\end{itemize}


\paragraph{Evaluation metrics.}
Besides marginal coverage and set volume, we report the \textit{Task objective} $\widehat\gJ_{\rm I}({\widehat\gC})$ or $\widehat\gJ_{\tau}({\widehat\gC})$, evaluated at the set-induced decision that minimizes the corresponding criterion. Lower values mean that the prediction set is better aligned with the downstream loss. The estimation of the task objective can be found in Appendix \ref{app:task_implementation}.


\begin{figure}[h]
    \centering
    \includegraphics[width=\linewidth]{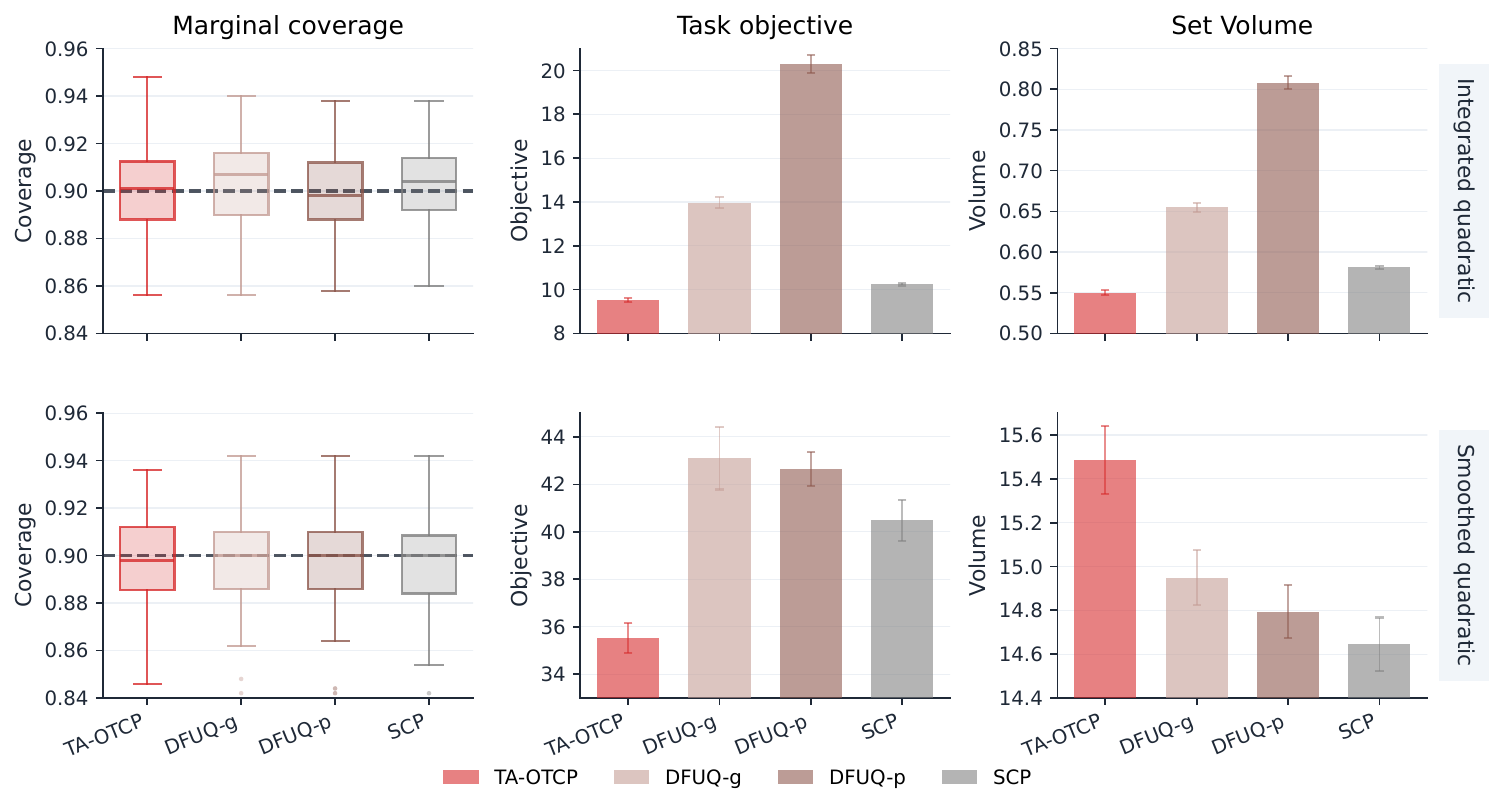}
    \caption{General task-objective experiments. The rows correspond to the integrated and smoothed quadratic settings. The dashed line in the marginal-coverage panels marks the nominal target; lower task objective and volume are preferable.}
    \label{tab:general_task_objectives}
\end{figure}

Figure~\ref{tab:general_task_objectives} shows that all methods remain close to the nominal coverage level. Across both downstream objectives, {TA-OTCP} achieves the smallest task objective, outperforming {SCP} and the two {DFUQ} baselines. The volume results further highlight the distinction between task alignment and set size: under the integrated quadratic objective, {TA-OTCP} also yields the smallest volume, whereas under the smoothed quadratic objective it produces a larger set despite attaining the best task loss. This illustrates that the learned transport geometry is adapting to the downstream objective itself, rather than simply favoring smaller prediction sets.


\subsection{U.S. stock portfolios}
\label{subsec:real_stock}

Following the stock experiment of \citet{chenreddy2022data}, we use daily U.S. equity observations from January 2012 through December 2017, where the response is the vector of percentage changes in adjusted closing prices and the covariates include individual trading volumes and broad market indices. In each replication, we randomly sample 15 stocks from the universe of 64 firms and apply all competing methods to the same selected subset. We use 2012--2014 for training, 2015 for task-aware metric estimation, 2016 for calibration, and 2017 for testing. 

For return vector $y\in\R^{15}$ and long-only portfolio $z\in\gZ:=\{z\ge0:\mathbf 1^\top z=1\}$, let
\begin{align}
    \ell(z,y)=-y^\top z,
    \qquad
    z_C(x)\in\argmin_{z\in\gZ}\sup_{y\in C(x)}\ell(z,y),
    \qquad
    r_C(x)=\sup_{y\in C(x)}\ell\{z_C(x),y\}.\nonumber
\end{align}
We compare against several existing methods at level $1-\alpha=0.9$. {SCP} uses an ellipsoidal score with covariance matrix $\widehat\Sigma$ estimated from the training data, while {CRO} is the ellipsoidal predict-then-calibrate method of \citet{sun2023predict}. We additionally include the end-to-end method of \citet{yeh2024end}, denoted {E2E}, which predicts a covariate-dependent Gaussian ellipsoid and differentiates the realized portfolio loss through both minibatch conformal calibration and the robust portfolio problem during training. Precise definitions are given in Appendix~\ref{app:real_stock}.

\paragraph{Evaluation metrics.}
Besides marginal coverage, we report the \textit{Certificate:}
${n_{\rm test}^{-1}\sum_{i=1}^{n_{\rm test}}}\allowbreak r_{\widehat C}(X_i)$ and the \textit{Decision loss:}
${n_{\rm test}^{-1}\sum_{i=1}^{n_{\rm test}}}(-Y_i^\top z_{\widehat C}(X_i))$.
The former averages the worst-case portfolio loss certified by the calibrated uncertainty set, whereas the latter evaluates the corresponding set-induced portfolio at the realized return. Lower values are better for both metrics.


\begin{figure}[h]
    \centering
    \includegraphics[width=\linewidth]{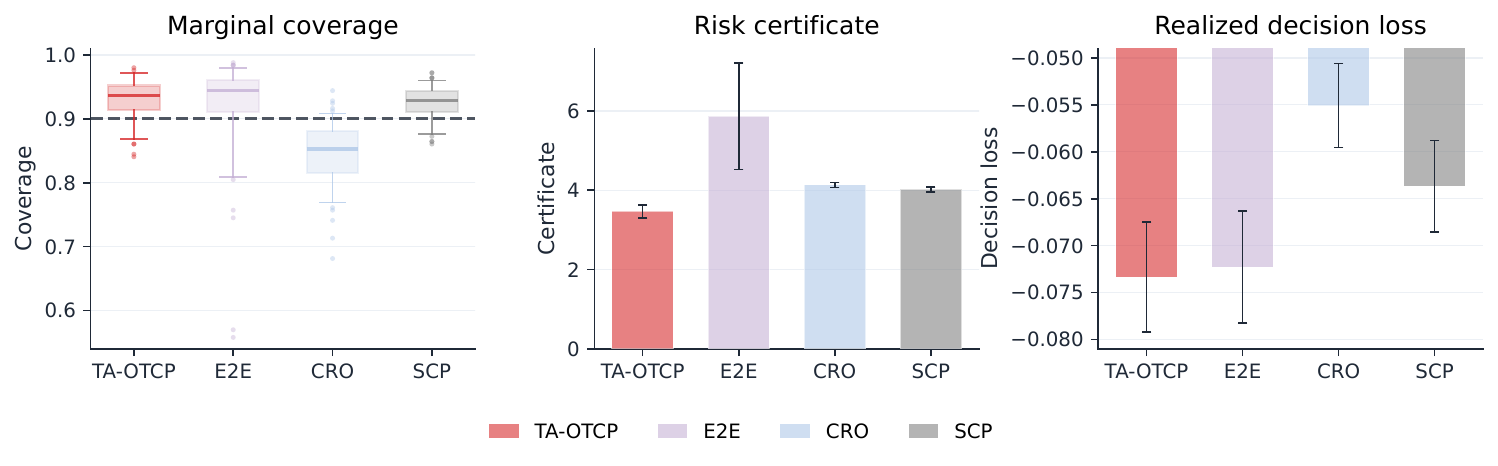}
    \caption{U.S. equity portfolio results for ellipsoidal uncertainty sets. The dashed line is the nominal level $0.9$. Certificate and decision loss are means with normal-approximation $95\%$ intervals (mean $\pm 1.96$ standard errors). Lower is better for the certificate and decision loss.}
    \label{fig:real_stock_results}
\end{figure}

Figure~\ref{fig:real_stock_results} reports the results. All methods exhibit some deviation from the nominal coverage level, which may be due to a certain degree of distribution shift in the data. Among the methods with coverage close to the nominal level, {TA-OTCP} achieves both the smallest mean certificate and the lowest mean decision loss, highlighting the benefit of tailoring the uncertainty-set geometry to the downstream portfolio objective.

\section{Additional experimental details}
\label{app:simulation_protocols}
\label{app:real_data}
Simulations use independent training, calibration, and test samples of sizes $1{,}000/1{,}000/500$, with the training sample also serving as the estimation sample for metric learning. Real-data experiments use separate training, estimation, calibration, and test splits; the exact splitting strategy can be found in Appendix~\ref{app:real_volume}. 
\subsection{Definition of evaluation metrics}
\label{app:eval_metrics}
We provide detailed definitions of the evaluation metrics used in our experiments below.
\begin{itemize}[leftmargin=2.3em]
    \item Worst-case coverage: Let ${\mathcal B_{\rm worst}}$ denote a collection of balls in $\gX$ with $|{\mathcal B_{\rm worst}}|=100$. Specifically, each ball is centered at a randomly selected test point, and its radius is drawn independently from a uniform distribution between $0.04$ and $0.12$ times the width of the covariate support. We report
    \begin{align*}
        \textit{Worst-case}=
    \min_{B\in{\mathcal B_{\rm worst}}}
    \frac{\sum_i\mathbbm{1}(X_i\in B)C_i}
    {\sum_i\mathbbm{1}(X_i\in B)} .
    \end{align*}

    \item $\sqrt{\mathrm{MSCE}}$: For simulations, since $X\sim\mathrm{Unif}(-2,2)$, we partition {the diagnostic support $[-2,2]$ into $40$ equal-width cells} ${\mathcal I_{\rm grid}}$. Because a single replication has only about $12$--$13$ test observations per cell, we pool the cell counts across all $R$ replications and define as:
    \begin{align*}
        \sqrt{\mathrm{MSCE}}_{\rm sim}
        =
        \left[
            \frac{1}{|{\mathcal I_{\rm grid}}|}
            \sum_{J\in{\mathcal I_{\rm grid}}}
            \left\{
                \frac{\sum_{r=1}^R\sum_i\mathbbm{1}\{X_{ri}\in J\}\mathbbm{1}\{Y_{ri}\notin\widehat C(X_{ri})\}}{\sum_{r=1}^R\sum_i\mathbbm{1}\{X_{ri}\in J\}}
                -\alpha
            \right\}^2
        \right]^{1/2}.
    \end{align*} 
    For real data, because regular grids are impractical in high-dimensional covariate spaces, we assess $\sqrt{\mathrm{MSCE}}$ using a mass-controlled random-ball diagnostic. For replication $r$, draw a center index $J_{r}$ randomly from the $n_{\rm test}$ test observations, and independently draw $P_{r}\sim\operatorname{Unif}(0.05,0.20)$. Set $K_{r}=\min\{n_{\rm test},\max(15,\lceil P_{r}n_{\rm test}\rceil)\}$ and let $B_{r}$ contain the $K_{r}$ test points nearest to $X_{J_{r}}$. Hence $B_{r}$ is the empirical $k$-nearest-neighbor ball with random probability mass. Let $\gB_{\rm msce}$ be the set of all $B_{r}$, with $|\gB_{\rm msce}|=100$. We report
    \begin{align*}
            \sqrt{\mathrm{MSCE}}_{{\rm real}}
            =
            \frac{1}{R}\sum_{r=1}^R\left[
                \frac{1}{|\gB_{\rm msce}|}
                \sum_{B_r\in\gB_{\rm msce}}
                \left\{
                    \frac{\sum_{i}\mathbbm{1}\{X_{ri}\in B_r\}\mathbbm{1}\{Y_{ri}\notin\widehat C(X_{ri})\}}{\sum_i\mathbbm{1}\{X_{ri}\in B_r\}}
                    -\alpha
                \right\}^2
            \right]^{1/2}.
    \end{align*} 
\end{itemize}

\subsection{Implementation details of homogeneous experiments}
\label{app:homogeneous_implementation}
Let $X\sim\operatorname{Unif}(-2,2)$ and generate
\begin{align*}
    Y=m(X)+\varepsilon,\qquad
    m(x)=
    \begin{pmatrix}
        1.2x+0.35\sin(\pi x)\\
        -0.8+0.45x^2+0.25\cos(\pi x)
    \end{pmatrix}.
\end{align*}
Let $R(\gamma)$ denote rotation through angle $\gamma$ and write
$e(\theta)=(\cos\theta,\sin\theta)^\top$. For the first two settings, let
$R_0=\{-2\log V\}^{1/2}$ with $V\sim\operatorname{Unif}(0,1)$, draw $\Theta$
with density proportional to $r_j(\theta)^2$, and set
$\varepsilon={A_j}\{R_0r_j(\Theta)e(\Theta)\}+\xi$, where
$\xi\sim N(0,0.015^2I_2)$. We consider the following three residual laws:

\begin{itemize}[leftmargin=2.3em]

\item[(i)] \textit{Five-star.}
\begin{align*}
    r_1(\theta)&=0.70\{1+0.43\cos(5(\theta-0.18))\},\\
    A_1^{(0)}&=R(0.34)\operatorname{diag}(1.55,0.68).
\end{align*}

\item[(ii)] \textit{Clover.}
\begin{align*}
    r_2(\theta)&=0.74\{1+0.40\cos(4(\theta+0.24))\},\\
    A_2^{(0)}&=R(-0.46)\operatorname{diag}(1.42,0.72).
\end{align*}

\item[(iii)] \textit{Exp Fixed.}
\begin{align*}
    r_3(\theta)&=0.58\{1+0.42\cos[3(\theta-0.27)]+0.12\cos(\theta-0.27)\},\\
    A_3&=R(-0.43)\operatorname{diag}(2,1/2).
\end{align*}
Draw $\Theta_3$ with density
$r_3(\theta)^2/\int_{-\pi}^{\pi}r_3(t)^2\,dt$
and independently draw $R_3\sim\operatorname{Gamma}(2,1)$.
The resulting density with respect to area is proportional to
$\exp(-R_3)$, so the population HPD region used as the oracle is available
in exact form.

\end{itemize}

All methods use the same residual $s(x,y)=y-\widehat\mu(x)$, where $\widehat\mu$ is a fifth-degree polynomial ridge regression fit on the training sample. The {SCP} baseline calibrates the scalar Mahalanobis score $\|\widehat\Sigma^{-1/2}s(x,y)\|_2$ with an estimated global covariance $\widehat\Sigma$. The one-matrix {MVCS} baseline jointly learns a global matrix $\widehat\Lambda$ and scalar exponent $\widehat p$ by minimizing its log-volume objective, and then calibrates the score $\|\widehat\Lambda s(x,y)\|_{\widehat p}$. {FOTCP} uses the Euclidean OT cost, while {MV-OTCP} learns a positive definite metric cost $c_M(s,u)=(s-u)^\top M(s-u)$ before final calibration. For {MV-OTCP}, the reference-to-score map is represented by a GradNet \citep{chaudhari2025gradnetot}. Holding the learned metric fixed, final calibration uses the augmented assignment \eqref{eq:final_assignment}.

The {SCP} covariance ridge is $10^{-3}\{\operatorname{tr}(\widehat\Sigma)/2\}I_2$. The {MVCS} matrix and scalar exponent are jointly trained for $200$ epochs using minibatches of size $100$, matrix learning rate $0.004$, exponent learning rate $0.01$, and cosine learning-rate decay, and are frozen before calibration. {MV-OTCP} uses a four-module GradNet of hidden width $24$, $20$ outer metric updates, $500$ map steps in the first update and $200$ thereafter, metric learning rate $0.002$.
The reference law $\gQ$ is $U=R\Theta$, where $R\sim\operatorname{Unif}(0,1)$ and $\Theta$ is uniform on the unit sphere. 
Empirically, we choose the reference cloud $U_{1:n+1}$ as in \citet{ndiaye2025beyond}, that is, let $n+1=n_R n_S+n_o$ and sample $n_S$ points $\theta_1,\dots,\theta_{n_S}$ on the unit sphere, then scale them to form $n_R$ radial shells of the form $\{\frac{r}{n_R}\theta_s:r=1,\dots,n_R\}$ with $n_o$ copies of the origin. In simulations, we use $n_R=20$, $n_S=50$ and $n_o=1$, and is shared by {FOTCP} and {MV-OTCP}.

\subsection{Implementation details of heterogeneous experiments}
\label{app:heterogeneous_implementation}
This appendix provides the coefficient functions used in the heterogeneous simulations of Subsection~\ref{subsec:heterogeneous_simulation}. Let
\begin{align*}
    R(\gamma)
    &=
    \begin{pmatrix}
        \cos\gamma & -\sin\gamma\\
        \sin\gamma & \cos\gamma
    \end{pmatrix},
    \qquad
    m(x)
    =
    \bigl(
        1.2x+0.25\sin(\pi x),\,
        -0.6+0.35x^2
    \bigr)^\top,
\end{align*}
and define
\begin{align*}
    s_{1/2}(x)=\frac{1+\sin(\pi x/2)}{2},
    \qquad
    c_{1/2}(x)=\frac{1+\cos(\pi x/2)}{2},
    \qquad
    s_1(x)=\frac{1+\sin(\pi x)}{2}.
\end{align*}

\paragraph{$X$-varying mixture.}
Let
\begin{align*}
    \{v_1,v_2,v_3,v_4\}
    =
    \bigl\{
        (1,0)^\top,\,
        (0,1)^\top,\,
        (-1,0)^\top,\,
        (0,-1)^\top
    \bigr\},
\end{align*}
and define
\begin{align*}
    p(x)
    &=
    \operatorname{softmax}
    \left[
        1.6\{
            \sin(\pi x/2),
            \cos(\pi x/2),
            -\sin(\pi x/2),
            -\cos(\pi x/2)
        \}
    \right],\\
    \theta_{\rm m}(x)
    &=0.65\pi x,
    \qquad
    {a_{\rm m}(x)}=0.65+0.55s_1(x),
    \qquad
    \tau(x)=0.08+0.12c_{1/2}(x).
\end{align*}

\paragraph{$X$-varying clover.}
The radial boundary and covariate-dependent deformation are given by
\begin{align*}
    b(x)
    &=0.48+0.30s_1(x),
    \qquad
    \delta(x)=0.35\pi x+0.12\sin(\pi x),\\
    q_x(\theta)
    &=1+b(x)\cos\{4(\theta-\delta(x))\},\\
    \rho_{\rm c}(x)
    &=3+\frac{7}{2}\{1+\cos(\pi x)\},
    \qquad
    a_{\rm c}(x)=0.65+0.30\{1+\sin(3\pi x/2+0.3)\},\\
    \gamma_{\rm c}(x)
    &=0.85\pi x+0.25\sin(\pi x),\\
    A_{\rm c}(x)
    &=
    R\{\gamma_{\rm c}(x)\}\,
    a_{\rm c}(x)
    \operatorname{diag}
    \left\{
        \sqrt{\rho_{\rm c}(x)},
        \rho_{\rm c}(x)^{-1/2}
    \right\}.
\end{align*}

\paragraph{Exp transformed.}
Let
\begin{align*}
    \eta=(0.304717,-1.039984)^\top,
    \qquad
    \eta_r=0.750451,
\end{align*}
and define
\begin{align*}
    m(x)
    &=
    2\left[
        \sin(x\eta)
        +\frac{1}{2}\tanh(x^2\eta)
        +(x,0)^\top
    \right],\\
    r(x)
    &=
    \frac{|x|}{2}
    +\frac{(\eta_r x)^2}{10}
    +0.15,
\end{align*}
where the sine and hyperbolic tangent are applied componentwise. The rotation angle is obtained from the anchor locations
\begin{align*}
    a
    =
    (-2.25,-1.55,-1.05,-0.60,-0.20,0.20,0.60,1.05,1.55,2.25)
\end{align*}
and fixed angles
\begin{align*}
    \omega
    =
    (&1.721317,-0.384038,2.253137,1.240100,-2.549859,\\
     &2.988423,1.640789,1.797395,-2.336631,-0.311734),
\end{align*}
drawn once from $\operatorname{Unif}[-\pi,\pi]$ using seed $42$. Define
\begin{align*}
    w_k(x)
    &=
    \frac{(|x-a_k|+10^{-6})^{-4}}
    {\sum_{\ell=1}^{10}(|x-a_\ell|+10^{-6})^{-4}},
    \qquad
    \theta_{\rm e}(x)
    =
    \sum_{k=1}^{10}w_k(x)\omega_k.
\end{align*}
Finally, let $X\sim\operatorname{Unif}[-2,2]$ and
$E=(E_1,E_2)^\top$ with $E_1,E_2\stackrel{\rm iid}{\sim}\operatorname{Exp}(1)$, independently, and generate
\begin{align*}
    Y
    =
    {m(X)}
    +
    r(X)R\{\theta_{\rm e}(X)\}E.
\end{align*}
All methods use the same residual $s(x,y)=y-\widehat\mu(x)$, where $\widehat\mu$ is a fifth-degree polynomial ridge regression fit on an independent training sample. {RLCP} uses $\|s(x,y)\|_2$ as its scalar score with bandwidth $0.2$. For {SCP}, $\widehat\Sigma_x$ is represented by a Cholesky network with one hidden tanh layer of width $16$ and then compute the score $\|\widehat\Sigma_x^{-1/2}s(x,y)\|_2$.
{MVCS} learns an $x$-dependent matrix $\widehat\Lambda(x)$ and one scalar exponent $p$, and applies a global split-conformal quantile to $\|\widehat\Lambda(x)s(x,y)\|_p$ after freezing both fitted quantities. 
For {IMV-OTCP}, we use $15$ empirical-quantile anchors and an RBF kernel with bandwidth of $0.1$.
The reference-to-score and score-to-reference maps are four-module GradNets of hidden width $32$ \citep{chaudhari2025gradnetot}. The first outer iteration uses $600$ map steps and subsequent iterations use $250$; after selecting the best outer iterate, the final score-to-reference map uses $600$ steps and the hybrid loss in \eqref{eq:pointwise_forward_gradnet_loss}. The reference law is the radial law $U=R\Theta$ above, represented by a $20\times50$ grid of equally spaced radial shells and common random directions, without an origin. Each calibration and test point is assigned to its nearest anchor before applying conformalied step.


\subsection{Multivariate-regression volume study}
\label{app:real_volume}

\paragraph{Splitting and common representation.}
Each replication begins with one random permutation. \texttt{Energy} assigns $55\%$ of the observations to training and $15\%$ to each remaining split. The other datasets use $70\%$ for training and $10\%$ for each remaining split. Estimation and calibration samples are capped separately at $1{,}000$ observations, and the full test split is retained. Missing values are replaced by coordinatewise training medians. We then follow \cite{dheur2025unified,braun2025minimum} to fit coordinatewise Gaussian quantile transforms $\widehat Q_X$ and $\widehat Q_Y$ on the training split, using at most $1{,}000$ empirical quantiles, and keep both transforms fixed.

{Let $d_\gX$ denote the original covariate dimension.} To obtain a common covariate representation, let $\widehat P$ be the whitened PCA map fitted to $\widehat Q_X(X)$ on the training split, with
\[
    k=\min\{8,d_\gX,n_{\rm tr}-1\},
    \qquad
    Z=\widehat P\{\widehat Q_X(X)\}\in\R^k.
\]
The same frozen map is applied to estimation, calibration, and test covariates. Thus test observations do not affect either the PCA directions or their whitening scales. All methods receive $Z$ as their covariate, and all localization and random-ball distances are Euclidean distances in this representation. The common center is ridge regression with penalty $1$, fitted on $(Z,\widehat Q_Y(Y))$ from the training split. {Define $S(z,y)=\widehat Q_Y(y)-\widehat\mu(z)$.} Its residual ${S=S(Z,Y)}$ is the common multivariate score input. Set membership and volume are evaluated in these training-fitted response coordinates. Table~\ref{tab:real_volume_data} reports the effective sample sizes.

\begin{table}[H]
\centering
\small
\setlength{\tabcolsep}{4.5pt}
\caption{Real multivariate-regression datasets and effective split sizes. Estimation and calibration are capped at $1{,}000$ observations; the test split is not truncated.}
\label{tab:real_volume_data}
\begin{tabular}{lrrrrrrr}
\toprule
Dataset & $n$ & $d_\gX$ & ${d}$ & $n_{\rm tr}$ & $n_{\rm est}$ & $n_{\rm cal}$ & $n_{\rm test}$ \\
\midrule
\texttt{Energy} & 768 & 8 & 2 & 422 & 115 & 115 & 116 \\
\texttt{Bias} & 7,752 & 22 & 2 & 5,426 & 775 & 775 & 776 \\
\texttt{Taxi} & 61,286 & 6 & 2 & 42,900 & 1,000 & 1,000 & 6,130 \\
\texttt{rf1} & 9,125 & 64 & 8 & 6,387 & 912 & 912 & 914 \\
\texttt{scm1d} & 9,803 & 280 & 16 & 6,862 & 980 & 980 & 981 \\
\bottomrule
\end{tabular}
\end{table}

\paragraph{Method implementations.}
Whenever an ordinary split-conformal threshold is used below, it is the $\lceil(n_{\rm cal}+1)(1-\alpha)\rceil$th order statistic, with the usual $+\infty$ convention when the index exceeds $n_{\rm cal}$. All method-specific hyperparameters were selected in pilot experiments whose random seeds are disjoint from those of the reported replications, and were fixed before formal evaluation.

{SCP} estimates a global covariance $\widehat\Sigma$ from the estimation residuals, adding $10^{-3}\operatorname{tr}(\widehat\Sigma)/{d}$ to its eigenvalues with a numerical floor of $10^{-8}$. It calibrates the Mahalanobis score $\|\widehat\Sigma^{-1/2}S\|_2$, giving the ellipsoid
\[
    \{y:S(z,y)^\top\widehat\Sigma^{-1}S(z,y)\le \widehat q^2\}.
\]

{MVCS} uses the adaptive matrix score $\|\widehat\Lambda(z)S\|_{\widehat p}$. The matrix network has two ReLU hidden layers of width $64$; the network and scalar exponent, initialized at $p=2$, are jointly trained on the estimation residuals for $100$ epochs with batch size $128$, Adam learning rates $0.004$ and $0.01$, respectively, and cosine learning-rate decay. Computation uses double precision for ${d}\ge8$ and single precision otherwise. The fitted matrix network and exponent are frozen before the common split-conformal calibration.


{RLCP} uses the Euclidean residual score $\|S\|_2$. For each test representation $z$, it draws $\widetilde z\sim N(z,h^2I_k)$ and weights calibration point $i$ by
\[
    w_i(\widetilde z)=\exp\{-\|Z_i-\widetilde z\|_2^2/(2h^2)\}.
\]
The randomized threshold is the weighted $(1-\alpha)$ quantile after adding the test covariate with score $+\infty$. Writing $Z_{i,(k)}$ for the $k$th nearest estimation neighbor of $Z_i$, let $k_0=\min\{n_{\rm est}-1,\max(20,\lceil n_{\rm est}/4\rceil)\}$. In the order \texttt{Energy}, \texttt{Bias}, \texttt{Taxi}, \texttt{rf1}, and \texttt{scm1d}, the {RLCP} bandwidth is
\[
    h=(1,0.75,0.75,1,0.75)
    {\operatorname{Median}_{i}}d\{Z_i,Z_{i,(k_0)}\}.
\]

For {IMV-OTCP}, the metric is parameterized by a trace-zero matrix exponential and learned with the implicit-gradient GradNetOT implementation. The common settings are four outer metric updates, two GradNet modules of hidden width $16$, map-step counts $120/60$ for the first/later outer iterations, metric learning rate $0.02$. The score KDE uses the weighted Scott bandwidth specified in Appendix~\ref{subsec:tacp_implementation}. The reference law is $U=R\Theta$ with $R\sim\operatorname{Unif}(0,1)$ and $\Theta$ uniform on the unit sphere; an $n_{\rm est}$-point stratified cloud uses radii $i/n_{\rm est}$ and independent random directions.
Define
\[
    k_h(A)=\min\{n_{\rm est}-1,\max(20,\lceil n_{\rm est}/A\rceil)\},
    \qquad
    h_0(A)={\operatorname{Median}_{i}}
    d\{Z_i,Z_{i,(k_h(A))}\}.
\]
The method fits $A$ anchor-specific metric and transport-map pairs. The anchors are the centers returned by $K$-means on the estimation representations, using ten initializations, and the RBF localization bandwidth is $h=m h_0(A)$. Calibration and test observations are assigned to their nearest anchor; their transported radius is evaluated with that anchor's map, and one global split-conformal threshold is then calibrated from these assigned scores. In the dataset order above, the settings are
\[
\begin{aligned}
    A&=(6,6,8,12,16),\\
    m&=(0.35,0.5,1,0.75,0.75).
\end{aligned}
\]
{These dataset-specific bandwidth and anchor choices were fixed in the pilot experiments with disjoint random seeds described above;} Table~\ref{tab:real_volume_results} uses this same fitted configuration.



\subsection{Implementation details of general task experiments}
\label{app:task_implementation}
For the integrated quadratic setting, $X\sim\operatorname{Unif}[-1,1]$, $\mu(x)=(x,-x)^\top$, and $Y=\mu(X)+m_C+A_CD$, where $C$ has probabilities $(2/3,1/3)$, $D$ is uniform on the unit disk, and
\[
    m_1=(-0.22,0)^\top,
    \quad m_2=(0.44,0)^\top,
    \quad
    A_1=\begin{pmatrix}0.85&0.20\\0.20&0.18\end{pmatrix},
    \quad
    A_2=\begin{pmatrix}0.70&0.16\\0.16&0.16\end{pmatrix}.
\]
The loss is $10(z-y)^\top\operatorname{diag}(1,100)(z-y)$. Numerical integration uses a $61\times61$ residual grid on $[-2.5,2.8]\times[-2,2]$. {TA-OTCP} uses $40$ outer updates, map-step counts $150/75$, metric learning rate $0.006$, implicit damping $0.01$, and $30$ conjugate-gradient steps; the estimation and calibration reference factorizations are $(25,40,0)$ and $(25,40,1)$.

For the smoothed quadratic setting, $X\sim N\{(1,1)^\top,2.25I_2\}$, $\mu(x)=(-x_1-x_2^2,-x_1^2-x_2)^\top$, and $Y=\mu(X)+\varepsilon$ with $\varepsilon\sim N(0,I_2)$. The quadratic curvature matrix is
\[
    {H}
    =
    R_{45^\circ}\operatorname{diag}(20\cdot50/51,20/51)R_{45^\circ}^\top.
\]
The smoothed robust objective uses temperature $0.25$ and a $25\times25$ grid on $[-4,4]^2$. {TA-OTCP} uses six outer updates, map-step counts $300/100$, metric learning rate $0.01$, implicit damping $0.02$, $20$ conjugate-gradient steps, and reference factorizations $(250,4,0)$ and $(250,4,1)$. In both settings, the two finite-grid baselines receive a fresh split-conformal calibration after task-based selection.

\subsection{Implementation details of U.S. stock portfolios}
\label{app:real_stock}

The chronological split is fixed as follows.
\begin{table}[H]
\centering
\small
\caption{Leakage-free chronological split for the portfolio experiment.}
\label{tab:stock_windows}
\begin{tabular}{cccc}
\toprule
Training & Estimation & Calibration & Test \\
\midrule
2012--2014 & 2015 & 2016 & 2017 \\
\bottomrule
\end{tabular}
\end{table}

The common ridge-regression center $\widehat\mu$ and residual covariance $\widehat\Sigma_{\rm tr}$ are estimated from training dates only. Using the symmetric positive-definite square root, the {SCP} score is $ V_i^{\rm SCP}=\lVert\widehat\Sigma_{\rm tr}^{-1/2}S_i\rVert_2$.
Its ordinary split-conformal quantile gives the corresponding ellipsoid. The task-aware metric and parametric score-to-reference map are fitted on 2015 using the empirical version of smoothed-CRO in Example \ref{example:smooth_CRO}. Holding both fixed, {TA-OTCP} calibrates on 2016 the score $ V_i^{\rm TA}=\lVert\widehat T_{\widehat M_{\rm task}}(S_i)\rVert_2$.
For {E2E}, we adapt the ellipsoidal construction of \citet{yeh2024end} to the same four-way chronological split. A neural network predicts a conditional location and Cholesky factor,
\[
    h_\theta(x)=\{\mu_\theta(x),L_\theta(x)\},
    \qquad
    \Sigma_\theta(x)=L_\theta(x)L_\theta(x)^\top,
\]
where the diagonal of $L_\theta(x)$ is passed through softplus. All response standardization is estimated on 2012--2014 and the displayed parameters are mapped back to percentage-return units. The scalar nonconformity score and final uncertainty set are
\begin{align*}
    V_i^{\rm E2E}
    &=\{Y_i-\mu_\theta(X_i)\}^\top
      \Sigma_\theta(X_i)^{-1}
      \{Y_i-\mu_\theta(X_i)\},\\
    \widehat C_{\rm E2E}(x)
    &=\left\{y:
      \{y-\mu_\theta(x)\}^\top\Sigma_\theta(x)^{-1}
      \{y-\mu_\theta(x)\}
      \le \widehat q_{\rm E2E}
      \right\},
\end{align*}
where $\widehat q_{\rm E2E}$ is the ordinary plus-one split-conformal quantile computed on 2016.


We implement Algorithm~2 of \citet{sun2023predict} using 2015 as their preliminary-calibration sample and 2016 as their scalar-adjustment sample. {Write $S_i=Y_i-\widehat\mu(X_i)$. For {CRO}, $\widehat g(x)$ estimates the $(1-\alpha)$-quantile of $\|S_i\|_2$ using the 2015 data, and $\widehat\Sigma_{\rm CRO}$ is the regularized covariance matrix of the corresponding scaled residuals $S_i/\widehat g(X_i)$.}
The scalar calibration score is
\[
    V_i^{\rm CRO}=\frac{
      \{S_i^\top{\widehat\Sigma_{\rm CRO}^{-1}}S_i\}^{1/2}}
      {\widehat g(X_i)}.
\]
Let $\widehat\eta$ be the $\lceil(n_{\rm cal}+1)(1-\alpha)\rceil$th order statistic of the 2016 scores. The resulting set is
\[
    \widehat C_{\rm CRO}(x)
    =\left\{y:\
      \{(y-\widehat\mu(x))^\top{\widehat\Sigma_{\rm CRO}^{-1}}
      (y-\widehat\mu(x))\}^{1/2}
      \le \widehat\eta\widehat g(x)\right\}.
\]

\section{Additional proofs}
\label{app:proofs}

\subsection{Proof of Lemma~\ref{lem:metric_brenier}}
\label{app:proof_metric_brenier}

\begin{proof}
Let $A=M^{1/2}$, $\widetilde\gP=A_\#\gP$, and $\widetilde\gQ=A_\#\gQ$. For any measurable map $T$ with $T_\#\gP=\gQ$, define the transformed map $\widetilde T(\tilde s)=A T(A^{-1}\tilde s)$. Then $\widetilde T_\#\widetilde\gP=\widetilde\gQ$. Moreover, by the change of variables $\tilde s=As$,
\begin{align*}
    \int (s-T(s))^\top M(s-T(s))\,d\gP(s)
    &=
    \int \|A s-A T(s)\|^2\,d\gP(s)\\
    &=
    \int \|\tilde s-\widetilde T(\tilde s)\|^2
    \,d\widetilde\gP(\tilde s).
\end{align*}
Therefore $T$ minimizes the metric-cost Monge problem from $\gP$ to $\gQ$ if and only if $\widetilde T$ minimizes the ordinary squared-Euclidean Monge problem from $\widetilde\gP$ to $\widetilde\gQ$.

Since $\gP$ is absolutely continuous and $A$ is invertible, $\widetilde\gP$ is also absolutely continuous. Brenier's theorem implies that there exists a unique Monge solution
\[
    \widetilde T
    =
    \argmin_{\widetilde T_\#\widetilde\gP=\widetilde\gQ}
    \int \|\tilde s-\widetilde T(\tilde s)\|^2
    \,d\widetilde\gP(\tilde s).
\]
Moreover, there exists a convex function $\psi$ such that
\begin{align*}
    \widetilde T(\tilde s)=\nabla\psi(\tilde s)
    \qquad
    \widetilde\gP\text{-a.e. }\tilde s .
\end{align*}
Substituting $\tilde s=As$ gives $AT(s)=\nabla\psi(As)$. Conversely, if a measurable map $T$ satisfies $T_\#\gP=\gQ$ and $AT(s)=\nabla\psi(As)$ for some convex $\psi$, then $\widetilde T(\tilde s)=AT(A^{-1}\tilde s)=\nabla\psi(\tilde s)$ pushes $\widetilde\gP$ to $\widetilde\gQ$. By Brenier's theorem, $\widetilde T$ is the ordinary squared-Euclidean Monge solution from $\widetilde\gP$ to $\widetilde\gQ$. The reduction above then implies that $T$ is the metric-cost Monge solution from $\gP$ to $\gQ$.

It remains to prove the inverse statement. Suppose now that $\gQ$ is also absolutely continuous with respect to Lebesgue measure. Since $A$ is invertible, $\widetilde\gQ$ is absolutely continuous as well. Let $G$ be the solution of the reverse metric-cost Monge problem from $\gQ$ to $\gP$, and define
\[
    \widetilde G(\tilde u)=A G(A^{-1}\tilde u).
\]
By the same reduction as above, $\widetilde G$ is the unique Brenier map from $\widetilde\gQ$ to $\widetilde\gP$ for the ordinary squared-Euclidean cost. The forward Brenier map $\widetilde T$ induces the optimal coupling
\[
    \widetilde\pi=(\mathrm{Id},\widetilde T)_\#\widetilde\gP
\]
between $\widetilde\gP$ and $\widetilde\gQ$. Since the squared-Euclidean cost is symmetric, the reversed coupling
\[
    \widetilde\pi^{\mathrm{rev}}=(\widetilde T,\mathrm{Id})_\#\widetilde\gP
\]
is an optimal coupling between $\widetilde\gQ$ and $\widetilde\gP$.
Hence this reversed optimal coupling must be induced by $\widetilde G$, and therefore
\[
    \widetilde G\circ \widetilde T(\tilde s)=\tilde s
    \qquad
    \widetilde\gP\text{-a.e. }\tilde s,
    \qquad
    \widetilde T\circ \widetilde G(\tilde u)=\tilde u
    \qquad
    \widetilde\gQ\text{-a.e. }\tilde u .
\]
Returning to the original variables $\tilde s=As$ and $\tilde u=Au$ gives
\[
    G\circ T(s)=s
    \qquad
    \gP\text{-a.e. }s,
    \qquad
    T\circ G(u)=u
    \qquad
    \gQ\text{-a.e. }u .
\]
This proves that $T$ and $G$ are a.e. inverses of each other.
\end{proof}




\subsection{Proof of Theorem~\ref{thm:tacp_validity}}
\label{app:proof_tacp_validity}

\begin{proof}
Let $U_{1:n+1}=\{U_1,\ldots, U_{n+1}\}$.
Condition on the pre-calibration fit $(\widehat\mu,\widehat M)$ and on $U_{1:n+1}$. Set $y=Y_{n+1}$ and write $Z_i=(X_i,Y_i)$. The augmented observations $Z_1,\dots,Z_{n+1}$ are exchangeable. Let ${\gD_{\rm aug}}=\{Z_1,\dots,Z_{n+1}\}$ denote the unordered augmented sample, and fix an ordered representative $z_{1:n+1}$ of this multiset.

For any permutation $\tau$ of $[n+1]$, let $z^\tau_{1:n+1}=(z_{\tau(1)},\dots,z_{\tau(n+1)})$. Because the objective in \eqref{eq:final_assignment} applies the same fixed metric $\widehat M$ to every augmented score and because ties are resolved permutation-equivariantly, the selected assignment satisfies
\begin{align*}
    \widehat\sigma_{\widehat M}(z^\tau_{1:n+1};U_{1:n+1})
    =
    \widehat\sigma_{\widehat M}(z_{1:n+1};U_{1:n+1})\circ\tau .
\end{align*}
Thus relabeling the augmented sample relabels the source index in exactly the same way, while preserving the reference point assigned to each unordered observation.

Conditional on ${\gD_{\rm aug}}$ and $U_{1:n+1}$, exchangeability implies that the distinguished test observation is uniformly distributed over the $n+1$ elements of ${\gD_{\rm aug}}$. Since the assignment is a permutation, the reference point assigned to a uniformly selected source index is uniformly distributed over $U_1,\dots,U_{n+1}$. Therefore
\begin{align*}
    U_{\widehat\sigma_{\widehat M}^{\,Y_{n+1}}(n+1)}
    \stackrel{d}{=} U_J
    \mid {\gD_{\rm aug}}, U_{1:n+1},
    \qquad
    J\sim\operatorname{Unif}\{1,\dots,n+1\}.
\end{align*}
By construction of $\widehat q_{1-\alpha}\LRs{\{\|U_i\|\}_{i=1}^{n+1}}$, at least $\lceil(n+1)(1-\alpha)\rceil$ of these values are no larger than the threshold. Hence
\begin{align*}
    \sP\left\{
        \left\|U_{\widehat\sigma_{\widehat M}^{\,Y_{n+1}}(n+1)}\right\|_2
        \le
        \widehat q_{1-\alpha}\LRs{\{\|U_i\|\}_{i=1}^{n+1}}
        \mid
        {\gD_{\rm aug}},U_{1:n+1}
    \right\}
    \ge
    1-\alpha .
\end{align*}
The event inside the probability is exactly $Y_{n+1}\in{\widehat\gC}^{\rm MV}(X_{n+1})$. Averaging over the conditioning variables gives the lower bound. If the reference radii are distinct, exactly $\lceil(n+1)(1-\alpha)\rceil$ destinations are accepted, giving the stated equality and upper bound.
\end{proof}

\subsection{Proof of Theorem~\ref{thm:tacp_uniform_deviation}}
\label{app:proof_tacp_uniform_deviation}
We first introduce several auxiliary lemmas.

\begin{lemma}[Covering number of regularized positive definite metrics]
\label{lem:spd_covering}
Let
\begin{align*}
    \gM_{\rm sp}(\underline\lambda,\overline\lambda)
    =
    \left\{
        M=M^\top\succ0:
        \lambda_{\min}(M)\ge \underline\lambda,
        \lambda_{\max}(M)\le \overline\lambda
    \right\},
\end{align*}
where $0<\underline\lambda\le\overline\lambda<\infty$. Then, with $m={d}({d}+1)/2$ and $R_\gM=\sqrt{{d}}\,\overline\lambda$,
\begin{align}
\label{eq:spd_covering}
    N(\epsilon,\gM_{\rm sp},\|\cdot\|_F)
    \le
    \left(1+\frac{2R_\gM}{\epsilon}\right)^m,
    \qquad
    \epsilon>0 .
\end{align}
\end{lemma}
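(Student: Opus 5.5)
We will prove Lemma~\ref{lem:spd_covering} by the standard volumetric argument, after embedding $\gM_{\rm sp}$ isometrically into a Euclidean space of dimension $m={d}({d}+1)/2$. Let $\mathbb S^{d}$ be the space of real symmetric ${d}\times{d}$ matrices with the Frobenius inner product $\langle A,B\rangle=\operatorname{tr}(A^\top B)$. This is a real inner-product space of dimension $m$. Concretely, the map sending $A$ to the vector $(A_{11},\ldots,A_{dd},\sqrt2 A_{ij})_{i<j}\in\sR^m$ is a linear isometry onto $(\sR^m,\|\cdot\|_2)$. Hence covering numbers of subsets of $\mathbb S^{d}$ in $\|\cdot\|_F$ coincide with Euclidean covering numbers in $\sR^m$.

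\textbf{Radius bound.} First we check that $\gM_{\rm sp}$ lies in the Frobenius ball of radius $R_\gM=\sqrt{{d}}\,\overline\lambda$ centred at $0$. For $M\in\gM_{\rm sp}$ with eigenvalues $\lambda_1,\ldots,\lambda_{d}\in[\underline\lambda,\overline\lambda]$,
\[
\|M\|_F^2=\sum_{k=1}^{d}\lambda_k^2\le {d}\,\overline\lambda^2 .
\]
So $\gM_{\rm sp}\subseteq \sB_F(0,R_\gM)$. The lower eigenvalue bound plays no role here; it only makes the set a strict subset of the ball.

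\textbf{Packing argument.} Next we bound the covering number by a packing number. Fix $\epsilon>0$ and choose a maximal $\epsilon$-separated subset $\{M_1,\ldots,M_K\}\subseteq\gM_{\rm sp}$, meaning $\|M_i-M_j\|_F>\epsilon$ for $i\ne j$. A maximal finite subset exists because the volume bound below caps the cardinality of any such subset. By maximality, every $M\in\gM_{\rm sp}$ lies within $\epsilon$ of some $M_i$, so $N(\epsilon,\gM_{\rm sp},\|\cdot\|_F)\le K$. This also yields an internal cover, which is what later union-bound arguments over $\gM_0\subseteq\gM_{\rm sp}$ need.

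The open balls $\sB_F(M_i,\epsilon/2)$ are pairwise disjoint, and all of them lie in $\sB_F(0,R_\gM+\epsilon/2)$. Comparing $m$-dimensional Lebesgue volumes through the isometry gives
\[
K(\epsilon/2)^m\le (R_\gM+\epsilon/2)^m,
\qquad\text{so}\qquad
K\le\Bigl(1+\frac{2R_\gM}{\epsilon}\Bigr)^m,
\]
which is \eqref{eq:spd_covering}.

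\textbf{Main obstacle.} The only delicate point is bookkeeping of the dimension: volumes must be taken in the $m$-dimensional space $\mathbb S^{d}$, not in $\sR^{{d}^2}$. Working in $\sR^{{d}^2}$ would give an exponent ${d}^2$, still consistent with the ${d}^2\log$ term in Theorem~\ref{thm:tacp_uniform_deviation} but weaker than claimed. The $\sqrt2$-scaled coordinates make the isometry exact, so no distortion constants enter.

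We also note that $\gM_0\subseteq\gM_{\rm sp}$. Covering numbers of a subset with centres restricted to the subset are at most $N(\epsilon/2,\cdot)$ of the superset, and the internal packing cover above in fact applies directly to $\gM_0$ by the same argument. Hence the same bound holds for $\gM_0$, which is the form used in the proof of Theorem~\ref{thm:tacp_uniform_deviation}.
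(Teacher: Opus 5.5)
Your proof is correct and follows the same route as the paper: bound $\|M\|_F\le\sqrt{d}\,\overline\lambda$, treat the symmetric matrices as an $m$-dimensional Euclidean space under the Frobenius norm, and apply the volumetric bound for a ball. Your explicit maximal-separated-set argument spells out the step the paper only cites as standard, and it also shows that the net centres can be taken inside $\gM_0$, which the later union-bound argument needs.
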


\begin{proof}
The vector space of real symmetric ${d}\times {d}$ matrices has dimension $m={d}({d}+1)/2$ under the Frobenius inner product. If $M\in\gM_{\rm sp}(\underline\lambda,\overline\lambda)$, then
\begin{align*}
    \|M\|_F
    =
    \left(\sum_{j=1}^{{d}}\lambda_j^2(M)\right)^{1/2}
    \le
    \sqrt{{d}}\,\overline\lambda
    =
    R_\gM .
\end{align*}
Thus $\gM_{\rm sp}$ is contained in an $m$-dimensional Euclidean ball of radius $R_\gM$. The standard volumetric covering bound for a Euclidean ball gives \eqref{eq:spd_covering}.
\end{proof}

\begin{assumption}[Primitive global objective regularity]
\label{ass:tacp_objective_primitives}
\begin{enumerate}[label=(\roman*),nosep,leftmargin=*]
\item On ${\bigcup_{M\in\gM_0}G_M(\sB_{1-\alpha})}$, the score density is $L_{f_P}$-Lipschitz and satisfies $f_P\ge f_{P,\min}>0$.
\item The maps $G_M$ are differentiable almost everywhere on $\sB_{1-\alpha}$ and satisfy the Monge--Amp\`ere identity. For every $\epsilon>0$,
\[
    \sup_{M_c\in\gM_0}
    \E_U\left[
        \sup_{M\in\gM_0(M_c,\epsilon)}
        \|G_M(U)-G_{M_c}(U)\|
        \one\{U\in\sB_{1-\alpha}\}
    \right]
    \le L_G\epsilon.
\]
\end{enumerate}
\end{assumption}

\begin{lemma}[Verification of global objective regularity]
\label{lem:brenier_objective_modulus}
Under Assumption~\ref{ass:tacp_objective_primitives}, define
\begin{align*}
    B_\zeta
    =
    \frac{1}{f_{P,\min}},
    \qquad
    L_\zeta
    =
    \frac{L_{f_P}L_G}{f_{P,\min}^2}.
\end{align*}
Then $\sup_{M\in\gM_0}|\zeta_M(U)|\le B_\zeta$ almost surely and, for every $\epsilon>0$,
\begin{align}
    \sup_{M_c\in\gM_0}
    \E\left[
        \sup_{\substack{M\in\gM_0:\\ \|M-M_c\|_F\le\epsilon}}
        |\zeta_M(U)-\zeta_{M_c}(U)|
    \right]
    \le
    L_\zeta\epsilon .\nonumber
\end{align}
Consequently, Assumption~\ref{ass:tacp_metric_learning}(i) holds with these constants.
\end{lemma}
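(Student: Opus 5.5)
The plan is to rewrite $\zeta_M$ through the Monge--Amp\`ere identity so that the Hessian determinant becomes a density ratio. Differentiating $G_M(u)=M^{-1/2}\nabla g_M(M^{1/2}u)$ gives
\[
\nabla G_M(u)=M^{-1/2}\nabla^2 g_M(M^{1/2}u)M^{1/2},
\qquad
\det\{\nabla G_M(u)\}=\det\{\nabla^2 g_M(M^{1/2}u)\}.
\]
Since $(G_M)_\#\gQ=\gP$ and $G_M$ satisfies the Monge--Amp\`ere identity (Assumption~\ref{ass:tacp_objective_primitives}(ii)), we have
\[
f_Q(u)=f_P\{G_M(u)\}\,\det\{\nabla G_M(u)\}
\]
for almost every $u\in\sB_{1-\alpha}$. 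Hence
\[
\zeta_M(u)=\one\{u\in\sB_{1-\alpha}\}\,\frac{1}{f_P\{G_M(u)\}}
\]
almost surely. The factor $f_Q(u)^{-1}$ cancels, which is why neither constant involves $f_Q$.

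\textbf{Boundedness.} For $u\in\sB_{1-\alpha}$ the point $G_M(u)$ lies in $\bigcup_{M\in\gM_0}G_M(\sB_{1-\alpha})$, where $f_P\ge f_{P,\min}$ by Assumption~\ref{ass:tacp_objective_primitives}(i). Therefore $|\zeta_M(U)|\le 1/f_{P,\min}=B_\zeta$ almost surely, uniformly in $M\in\gM_0$. The exceptional null sets depend on $M$. I would handle this either by working with a version of $\zeta_M$ defined through the right-hand side above, or by noting that only countably many $M$ are needed after restricting to a dense subset. This is a measure-theoretic technicality that I will flag rather than grind through.

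\textbf{Lipschitz modulus.} On $\{U\in\sB_{1-\alpha}\}$, use $|1/a-1/b|=|a-b|/(ab)$ together with the lower bound and the Lipschitz property of $f_P$:
\[
|\zeta_M(U)-\zeta_{M_c}(U)|\le \frac{|f_P\{G_M(U)\}-f_P\{G_{M_c}(U)\}|}{f_{P,\min}^2}\le \frac{L_{f_P}}{f_{P,\min}^2}\,\|G_M(U)-G_{M_c}(U)\|.
\]
Next take the supremum over $M\in\gM_0(M_c,\epsilon)$ and then the expectation. Finally apply the uniform bound in Assumption~\ref{ass:tacp_objective_primitives}(ii) and take the supremum over $M_c$. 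This gives the modulus $L_{f_P}L_G\epsilon/f_{P,\min}^2=L_\zeta\epsilon$.

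The Lipschitz step requires the segment between $G_M(U)$ and $G_{M_c}(U)$ to lie where $f_P$ is Lipschitz. If the union is not convex, I would read ``$L_{f_P}$-Lipschitz on the union'' as a two-point condition, which is all the bound uses. With that reading, Assumption~\ref{ass:tacp_metric_learning}(i) follows.

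The main obstacle is justifying the Monge--Amp\`ere substitution, with its almost-everywhere and measurability issues. The remaining steps are elementary.
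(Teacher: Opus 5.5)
Your proposal is correct and follows the paper's proof essentially step for step. Both use the Monge--Amp\`ere rewrite $\zeta_M(u)=\one\{u\in\sB_{1-\alpha}\}/f_P\{G_M(u)\}$, the envelope bound from $f_P\ge f_{P,\min}$, and the modulus via $|1/a-1/b|=|a-b|/(ab)$, the Lipschitz property of $f_P$, and Assumption~\ref{ass:tacp_objective_primitives}(ii). The paper passes over the measure-theoretic caveats you flag ($M$-dependent null sets, measurability of the inner supremum) silently. Your convexity worry is moot, since being Lipschitz on a set is by definition a two-point condition.
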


\begin{proof}
The metric Brenier representation gives
\begin{align*}
    \nabla_uG_M(u)
    =
    M^{-1/2}\nabla^2g_M(M^{1/2}u)M^{1/2},
    \qquad
    |\det \nabla_uG_M(u)|
    =
    \det\{\nabla^2g_M(M^{1/2}u)\}.
\end{align*}
Since $(G_M)_\#\gQ=\gP$, the Monge--Amp\`ere identity gives, for $\gQ$-a.e. $u$,
\begin{align}
    f_Q(u)
    =
    f_P(G_M(u))|\det \nabla_uG_M(u)|
    =
    f_P(G_M(u))
    \det\{\nabla^2g_M(M^{1/2}u)\}.\nonumber
\end{align}
Therefore the Jacobian factor in \eqref{eq:task_loss_hessian_form} can be rewritten as
\begin{align}
\label{eq:zeta_density_form}
    \zeta_M(u)
    =
    \one\{u\in\sB_{1-\alpha}\}
    \frac{1}{f_P(G_M(u))}.
\end{align}
Then the envelope bound follows immediately from \eqref{eq:zeta_density_form} and Assumption~\ref{ass:tacp_objective_primitives}(i):
\begin{align*}
    |\zeta_M(U)|
    \le
    1/f_{P,\min}
    =
    B_\zeta .
\end{align*}
For the modulus, fix $M_c\in\gM_0$ and $M\in\gM_0$ with $\|M-M_c\|_F\le\epsilon$. On $\{U\in\sB_{1-\alpha}\}$, write $S=G_M(U)$ and $S_c=G_{M_c}(U)$. Using \eqref{eq:zeta_density_form},
\begin{align*}
    |\zeta_M(U)-\zeta_{M_c}(U)|
    &\le
    \one\{U\in\sB_{1-\alpha}\}
    \frac{L_{f_P}}{f_{P,\min}^2}\|S-S_c\|.
\end{align*}
Taking the supremum over all such $M$, then expectation over $U$, and applying Assumption~\ref{ass:tacp_objective_primitives}(ii) proves the result.
\end{proof}

\begin{proof}[Proof of Theorem~\ref{thm:tacp_uniform_deviation}]
\textbf{Step 1: Reduction to a finite metric net.}
It is enough to control the infeasible empirical process
\begin{align*}
    \sup_{M\in\gM_0}|\gJ_N(M)-\gJ(M)|,
\end{align*}
because the definition of $\mathcal E_N^{\rm obj}=\sup_{M\in\gM_0}|\widehat\gJ(M)-\gJ_N(M)|$ and the triangle inequality give
\begin{align}
    \label{eq:objective_deviation_reduction}
    \sup_{M\in\gM_0}
    |\widehat\gJ(M)-\gJ(M)|
    \le
    \mathcal E_N^{\rm obj}+
    \sup_{M\in\gM_0}|\gJ_N(M)-\gJ(M)|.
\end{align}
Fix $\epsilon>0$ and let $\mathcal N_\epsilon=\{M_1,\dots,M_K\}$ be an $\epsilon$-net of $\gM_0$ under $\|\cdot\|_F$. By Lemma~\ref{lem:spd_covering} and the spectral bounds defining $\gM_0$,
\begin{align}
\label{eq:tacp_covering_number_proof}
    K
    \le
    \left(1+\frac{2R_\gM}{\epsilon}\right)^m .
\end{align}
For each center $M_j$, define the local oscillation envelope
\begin{align*}
    H_j(U)
    =
    \sup_{\substack{M\in\gM_0:\\ \|M-M_j\|_F\le\epsilon}}
    |\zeta_M(U)-\zeta_{M_j}(U)|.
\end{align*}
For any $M\in\gM_0$, there exists $M_j$ such that $\|M-M_j\|_F\le\epsilon$ by the definition of $\mathcal N_\epsilon$. Then
\begin{align}
    |\gJ_N(M)-\gJ(M)|
    &\le
    |\gJ_N(M_j)-\gJ(M_j)|
    +
    |\gJ_N(M)-\gJ_N(M_j)|
    +
    |\gJ(M)-\gJ(M_j)|\nonumber\\
    &=
    |\gJ_N(M_j)-\gJ(M_j)|
    +
    \left|
        \frac{1}{N}\sum_{i=1}^N
        \{\zeta_M(U_i)-\zeta_{M_j}(U_i)\}
    \right|\nonumber\\
    &\quad+
    \left|
        \E\{\zeta_M(U)-\zeta_{M_j}(U)\}
    \right|\nonumber\\
    &\le
    |\gJ_N(M_j)-\gJ(M_j)|
    +
    \frac{1}{N}\sum_{i=1}^N
    |\zeta_M(U_i)-\zeta_{M_j}(U_i)|
    \nonumber\\
    &\quad+
    \E|\zeta_M(U)-\zeta_{M_j}(U)|\nonumber\\
    &\le
    |\gJ_N(M_j)-\gJ(M_j)|
    +
    \frac{1}{N}\sum_{i=1}^N H_j(U_i)
    +
    \E H_j .\nonumber
\end{align}
Assumption~\ref{ass:tacp_metric_learning}(i) gives $0\le H_j\le 2B_\zeta$ a.s. and $\E H_j\le L_\zeta\epsilon$. Consequently,
\begin{align}
\label{eq:net_reduction_tacp}
    \sup_{M\in\gM_0}
    |\gJ_N(M)-\gJ(M)|
    \le
    \max_{1\le j\le K}|\gJ_N(M_j)-\gJ(M_j)|
    +
    \max_{1\le j\le K}\frac{1}{N}\sum_{i=1}^N H_j(U_i)
    +
    L_\zeta\epsilon .
\end{align}
\textbf{Step 2: Concentration on the net.}
For a fixed $j$, the variables $\zeta_{M_j}(U_i)$ are independent and bounded by $B_\zeta$. Hoeffding's inequality implies
\begin{align*}
    \sP\left\{
        |\gJ_N(M_j)-\gJ(M_j)|>t
    \right\}
    \le
    2\exp\left(-\frac{Nt^2}{2B_\zeta^2}\right).
\end{align*}
Taking a union bound over $j\in[K]$ gives, with probability at least $1-\delta/2$,
\begin{align*}
    \max_{1\le j\le K}
    |\gJ_N(M_j)-\gJ(M_j)|
    \le
    B_\zeta\sqrt{
        \frac{2\{\log(4K)+\log(1/\delta)\}}{N}
    } .
\end{align*}
Similarly, for each $j$, Hoeffding's inequality applied to $H_j\in[0,2B_\zeta]$ gives
\begin{align*}
    \sP\left\{
        \frac{1}{N}\sum_{i=1}^N H_j(U_i)
        >
        \E H_j+t
    \right\}
    \le
    \exp\left(-\frac{Nt^2}{2B_\zeta^2}\right).
\end{align*}
Another union bound gives, with probability at least $1-\delta/2$,
\begin{align*}
    \max_{1\le j\le K}
    \frac{1}{N}\sum_{i=1}^N H_j(U_i)
    \le
    L_\zeta\epsilon
    +
    B_\zeta\sqrt{
        \frac{2\{\log(2K)+\log(1/\delta)\}}{N}
    } .
\end{align*}
\textbf{Step 3: Combine above results.}
Combining the last two displays with \eqref{eq:net_reduction_tacp}, and using \eqref{eq:tacp_covering_number_proof}, yields
\begin{align*}
    \sup_{M\in\gM_0}|\gJ_N(M)-\gJ(M)|
    \le
    C B_\zeta
    \sqrt{
        \frac{
            m\log(1+2R_\gM/\epsilon)+\log(2/\delta)
        }{N}
    }
    +
    2L_\zeta\epsilon .
\end{align*}
If $L_\zeta>0$, choose $\epsilon=B_\zeta/(L_\zeta N)$; then $2L_\zeta\epsilon=2B_\zeta/N$ is absorbed into the displayed constant after replacing the logarithm by $\log\{e+2R_\gM L_\zeta N/B_\zeta\}$. By Eq.~\eqref{eq:objective_deviation_reduction}, this gives the uniform-deviation bound in the theorem.

\textbf{Step 4: Argmin consistency.}
Let
\(
    \Delta_N=\sup_{M\in\gM_0}|\widehat\gJ(M)-\gJ(M)|.
\)
Taking the tail probability $\eta_N^{\rm tail}=N^{-1}$ shows that the empirical-process term is $o_{\sP}(1)$ for fixed ${d}$. Assumption~\ref{ass:tacp_metric_learning}(ii) therefore implies $\Delta_N=o_{\sP}(1)$. Since $\widehat M$ minimizes $\widehat\gJ$ over $\gM_0$ and $M^\star\in\gM_0$,
\begin{align*}
    \gJ(\widehat M)-\gJ(M^\star)
    &\le
    \{\gJ(\widehat M)-\widehat\gJ(\widehat M)\}
    +
    \{\widehat\gJ(M^\star)-\gJ(M^\star)\}\\
    &\le
    2\Delta_N .
\end{align*}
For any $\epsilon>0$, define
\[
    \operatorname{gap}(\epsilon)
    :=
    \inf_{\substack{M\in\gM_0:\ \|M-M^\star\|_F\ge\epsilon}}
    \{\gJ(M)-\gJ(M^\star)\}.
\]
Assumption~\ref{ass:tacp_metric_learning}(iii) gives $\operatorname{gap}(\epsilon)>0$, and hence
\[
    \sP\{\|\widehat M-M^\star\|_F\ge\epsilon\}
    \le
    \sP\{2\Delta_N\ge\operatorname{gap}(\epsilon)\}
    \to0.
\]
Thus $\|\widehat M-M^\star\|_F\overset{\sP}{\to}0$. Since $\widehat M,M^\star\in\gM_0$, $\|\widehat M-M^\star\|_F\le2\sqrt{{d}}\,\overline\lambda$ almost surely; boundedness together with convergence in probability gives $\E\|\widehat M-M^\star\|_F\to0$.
\end{proof}

\subsection{Proof of Theorem~\ref{thm:tacp_assignment_set_error}}
\label{app:proof_tacp_assignment_set_error}
\begin{proof}

\textbf{Step 1: Control set disagreement.}
For the true augmented sample, write $S_i=Y_i-\widehat\mu(X_i)$ for $i\in[n+1]$. For a fixed metric $M$, let $\widehat T_{n+1,M}(S_i)=U_{\widehat\sigma_M^{Y_{n+1}}(i)}$, $i\in[n+1]$ be the empirical Monge map solution of \eqref{eq:final_assignment} with $y=Y_{n+1}$ and $\widehat M$ replaced by $M$. Define
\begin{align*}
    \widehat p_{\triangle,n+1}
    =
    \frac{1}{n+1}
    \sum_{i=1}^{n+1}
    \one\left\{
    \one\{\|T_{M^\star}(S_i)\|\le q_\star\}
    \ne
    \one\{\|\widehat T_{n+1,\widehat M}(S_i)\|
    \le \widehat q_{1-\alpha}\LRs{\{\|U_i\|\}_{i=1}^{n+1}}\}
    \right\}.
\end{align*}
For each $i$, set
\begin{align*}
    e_i
    =
    \left|
    \|\widehat T_{n+1,\widehat M}(S_i)\|
    -
    \|T_{M^\star}(S_i)\|
    \right|.
\end{align*}
Write
\[
    \widehat q_n
    =
    \widehat q_{1-\alpha}\LRs{\{\|U_i\|\}_{i=1}^{n+1}},
    \qquad
    \delta_n=|\widehat q_n-q_\star|.
\]
If the two indicators in the definition of $\widehat p_{\triangle,n+1}$ differ, then
\begin{align}
    \big|\|T_{M^\star}(S_i)\|-q_\star\big|
    \le
    e_i+\delta_n .\nonumber
\end{align}
Therefore, for any fixed $\tau>0$,
\begin{align}
    \label{eq:hat_p_upper}
    \widehat p_{\triangle,n+1}
    &\le 
    \frac{1}{n+1}
    \sum_{i=1}^{n+1}
    \one\{\big|\|T_{M^\star}(S_i)\|-q_\star\big|\le e_i+\delta_n\}\nonumber\\
    &\le
    \frac{1}{n+1}
    \sum_{i=1}^{n+1}
    \one\{\big|\|T_{M^\star}(S_i)\|-q_\star\big|\le \tau\}+
    \frac{1}{n+1}
    \sum_{i=1}^{n+1}
    \one\{e_i>\tau/2\}
    +\one\{\delta_n>\tau/2\}.
\end{align}
\textbf{Step 2: Control the average map error.}
Let
\begin{align}
    A_{n+1}(M)
    =
    \left\{
    \frac{1}{n+1}\sum_{i=1}^{n+1}
    \|\widehat T_{n+1,M}(S_i)-T_M(S_i)\|^2
    \right\}^{1/2}.\nonumber
\end{align}
For a fixed $M\in\gM_0$, set $Z_i=MS_i$. For every permutation $\sigma$,
\begin{align*}
    \sum_{i=1}^{n+1}c_M(S_i,U_{\sigma(i)})
    &=C_{M,S,U}-2\sum_{i=1}^{n+1}(MS_i)^\top U_{\sigma(i)},\\
    \sum_{i=1}^{n+1}\|Z_i-U_{\sigma(i)}\|^2
    &=C'_{M,S,U}-2\sum_{i=1}^{n+1}(MS_i)^\top U_{\sigma(i)},
\end{align*}
where neither $C_{M,S,U}$ nor $C'_{M,S,U}$ depends on $\sigma$. Hence the $c_M$-assignment of $S_{1:n+1}$ to $U_{1:n+1}$ is exactly the Euclidean discrete--discrete optimal transport assignment from $Z_{1:n+1}$ to $U_{1:n+1}$. The same identity at the population level shows that the Euclidean optimal transport map from $M_\#\gP$ to $\gQ$ is
\[
    z\longmapsto T_M(M^{-1}z).
\]
{Assumption~\ref{ass:tacp_assignment_stability}(ii) and $\|M^{-1}\|_{\rm op}\le\underline\lambda^{-1}$ imply that these transformed maps are Lipschitz uniformly over $M\in\gM_0$.}

Let $\widehat\mu_{n+1,M}=(n+1)^{-1}\sum_{i=1}^{n+1}\delta_{Z_i}$, and $\widehat\nu_{n+1}=(n+1)^{-1}\sum_{i=1}^{n+1}\delta_{U_i}$. The selected permutation induces an optimal coupling between $\widehat\mu_{n+1,M}$ and $\widehat\nu_{n+1}$, and its barycentric projection at $Z_i$ is $U_{\widehat\sigma_M^{Y_{n+1}}(i)}$. {The supports of $M_\#\gP$, $M\in\gM_0$, are contained in a common compact set, and the transformed population maps have a common Lipschitz bound. Hence the constant in Corollary~2.3 of \citet{deb2021rates} can be chosen uniformly over $M$, giving}
\begin{align}
\label{eq:deb_fixed_metric_assignment}
    {
    \sup_{M\in\gM_0}\E\{A_{n+1}(M)^2\}
    \le C r_{{d}}^{(n+1,n+1)}
    \longrightarrow0,}
\end{align}
where $r_{{d}}^{(n+1,n+1)}\to0$ is their discrete--discrete empirical transport rate. {Consequently, Jensen's inequality gives}
\begin{align}
\label{eq:deb_fixed_metric_l1}
    {
    \sup_{M\in\gM_0}\E A_{n+1}(M)\longrightarrow0.}
\end{align}

We can now bound the second term of \eqref{eq:hat_p_upper} by
\begin{align*}
    \frac{1}{n+1}\sum_{i=1}^{n+1}
    \one\{e_i>\tau/2\}
    &\overset{(i)}{\le}
    \frac{2}{\tau}
    \frac{1}{n+1}\sum_{i=1}^{n+1}e_i\\
    &\overset{(ii)}{\le}
    \frac{2}{\tau}
    \frac{1}{n+1}\sum_{i=1}^{n+1}
    \|\widehat T_{n+1,\widehat M}(S_i)-T_{M^\star}(S_i)\|\\
    &\overset{(iii)}{\le}
    \frac{2}{\tau}
    \left\{
        A_{n+1}(\widehat M)
        +
        \frac{1}{n+1}\sum_{i=1}^{n+1}
        \|T_{\widehat M}(S_i)-T_{M^\star}(S_i)\|
    \right\},
\end{align*}
where (i) uses $\one\{e_i>\tau/2\}\le 2e_i/\tau$, (ii) is the reverse triangle inequality, and (iii) uses the triangle inequality and Cauchy--Schwarz.

\textbf{Step 3: Bound the expected symmetric difference.}
Define
\[
    \pi_{n,N}
    =
    \sP\{Y_{n+1}\in\gC_\star^{\rm MV}(X_{n+1})\triangle
    {\widehat\gC}^{\rm MV}(X_{n+1})\}.
\]
Write $\omega_\star(t)=\sP\{\big|\|T_{M^\star}(S)\|-q_\star\big|\le t\}$ for a fresh residual $S\sim\gP$. By exchangeability of the augmented calibration-test sample and permutation equivariance of the assignment,
\begin{align*}
    \pi_{n,N}
    &=\E[\widehat p_{\triangle,n+1}]\\
    &\le
    \omega_\star(\tau)+
    \frac{2}{\tau}
    \left[
        \E \LRl{A_{n+1}(\widehat M)}
        +
        \E\left\{
        \frac{1}{n+1}\sum_{i=1}^{n+1}
        \|T_{\widehat M}(S_i)-T_{M^\star}(S_i)\|
        \right\}
    \right]
    +\sP(\delta_n>\tau/2).
\end{align*}
Since $\widehat M$ is trained on the independent dataset $\gD_{\rm est}$, condition on $\widehat M$, Assumption~\ref{ass:tacp_assignment_stability}(iii) yields
\begin{align*}
    \E\left[
    \frac{1}{n+1}\sum_{i=1}^{n+1}
    \|T_{\widehat M}(S_i)-T_{M^\star}(S_i)\|
    \,\middle|\,\widehat M
    \right]
    \le
    L_T\|\widehat M-M^\star\|_F .
\end{align*}
Because $(T_{M^\star})_\#\gP=\gQ$, this modulus is determined by the radial law of the reference distribution. Assumption~\ref{ass:tacp_assignment_stability}(i) implies that this radial law is continuous at $q_\star$, so $\omega_\star(t)\to0$ as $t\to0$.
Then, for any fixed $\tau>0$,
\begin{align}
\label{eq:tacp_l1_symmetric_difference_bound}
    \pi_{n,N}
    \le
    \omega_\star(\tau)
    +
    \frac{2}{\tau}
    \left\{
        \E A_{n+1}(\widehat M)
        +
        L_T\E\|\widehat M-M^\star\|_F
    \right\}
    +\sP(\delta_n>\tau/2).
\end{align}
{Independence of $\widehat M$ from the augmented calibration and reference samples, together with \eqref{eq:deb_fixed_metric_l1}, gives}
\begin{align}
\label{eq:uniform_deb_assignment}
    {
    \E A_{n+1}(\widehat M)
    \le \sup_{M\in\gM_0}\E A_{n+1}(M)
    \longrightarrow0
    \qquad\text{as }n\to\infty,}
\end{align}
{uniformly in $N$. By Theorem~\ref{thm:tacp_uniform_deviation}, $\E\|\widehat M-M^\star\|_F\to0$ as $N\to\infty$.}

\textbf{Step 4: Establish cutoff consistency.}
To control the remaining cutoff term, let 
\[
    \widehat F_{n+1}(t)
    =
    \frac{1}{n+1}\sum_{i=1}^{n+1}\one\{\|U_i\|\le t\}.
\]
By the theorem's i.i.d. reference condition, the variables $\|U_i\|$ are i.i.d. with distribution function $F_Q$. The Glivenko--Cantelli theorem gives
\[
    \sup_{t\in\R}|\widehat F_{n+1}(t)-F_Q(t)|\longrightarrow0
    \qquad\text{almost surely}.
\]
Since $F_Q$ is continuously differentiable near $q_\star$ and $F_Q'(q_\star)>0$, it is strictly increasing there; empirical-quantile consistency therefore gives $\widehat q_n\to q_\star$ almost surely and hence $\delta_n\to0$ in probability. {Combining this fact with \eqref{eq:tacp_l1_symmetric_difference_bound}, \eqref{eq:uniform_deb_assignment}, and Theorem~\ref{thm:tacp_uniform_deviation}, for every fixed $\tau>0$,}
\[
    {
    \limsup_{\min\{N,n\}\to\infty}\pi_{n,N}
    \le \omega_\star(\tau).}
\]
{Finally let $\tau\downarrow0$. The continuity of $F_Q$ at $q_\star$ gives $\omega_\star(t)\to0$, proving the claimed joint-limit result.}
\end{proof}

\subsection{Proof of Theorem~\ref{thm:itacp_validity}}
\label{app:proof_itacp_validity}

\begin{proof}
Let $\gH_N$ be the $\sigma$-algebra generated by $\gD_{\rm est}$, reference draws, and pretrained predictor $\widehat\mu$. 
Conditional on $\gH_N$, {the score function $(x,y)\mapsto\widehat R_N(x,y)$ is fixed.} By the theorem's independence and exchangeability assumptions,
$V_i:={\widehat R_N(X_i,Y_i)}$, $i\in[n+1]$, are exchangeable. Put $k=\lceil(n+1)(1-\alpha)\rceil$, and let $q_{\rm cal}$ be the $k$th order statistic of $\{V_1,\ldots,V_n,\infty\}$ and $q_{\rm aug}$ the $k$th order statistic of $\{V_1,\ldots,V_{n+1}\}$. Since $V_{n+1}\le\infty$, $q_{\rm aug}\le q_{\rm cal}$ for every realization. Hence
\begin{align*}
    \sP\{V_{n+1}\le q_{\rm cal}\mid\gH_N\}
    &\ge\sP\{V_{n+1}\le q_{\rm aug}\mid\gH_N\}\\
    &=\E\left[
        \frac{1}{n+1}\sum_{i=1}^{n+1}\one\{V_i\le q_{\rm aug}\}
        \,\middle|\,\gH_N
    \right]\\
    &\ge\frac{k}{n+1}\ge1-\alpha.
\end{align*}
The first event is exactly
$\{Y_{n+1}\in{\widehat\gC}^{\rm IMV}(X_{n+1})\}$ by
\eqref{eq:itacp_set}. Taking expectations over $\gH_N$ proves the claim.
\end{proof}

\subsection{Proof of Theorem~\ref{thm:localized_metric_consistency}}
\label{app:proof_localized_metric_consistency}

\begin{proof}
\textbf{Step 1: Reduce the local empirical process to a finite metric net.}
Fix $x$. It is enough to control
\[
    A_{N,x}^{\rm obj}
    :=\sup_{M\in\gM_0}
      |\gJ_{N,x}(M)-\gJ_x(M)|,
\]
since the triangle inequality give
\begin{align}
    \sup_{M\in\gM_0}
    |\widehat\gJ_x(M,{\widehat\theta_x(M)})-\gJ_x(M)|
    \le
    \mathcal E_N^{\rm obj}(x)+ {A_{N,x}^{\rm obj}}.\nonumber
\end{align}
Let $m={d}({d}+1)/2$ and $R_{\gM}=\sqrt{{d}}\,\overline\lambda$.  For $\varepsilon>0$, let $\{M_1,\ldots,M_K\}$ be an $\varepsilon$-net of $\gM_0$ in Frobenius norm. Since $\gM_0$ is contained in the radius-$R_{\gM}$ ball in the $m$-dimensional space of symmetric matrices, Lemma~\ref{lem:spd_covering} gives
\[
    K\le
    \left(1+\frac{2R_{\gM}}{\varepsilon}\right)^m.
\]
For every net point, define the local oscillation envelope
\[
    H_j(u)
    :=
    \sup_{M\in\gM_0(M_j,\varepsilon)}
    |\zeta_{M,x}(u)-\zeta_{M_j,x}(u)|.
\]
Assumption~\ref{ass:imv_metric_learning}(i) implies
\[
    0\le H_j(U)\le2B_x,
    \qquad
    \E_U H_j(U)\le L_x\varepsilon.
\]

For each $M\in\gM_0$, choose $j(M)$ such that
$\|M-M_{j(M)}\|_F\le\varepsilon$. Then
\begin{align*}
    |\gJ_{N,x}(M)-\gJ_x(M)|
    &\le
    |\gJ_{N,x}(M_{j(M)})-\gJ_x(M_{j(M)})|\\
    &\quad+|\gJ_{N,x}(M)-\gJ_{N,x}(M_{j(M)})|
      +|\gJ_x(M)-\gJ_x(M_{j(M)})|\\
    &\le
    |\gJ_{N,x}(M_{j(M)})-\gJ_x(M_{j(M)})|\\
    &\quad+
    \frac1N\sum_{i=1}^N H_{j(M)}(U_i)
    +\E_U H_{j(M)}(U).
\end{align*}
Consequently,
\begin{align}
\label{eq:local_empirical_net_reduction}
    A_{N,x}^{\rm obj}
    &\le
    \max_{1\le j\le K}|\gJ_{N,x}(M_j)-\gJ_x(M_j)|
    +\max_{1\le j\le K}\frac1N\sum_{i=1}^N H_j(U_i)
    +L_x\varepsilon.
\end{align}

\textbf{Step 2: Apply concentration on the net.}
Because $\zeta_{M_j,x}(U)\in[-B_x,B_x]$ and $H_j(U)\in[0,2B_x]$, Hoeffding's inequality and a union bound give, for every $t>0$,
\begin{align*}
    \sP\left\{
        \max_{1\le j\le K}
        |\gJ_{N,x}(M_j)-\gJ_x(M_j)|>t
    \right\}
    &\le
    2K\exp\left(-\frac{Nt^2}{2B_x^2}\right),\\
    \sP\left\{
        \max_{1\le j\le K}
        \left\{\frac1N\sum_{i=1}^N H_j(U_i)-\E_U H_j(U)\right\}>t
    \right\}
    &\le
    K\exp\left(-\frac{Nt^2}{2B_x^2}\right).
\end{align*}
Taking
\[
    t
    =B_x
    \sqrt{\frac{2\{\log(3K)+\log(1/\delta)\}}{N}}
\]
and applying another union bound in
\eqref{eq:local_empirical_net_reduction} shows that, with probability at
least $1-\delta$,
\begin{align}
\label{eq:local_empirical_process_bound}
    A_{N,x}^{\rm obj}
    &\le
    C B_x
    \sqrt{\frac{
        m\log(1+2R_{\gM}/\varepsilon)+\log(2/\delta)}{N}}
    +2L_x\varepsilon
\end{align}
for a universal constant $C$.

\textbf{Step 3: Choose the net resolution.}
Suppose first that $L_x>0$ and take $\varepsilon_N=B_x/(L_xN)$. Then $2L_x\varepsilon_N=2B_x/N$, $m\le {d}^2$, and $R_{\gM}=\sqrt{{d}}\,\overline\lambda$. Enlarging $C$ in \eqref{eq:local_empirical_process_bound} therefore gives
\[
    A_{N,x}^{\rm obj}
    \le C B_x
    \left[
        \frac{{d}^2}{N}
        \log\left\{e+\frac{2\sqrt{{d}}\,\overline\lambda L_xN}{B_x}\right\}
        +\frac{\log(2/\delta)}{N}
    \right]^{1/2}.
\]

\textbf{Step 4: Conclude pointwise metric consistency.}
{Taking the failure probability $N^{-1}$ for $N\ge2$ in the preceding empirical-process bound gives a deterministic right-hand side converging to zero for fixed ${d}$.} Therefore
\[
    A_{N,x}^{\rm obj}=o_{\sP}(1).
\]
Assumption~\ref{ass:imv_metric_learning}(ii) now gives
\[
    D_{N,x}
    :=\sup_{M\in\gM_0}
      |\widehat\gJ_x\{M,\widehat\theta_x(M)\}-\gJ_x(M)|
    =o_{\sP}(1).
\]
Finally, empirical optimality of $\widehat M_x$ and
$M_x^\star\in\gM_0$ imply
\begin{align*}
    \gJ_x(\widehat M_x)
    &\le
    \widehat\gJ_x\{\widehat M_x,
        \widehat\theta_x(\widehat M_x)\}+D_{N,x}\\
    &\le
    \widehat\gJ_x\{M_x^\star,
        \widehat\theta_x(M_x^\star)\}+D_{N,x}\\
    &\le
    \gJ_x(M_x^\star)+2D_{N,x}.
\end{align*}
For every $\varepsilon>0$, define
\[
    \operatorname{gap}_x(\varepsilon)
    :=
    \inf_{\substack{M\in\gM_0:\ \|M-M_x^\star\|_F\ge\varepsilon}}
    \{\gJ_x(M)-\gJ_x(M_x^\star)\}.
\]
Assumption~\ref{ass:imv_metric_learning}(iii) gives $\operatorname{gap}_x(\varepsilon)>0$, and therefore
\begin{align*}
    \{\|\widehat M_x-M_x^\star\|_F\ge\varepsilon\}
    &\subseteq
    \{\operatorname{gap}_x(\varepsilon)\le2D_{N,x}\}.
\end{align*}
Consequently,
\begin{align*}
    \sP\{\|\widehat M_x-M_x^\star\|_F\ge\varepsilon\}
    &\le
    \sP\{2D_{N,x}\ge\operatorname{gap}_x(\varepsilon)\}
    \longrightarrow0.
\end{align*}
\end{proof}

\subsection{Proof of Theorem~\ref{thm:itacp_conditional_consistency}}
\label{app:proof_itacp_conditional_consistency}

\begin{proof}
\textbf{Step 1: Control the average radius error.}
Let
\[
    R_i^\star=R^\star(X_i,Y_i),
    \qquad
    \widehat R_i=\widehat R_N(X_i,Y_i),
    \qquad i\in[n].
\]
Define the average radius error
\[
    A_{n,N}
    :=\frac1n\sum_{i=1}^n|\widehat R_i-R_i^\star|.
\]
By Assumption~\ref{ass:imv_radius_nuisance}, for almost every $x$,
$\mathcal E_N^{\rm rad}(x)\to0$ in probability and
$0\le\mathcal E_N^{\rm rad}(x)\le B_{\rm rad}(x)$. Bounded convergence in probability, followed by dominated convergence over $P_X$, therefore gives
\[
    \E\left\{
        \int \mathcal E_N^{\rm rad}(x)\,dP_X(x)
    \right\}
    \longrightarrow0.
\]
Conditional on $\gH_N$, the calibration observations are i.i.d. and
\begin{align*}
    \E(A_{n,N}\mid\gH_N)
    &=\frac1n\sum_{i=1}^n
      \E\left(
        |\widehat R_N(X_i,Y_i)-R^\star(X_i,Y_i)|
        \mid\gH_N
      \right)\\
    &=\int \mathcal E_N^{\rm rad}(x)\,dP_X(x).
\end{align*}
Consequently, for every $\varepsilon>0$, Markov's inequality gives
\[
    \sP(A_{n,N}>\varepsilon)
    \le
    \frac{1}{\varepsilon}
    \E\left\{
        \int \mathcal E_N^{\rm rad}(x)\,dP_X(x)
    \right\}
    \longrightarrow0,
\]
and hence $A_{n,N}=o_{\sP}(1)$.

\textbf{Step 2: Compare the fitted and oracle empirical CDFs.}
For $t\in\R$, define the two empirical distribution functions
\[
    F_n^\star(t)
    :=\frac1n\sum_{i=1}^n\one\{R_i^\star\le t\},
    \qquad
    \widehat F_{n,N}(t)
    :=\frac1n\sum_{i=1}^n\one\{\widehat R_i\le t\}.
\]
For $\eta>0$, also define the fraction of radius errors exceeding $\eta$:
\[
    D_{n,N}(\eta)
    :=\frac1n\sum_{i=1}^n
      \one\{|\widehat R_i-R_i^\star|>\eta\}.
\]
Since $\eta\one\{|\widehat R_i-R_i^\star|>\eta\}\le|\widehat R_i-R_i^\star|$, we have
\[
    D_{n,N}(\eta)\le\frac{A_{n,N}}{\eta}=o_{\sP}(1)
\]
for every fixed $\eta>0$.  Moreover, pointwise in the calibration sample,
\begin{align*}
    &\one\{R_i^\star\le t-\eta\}
    -\one\{|\widehat R_i-R_i^\star|>\eta\}
    \le\one\{\widehat R_i\le t\},\\
    &\one\{\widehat R_i\le t\}
    \le
    \one\{R_i^\star\le t+\eta\}
    +\one\{|\widehat R_i-R_i^\star|>\eta\}.
\end{align*}
After averaging over $i$,
\begin{align*}
    F_n^\star(t-\eta)-D_{n,N}(\eta)
    \le\widehat F_{n,N}(t)
    \le F_n^\star(t+\eta)+D_{n,N}(\eta).
\end{align*}

\textbf{Step 3: Control the oracle empirical CDF.}
For $P_X$-almost every $X_i$, the oracle transport satisfies $T_{M_{X_i}^\star,X_i}\{s(X_i,Y_i)\}\mid X_i\sim\gQ$. Consequently,
\begin{align*}
    \sP(R_i^\star\le t\mid X_i,\gH_N)
    &=\sP_{U\sim\gQ}(\|U\|\le t)
    =F_Q(t),\\
    \sP(R_i^\star\le t\mid\gH_N)
    &=F_Q(t).
\end{align*}
\begin{sloppypar}
Thus, conditionally on $\gH_N$, $R_1^\star,\ldots,R_n^\star$ are i.i.d. with distribution function $F_Q$. The Dvoretzky--\allowbreak Kiefer--\allowbreak Wolfowitz inequality gives, for every $z>0$,
\end{sloppypar}
\[
    \sP\left\{
        \sup_{t\in\R}|F_n^\star(t)-F_Q(t)|>z
        \;\middle|\;\gH_N
    \right\}
    \le2e^{-2nz^2}.
\]
Therefore,
\[
    \sup_{t\in\R}|F_n^\star(t)-F_Q(t)|=o_{\sP}(1).
\]

\textbf{Step 4: Prove consistency of the calibration cutoff.}
Let $ k_n:=\left\lceil(n+1)(1-\alpha)\right\rceil$ with $\tau_n:=\frac{k_n}{n}$. For all sufficiently large $n$, $k_n\le n$, and $\tau_n\longrightarrow1-\alpha$.
Let ${\widehat q_{n,N}}$ denote the empirical cutoff in \eqref{eq:itacp_set}. By the $+\infty$ convention,
\begin{align*}
    {\widehat q_{n,N}}
    &=\inf\{t:\widehat F_{n,N}(t)\ge\tau_n\}.
\end{align*}
Fix $\varepsilon>0$ and put $\eta=\varepsilon/2$.  Strict increase of $F_Q$ at $q_\star$ gives
\begin{align*}
    g_-(\varepsilon)
    &:=(1-\alpha)-F_Q(q_\star-\varepsilon/2)>0,\\
    g_+(\varepsilon)
    &:=F_Q(q_\star+\varepsilon/2)-(1-\alpha)>0.
\end{align*}
The empirical-CDF sandwich at $t=q_\star-\varepsilon$ gives
\begin{align*}
    \widehat F_{n,N}(q_\star-\varepsilon)
    &\le
    F_n^\star(q_\star-\varepsilon/2)
    +D_{n,N}(\varepsilon/2)\\
    &\le
    F_Q(q_\star-\varepsilon/2)
    +\sup_t|F_n^\star(t)-F_Q(t)|
    +D_{n,N}(\varepsilon/2)\\
    &=1-\alpha-g_-(\varepsilon)+o_{\sP}(1)
    <\tau_n
\end{align*}
with probability tending to one.  Similarly,
\begin{align*}
    \widehat F_{n,N}(q_\star+\varepsilon)
    &\ge
    F_n^\star(q_\star+\varepsilon/2)
    -D_{n,N}(\varepsilon/2)\\
    &\ge
    F_Q(q_\star+\varepsilon/2)
    -\sup_t|F_n^\star(t)-F_Q(t)|
    -D_{n,N}(\varepsilon/2)\\
    &=1-\alpha+g_+(\varepsilon)+o_{\sP}(1)
    >\tau_n
\end{align*}
with probability tending to one.  Hence
\begin{align*}
    \sP\left\{
        q_\star-\varepsilon
        <{\widehat q_{n,N}}
        \le q_\star+\varepsilon
    \right\}
    \longrightarrow1,
\end{align*}
which proves
\[
    {\widehat q_{n,N}}
    \overset{\sP}{\longrightarrow}q_\star.
\]

\textbf{Step 5: Prove conditional set convergence.}
Now fix an interior $x$ satisfying the pointwise conditions and condition on $X_{n+1}=x$. The test response $Y_{n+1}$ is independent of the calibration sample and $\gH_N$ under this conditioning. Abbreviate
\[
    R_x^\star:=R^\star(x,Y_{n+1}),
    \qquad
    \widehat R_{N,x}:=\widehat R_N(x,Y_{n+1}).
\]
For every $\varepsilon>0$, the following pointwise indicator inequality holds:
\begin{align*}
    &\left|
        \one\{\widehat R_{N,x}\le{\widehat q_{n,N}}\}
        -\one\{R_x^\star\le q_\star\}
    \right|\\
    &\quad\le
    \one\{|R_x^\star-q_\star|\le2\varepsilon\}
    +\one\{|\widehat R_{N,x}-R_x^\star|>\varepsilon\}
    +\one\{|{\widehat q_{n,N}}-q_\star|>\varepsilon\}.
\end{align*}
Indeed, on the complement of the three events, if $R_x^\star<q_\star-2\varepsilon$, then
\[
    \widehat R_{N,x}
    \le R_x^\star+\varepsilon
    <q_\star-\varepsilon
    \le{\widehat q_{n,N}},
\]
whereas if $R_x^\star>q_\star+2\varepsilon$, then
\[
    \widehat R_{N,x}
    \ge R_x^\star-\varepsilon
    >q_\star+\varepsilon
    \ge{\widehat q_{n,N}}.
\]
Taking expectations in the indicator inequality conditional on $X=x$ gives
\begin{align*}
    &\sP\left\{
        Y_{n+1}\in\gC_\star^{\rm IMV}(x)
        \mathbin\triangle
        {\widehat\gC}^{\rm IMV}(x)
        \;\middle|\;X_{n+1}=x
    \right\}\\
    &\quad\le
    F_Q(q_\star+2\varepsilon)
    -F_Q((q_\star-2\varepsilon)-)\\
    &\qquad+
    \E\left[
        \min\left\{1,\frac{\mathcal E_N^{\rm rad}(x)}{\varepsilon}\right\}
    \right]
    +\sP(|{\widehat q_{n,N}}-q_\star|>\varepsilon).
\end{align*}
Here conditional Markov's inequality gives
\[
    \sP\left(
        |\widehat R_{N,x}-R_x^\star|>\varepsilon
        \mid\gH_N,X_{n+1}=x
    \right)
    \le\min\left\{1,\frac{\mathcal E_N^{\rm rad}(x)}{\varepsilon}\right\}.
\]
Since $\mathcal E_N^{\rm rad}(x)=o_{\sP}(1)$ by Assumption~\ref{ass:imv_radius_nuisance}, bounded convergence in probability implies
\[
    \E\left[
        \min\left\{1,\frac{\mathcal E_N^{\rm rad}(x)}{\varepsilon}\right\}
    \right]\longrightarrow0.
\]
The quantile term also tends to zero.  Thus
\begin{align*}
    \limsup_{N,n\to\infty}
    &\sP\left\{
        Y_{n+1}\in\gC_\star^{\rm IMV}(x)
        \mathbin\triangle
        {\widehat\gC}^{\rm IMV}(x)
        \;\middle|\;X_{n+1}=x
    \right\}\\
    &\le
    F_Q(q_\star+2\varepsilon)
    -F_Q((q_\star-2\varepsilon)-).
\end{align*}
Letting $\varepsilon\downarrow0$ and using continuity of $F_Q$ at $q_\star$ proves the symmetric-difference convergence.

\textbf{Step 6: Deduce asymptotic conditional coverage.}
Finally,
\[
    \sP\{Y_{n+1}\in\gC_\star^{\rm IMV}(x)\mid X_{n+1}=x\}
    =\sP(R_x^\star\le q_\star\mid X_{n+1}=x)
    =F_Q(q_\star)=1-\alpha.
\]
For any two events $A$ and $B$,
\[
    |\sP(A)-\sP(B)|\le\sP(A\mathbin\triangle B).
\]
Applying this inequality conditionally on $X_{n+1}=x$ gives
\begin{align*}
    &\left|
        \sP\{Y_{n+1}\in{\widehat\gC}^{\rm IMV}(x)\mid X_{n+1}=x\}
        -(1-\alpha)
    \right|\\
    &\quad\le
    \sP\left\{
        Y_{n+1}\in\gC_\star^{\rm IMV}(x)
        \mathbin\triangle
        {\widehat\gC}^{\rm IMV}(x)
        \;\middle|\;X_{n+1}=x
    \right\}
    \longrightarrow0.
\end{align*}
\end{proof}

\subsection{Proof of Proposition~\ref{prop:lotcp_polyhedra}}
\label{app:proof_lotcp_polyhedra}

\begin{proof}
The feasible set $\Pi_{\tilde x}$ is a bounded transportation polytope, and its objective is linear in $\pi$. Hence an optimal plan can be selected from the finite collection of basic feasible plans indexed by $\mathfrak B_{\tilde x}$. If the trial score is $s$, the objective value at the plan $\pi^b$ is
\begin{align*}
    L_{b,M}(s)
    &=
    \sum_{i=1}^{n}\sum_{j=1}^{n+1}
        \pi^b_{ij}c_M(S_i,U_j)
    +\sum_{j=1}^{n+1}\pi^b_{n+1,j}c_M(s,U_j)\\
    &=p_{n+1}s^\top Ms-2s^\top M m_b+\kappa_{b,M},
\end{align*}
where the last equality uses $\sum_j\pi^b_{n+1,j}=p_{n+1}$. Consequently, $L_{b,M}(s)\le L_{b',M}(s)$ if and only if
\begin{align*}
    2s^\top M(m_{b'}-m_b)
    \le \kappa_{b',M}-\kappa_{b,M}.
\end{align*}
The common quadratic term has canceled. Intersecting these halfspaces over $b'\in\mathfrak B_{\tilde x}$ gives precisely $\gR_{b,M}(\tilde x)$, so its closure is a convex polyhedron.

Within the relative interior of this cell, the selected plan is $\pi^b$. Therefore $J_b(V)$ and the randomized acceptance decision are fixed throughout the cell. Likewise, all barycenters $\overline U_i^b$, their weighted quantile $q_b$, and the barycentric acceptance decision are fixed throughout the cell. Taking the union over the accepting bases gives the two displayed representations. On a hyperplane where several bases are optimal, a deterministic tie-breaking rule assigns the boundary to the selected cell. Translation by $\widehat\mu(X_{n+1})$ then gives the response-space representation.
\end{proof}

\subsection{Proof of Theorem~\ref{thm:weighted_lotcp_validity}}
\label{app:proof_weighted_lotcp_validity}

\begin{proof}
Set $y=Y_{n+1}$, $Z_i=(X_i,Y_i)$, and let ${\gD_{\rm aug}}=\{Z_1,\ldots,Z_{n+1}\}$ be the unordered augmented multiset.
Choose an arbitrary ordering of ${\gD_{\rm aug}}$, and let $I\in[n+1]$ denote the index of the occurrence in this ordering corresponding to the original test observation $Z_{n+1}$. Let $\mathcal G=\sigma\!({\gD_{\rm aug}},\widetilde X_{n+1},U_{1:n+1})$ be the $\sigma$-algebra generated by the data. Conditional on $\widetilde X_{n+1}=\tilde x$, exchangeability and Bayes' rule give
\begin{align*}
    \sP(I=i\mid\mathcal G)
    &=\frac{H_h(\tilde x\mid x_i)}
            {\sum_{k=1}^{n+1}H_h(\tilde x\mid x_k)}
      =p_i(\tilde x).
\end{align*}
By permutation equivariance, the fitted scores and plan are fixed in this ordering. Write $p_i=p_i(\tilde x)$ and $\widehat\pi_{ij}$ for the plan at the true response which satisfies
\[
    \sum_j\widehat\pi_{ij}=p_i,
    \qquad
    \sum_i\widehat\pi_{ij}=\frac{1}{n+1}.
\]

For the randomized rule, only rows with $p_i>0$ can be selected. The common uniform draw used for inverse-cdf sampling yields
\begin{align*}
    \sP(J=j\mid I=i,\mathcal G)
    =\frac{\widehat\pi_{ij}}{p_i},\qquad p_i>0.
\end{align*}
Hence,
\begin{align*}
    \sP(J=j\mid\mathcal G)
    =\sum_{i:p_i>0}p_i\frac{\widehat\pi_{ij}}{p_i}
     =\sum_i\widehat\pi_{ij}
     =\frac{1}{n+1}.
\end{align*}
With $\widehat q_U=\widehat q_{1-\alpha}
\LRs{\{\|U_j\|\}_{j=1}^{n+1}}$,
\begin{align*}
    \sP\{\|U_J\|\le\widehat q_U\mid\mathcal G\}
    &=\sum_{j=1}^{n+1}\sP(J=j\mid\mathcal G)
       \one\{\|U_j\|\le\widehat q_U\}\\
    &=\frac{1}{n+1}\sum_{j=1}^{n+1}
       \one\{\|U_j\|\le\widehat q_U\}
     \ge1-\alpha.
\end{align*}

For the barycentric rule, let $R_i=R_{\tilde x,i}^{Y_{n+1}}$ for $p_i>0$, and let $\widehat q=\widehat q_{1-\alpha}^{\,Y_{n+1}}(\tilde x)$. Then
\begin{align*}
    \sP\{R_I\le\widehat q\mid\mathcal G\}
    &=\sum_{i:p_i>0}\sP(I=i\mid\mathcal G)
       \one\{R_i\le\widehat q\}\\
    &=\sum_{i:p_i>0}p_i\one\{R_i\le\widehat q\}
     \ge1-\alpha.
\end{align*}
Taking expectations over $\mathcal G$ proves both
claims.
\end{proof}

\subsection{Proof of Theorem~\ref{thm:weighted_itacp_validity}}
\label{app:proof_weighted_itacp_validity}

\begin{proof}

$\widehat q_U=\widehat q_{1-\alpha}
\LRs{\{\|U_j\|\}_{j=1}^{n+1}}$.
Since ${\gD_{\rm est}}$ is independent of the augmented sample,
conditioning additionally on ${\gD_{\rm est}}$ does not change the
conditional distribution of the test index. Hence
$$
    \sP(I=i\mid\mathcal G,{\gD_{\rm est}})
    =
    \sP(I=i\mid\mathcal G)
    =
    p_i(\tilde x).
$$
Moreover, conditional on $(\mathcal G,{\gD_{\rm est}})$, $\widehat M(\tilde x)$ and the corresponding transport plan $\widehat\pi$ are fixed. By permutation equivariance of the construction, the arbitrary ordering of the augmented observations is immaterial, and the transport plan satisfies
\begin{align*}
    \sum_j\widehat\pi_{ij}&=p_i(\tilde x),
    \qquad
    \sum_i\widehat\pi_{ij}=\frac{1}{n+1}.
\end{align*}
For the randomized metric rule,
\begin{align*}
    \sP(J^{\widehat M}=j\mid\mathcal G,{\gD_{\rm est}})
    &=\sum_{i:p_i>0}p_i\frac{\widehat\pi_{ij}}{p_i}
      =\frac{1}{n+1},\\
    \sP\{\|U_{J^{\widehat M}}\|\le\widehat q_U
       \mid\mathcal G,{\gD_{\rm est}}\}
    &=\frac{1}{n+1}\sum_{j=1}^{n+1}
       \one\{\|U_j\|\le\widehat q_U\}
      \ge1-\alpha.
\end{align*}
For the barycentric metric rule, write
$R_i^{\widehat M}=R_{\tilde x,i}^{Y_{n+1},\widehat M}$ and
$\widehat q^{\widehat M}
=\widehat q_{1-\alpha}^{\,Y_{n+1},\widehat M}(\tilde x)$. Then
\begin{align*}
    \sP\{R_I^{\widehat M}\le\widehat q^{\widehat M}
       \mid\mathcal G,{\gD_{\rm est}}\}
    &=\sum_{i:p_i>0}p_i\one\{R_i^{\widehat M}
                       \le\widehat q^{\widehat M}\}
      \ge1-\alpha.
\end{align*}
Taking expectations over $(\mathcal G,{\gD_{\rm est}})$ proves both
claims.
\end{proof}

\end{document}